
\documentclass[12pt, draftclsnofoot, onecolumn]{IEEEtran}

%

\usepackage[T1]{fontenc}
\usepackage{cite, color}


%

%

%
\ifCLASSINFOpdf
\else
\fi
%
%


\usepackage[colorlinks=true,linkcolor=black, citecolor=blue, urlcolor=blue]{hyperref}

\usepackage[cmex10]{amsmath}
\usepackage{subfig}
\usepackage{amsmath}
\usepackage{graphicx}
\usepackage{amsthm}
\usepackage[bb=boondox]{mathalfa}
\interdisplaylinepenalty=2500
\usepackage[T1]{fontenc}
\usepackage[cmintegrals]{newtxmath}
\usepackage{cite}
\usepackage{indentfirst}
\usepackage{float}
\usepackage[utf8]{inputenc}
\usepackage{amsmath,mathtools}
\usepackage{breqn}
\usepackage{epsfig}
\usepackage[export]{adjustbox}

\usepackage{relsize}
\usepackage{array}
\usepackage{stackrel}
\usepackage{enumerate}
\graphicspath{{Figures/}}
\usepackage{epstopdf}
\usepackage{algorithm}
\usepackage{algpseudocode}

\newtheorem{defi}{Definition}

\newtheorem{theorem}{Theorem}
\newtheorem{lemma}{Lemma}

\newtheorem{col}{Corollary}

\newcommand{\Ic}{\mathcal{I}}
\newcommand{\Jc}{\mathcal{J}}

\newcommand{\Xc}{\mathcal{X}}
\newcommand{\Yc}{\mathcal{Y}}
\newcommand{\Bc}{\mathcal{B}}
\newcommand{\Hc}{\mathcal{H}}
\newcommand{\Ptheta}{\pmb{\theta}}
\newcommand{\Pb}{\mathbb{P}}

\newcommand{\Sc}{\mathcal{S}}

\newcommand{\Mc}{\mathcal{M}}

\newcommand{\Nc}{\mathcal{N}}

\DeclareMathOperator*{\argmax}{\arg\!\max}


\hyphenation{op-tical net-works semi-conduc-tor}
\usepackage{setspace}
\bibliographystyle{./IEEEtran}
\begin{document}

	\title{Model Selection Through Model Sorting}

	\author{Mohammad Ali~Hajiani and
		Babak~Seyfe,~\IEEEmembership{Senior~Member,~IEEE}
		\thanks{The authors are with the Information Theoretic Learning Systems Laboratory (ITLSL), Department of Electrical Engineering, Shahed University, Tehran, Iran (email: { mohammadali.hajiani, Seyfe}@ shahed.ac.ir).}
	}
	\date{}


	\maketitle
	
	\begin{abstract}
	We propose a novel approach to select the best model of the data. Based on the exclusive properties of the nested models, we find the most parsimonious model containing the risk minimizer predictor. We prove the existence of probable approximately correct (PAC) bounds on the difference of the minimum empirical risk of two successive nested models, called successive empirical excess risk (SEER). Based on these bounds, we propose a model order selection method called nested empirical risk (NER). By the sorted NER (S-NER) method to sort the models intelligently, the minimum risk decreases. We construct a test that predicts whether expanding the model decreases the minimum risk or not. With a high probability, the NER and S-NER choose the true model order and the most parsimonious model containing the risk minimizer predictor, respectively. We use S-NER model selection in the linear regression and show that, the S-NER method without any prior information can outperform the accuracy of feature sorting algorithms like orthogonal matching pursuit (OMP) that aided with prior knowledge of the true model order. Also, in the UCR data set, the NER method reduces the complexity of the classification of UCR datasets dramatically, with a negligible loss of accuracy. 
	\end{abstract}
	
	\begin{IEEEkeywords}
			Nested models,~PAC bound,~model sorting,~empirical risk,~linear regression.
	\end{IEEEkeywords}

	\IEEEpeerreviewmaketitle
	
	\allowdisplaybreaks

\section{Introduction}\label{Sec1}

\IEEEPARstart{T}{he} development of data storage hardware, the manufacture of precision instruments, the increasing accuracy of measuring devices, and the appearance of the internet and social networks have led us to big data. With the advancement of technology, the power of data analysis increases, and analysts are able to extract useful information from the data \cite{ding2018model}. They usually use statistical methods and machine learning algorithms for decision-making, estimation, and prediction based on the observed data. The fundamental step of all these methods is selecting the best model to fit the data  \cite{hastie2009elements}. The procedure of selecting a statistical model from a class of models based on the observed data is called model selection. The model selection is used in many applications such as engineering \cite{stoica2004model,mohr2023fast}, economy and finance \cite{kim1999has}, epidemiology \cite{greenland1989modeling}, ecology and biology \cite{johnson2004model}, feature selection \cite{owrang2018model,fonti2017feature}, mixture order selection \cite{chen2023estimation,llorente2023marginal}, selecting the number of neurons and layers in neural networks\cite{rivals2000statistical}, and signal enumeration \cite{asadi2018signal,yi2022source}.

Several model selection methods are proposed for different problems. Some model selection methods are designed for a specific problem. For example, extended Fisher information criteria (EFIC) \cite{owrang2018model} and extended Bayesian information criteria-robust (EBICR) \cite{gohain2023robust} are presented for linear regression model selection. However, some model selection methods are designed to be applied to a vast range of problems, e.g., minimum description length (MDL) \cite{rissanen1978modeling}, structural risk minimization (SRM) \cite{vapnik1998statistical}, Akaike information criteria (AIC)\cite{akaike1974new}, and Bayesian information criteria (BIC) \cite{schwarz1978estimating}.

As an example, in feature selection problems, we deal with a large number of features or co-variates, and we want to select the most parsimonious set of valuable features. For this purpose, along with model selection methods, feature sorting algorithms are used to sort the valuable features \cite{kallummil2018signal,owrang2018model,gohain2023robust}. Among these methods, orthogonal matching pursuit (OMP) \cite{weisberg2005applied} and least angle regression (LARS) \cite{efron2004least} are the widespread feature sorting algorithms. In the feature selection problem, the models are formed based on the sorted features. Then, the model order is determined using methods such as EFIC  \cite{owrang2018model} and  EBICR \cite{gohain2023robust}.  For instance, in \cite{owrang2018model},  the model order selection method is proposed for the high-dimensional linear regression. They use the LARS algorithm to sort the features. The order of the model is selected using an information criterion based on Fisher's information. Also, in \cite{gohain2023robust}, a robust information criterion called EBICR was proposed for the model selection in the sparse high-dimensional linear regression problem. They use the OMP algorithm to sort the features.   In  \cite{kallummil2018signal}, the residual ratio test provides the support recovery of a sparse high-dimensional linear regression model based on the sequence of support features estimated by the OMP algorithm.  They propose a stopping condition for the OMP algorithm by thresholding the ratio of residuals in this algorithm.  As mentioned above, it seems that feature sorting is a key step in many model selection procedures, such as EFIC and EBICR methods. Without this sorting, these methods are not capable of estimating the true order of the model with the casual computational complexity.  

Choosing the model containing the minimum risk is the target of model selection. Moreover, finding the simplest model with the minimum risk is a more desirable goal for model selection.

In this paper, we use the fact that extending the model will be beneficial if it decreases the minimum risk.  However, the risk is often unavailable since the data distribution is unknown. Therefore, we cannot find the predictor with minimum risk and determine whether the minimum risk is changed or not through the model expansion. Alternatively, we can use the empirical risk based on the observed data. The empirical risk minimization (ERM) selects the predictor with the minimum empirical risk called the ERM predictor  \cite{vapnik1999nature}. 

To use the intelligently model expansion method, first, we introduce a specific kind of model class, known as nested models, that can be applied in model selection. Section \ref{Sec2} gives a precise definition of the nested models. Although the term "nested" is commonly used in the literature   \cite{mohri2018foundations,pesaran1978testing,marcellino2008model}, we use it in a more rigorous sense that is consistent with its meaning in the previous works. We show that the nested models class has unique properties useful for model selection. It seems that these properties are often overlooked in the literature. 

Usually, we deal with the class of nested models. They are used frequently in statistics and machine learning. Regression, auto-regression, and polynomial models are popular examples of nested models used to fit the data for prediction or estimation purposes \cite{pei2022local,moon2021ar}.   Based on the definitions in Section \ref{Sec2}, it can be shown that the models used in \cite{owrang2018model} to select the coefficients of a high-dimensional linear regression are nested. Also, in \cite{li2019matching}, Li et al. review the information criteria as stopping rules of the matching pursuit algorithm for auto-regression and linear regression models. We can show that these model classes used in \cite{li2019matching} are also nested. In addition, in some engineering applications,  the models that are applied in signal enumeration \cite{lu2012generalized,asadi2018signal}, array signal processing \cite{barthelme2021machine}, multi-user detection \cite{wang1998blind}, and classification \cite{chapelle1999model} are nested models. In \cite{ephraim1995signal},  Ephraim et al. proposed an approach for enhancing noisy speech signals. They use a class of models to remove the noise subspace and estimate the clean signal by applying the Karhunen-Loeve transform to the remaining subspace. Also, in \cite{wang1998blind}, using a model class and prior knowledge about the signature waveform and timing of the desired user, a blind subspace-based adaptive multi-user detector is proposed. Based on the model definitions in Section \ref{Sec2}, it can be shown that these model classes are also nested. In the nested model families, the ERM desires to select the biggest model, and most model selection methods use penalties (usually based on information criteria) for the empirical risk to control this desire \cite{owrang2018model,gohain2023robust}. 

Investigations on the statistical behaviors of empirical risk during the nesting modeling help us to know any change in the minimum risk.  We use probably approximately correct (PAC) bounds for this purpose.

Recently, several PAC bounds have been proposed for different learning problems \cite{alquier2021user}. The generalization bounds are the upper bound on the difference of risk and empirical risk for every predictor in the model \cite{mohri2018foundations}. These bounds have an essential role in studying the behavior of a learning problem.  Hence, the generalization bounds in model selection methods are investigated. In most of these methods, an additive penalty controls the desire to select the biggest model. \cite{mohri2018foundations,shawe1998structural,koltchinskii2001rademacher}. For instance, in \cite{shawe1998structural},  Shawe-Taylor et al. propose the SRM method based on the concentration of the predictor's population risk around its empirical risk. Also, in \cite{ koltchinskii2001rademacher}, using another concentration inequality, an additive penalty was used for model selection. 

In this paper, we propose the PAC bounds on a new empirical criterion called successive empirical excess risk (SEER). The SEER calculates the difference between the minimum empirical risk of two successive nested models. The proposed nested empirical risk (NER) model selection method, using a PAC bound on SEER, predicts whether the minimum risk is decreased or not.  In general, we ensure the existence of an upper bound and a lower bound on the difference of empirical risks of ERM predictors in nested models.  Since these proposed bounds are based on empirical criteria, they are more useful in practice than oracle generalization bounds \cite{hajiani2023oracle}.  In addition,  we study the properties of the sequence of minimum risks and minimum empirical risks in the nested models class. We show that these sequences are non-increasing.

Now, some questions will arise. Is there any model selection method without using feature sorting algorithms in general cases? Or is there any way to sort the models besides sorting the features? 

To address these questions, we propose a novel model sorting and selection method called sorting nested empirical risk (S-NER) that sorts the models without using feature sorting algorithms like OMP or LARS. The S-NER method expands the space of the models intelligently. Indeed, we extend the model using useful parameters space, as each model space will be nested in another one. In our method, along with sorting the models, we eliminate the redundant and useless parameters space to achieve the most parsimonious model. The S-NER method terminates when it finds the simplest model and does not explore the entire space of parameters. 

 As stated above,  the S-NER  model selection method can transform every set of models into a nested family of models, and this transformation will be done intelligently. In this paper, we call this type of transformation of models \textit{model sorting}. Then, we propose an approach to transform every set of models into a nested model class, and the S-NER model selection uses this approach and other properties of the nested model class to select the model with the simplest complexity.
 
 The S-NER model selection method using these properties proposed a test on the SEER to determine whether the minimum risk is decreased or not. Consequently, it finds the most parsimonious model containing the risk minimizer predictor. Unlike the other model selection methods, the S-NER does not require a new penalty for the empirical risk.  We show that, with a high probability, the S-NER method chooses the most parsimonious model containing the risk minimizer predictor. As the sample number tends to infinity, with the probability of one, the S-NER  selects the model with zero error.  

In addition, using the nested models' properties, we compute the SEER bounds exclusively for the linear regression problem. We show that the S-NER test using these bounds can be used in the linear regression model selection, and it outperforms the state-of-the-art methods in this problem. We use the S-NER method in the high-dimensional linear regression model selection using synthetic data sets. For the first time, we show that the proposed method in this paper, without any prior information, can outperform the accuracy of OMP and LARS feature sorting algorithms aided with the prior knowledge of true model order. Note that the accuracy of sorting algorithms like OMP or LARS aided with the true model order is the upper bound for the accuracy of other model selection methods like EFIC and EBICR.

Also, the NER model order selection method is used as the feature selection method of the random convolutional kernels (ROCKET)  features in classifying the UCR data sets \cite{dempster2020rocket,dau2019ucr}. We show that using the least square loss prevents selecting features more than the number of observations. Therefore, we use the ridge loss function as a regularized loss in the NER method for feature selection to overcome this limitation. It is shown that this method decreases the complexity (number of features) dramatically, with a negligible loss of accuracy in the UCR classification task. 

Concisely, we present the main contributions of this paper as follows.
\begin{enumerate}
    \item We present a new empirical criterion called SEER (in Section \ref{Sec3}), and using this criterion, we propose a novel model sorting and selection method, called S-NER, to arrange the models without using the feature sorting algorithms (in Section \ref{Section50}).
    \item We present new SEER PAC bounds in the linear regression problem and use these bounds for the S-NER linear regression model selection (in Section \ref{Sec5}).
    \item For the first time in the literature on model selection,  the S-NER model selection method, without any prior information, can outperform the widespread feature sorting algorithms, OMP and LARS, that aided with the true order of models in the high-dimensional linear regression problem (in Section \ref{Section6.1}). Note that the accuracy of aided OMP or aided LARS is the upper bound for the accuracy of other existing model selection approaches, e.g., EFIC, and EBICR, that use these algorithms as feature sorting.
\end{enumerate}

The rest of the paper is structured as follows: Section \ref{Sec2} provides notations and preliminaries of a learning theory in model selection. Section  \ref{Sec3} presents the definitions of the nested, non-nested, and partially-nested models. Also, necessary definitions and corollaries for proposing the SEER bounds and the S-NER method are provided.  Section \ref{Sec4} presents some properties of minimum risks and minimum empirical risks in the class of nested models. We find the bounds on the difference of minimum empirical risks in nested models in this section. In addition, the NER model order selection method is proposed, and the lower bound of the correct model order selection probability is calculated. Moreover, its consistency is investigated. The S-NER model selection method is presented in Section \ref{Section50}. The lower bound on S-NER accuracy and its consistency are investigated in this section.  In Section \ref{Sec505}, we do some customization on the S-NER method for the linear regression model selection. Also, the upper and lower bounds on the SEER in linear regression models are provided. In addition, two applications of the NER model selection in the linear regression problem of the high-dimensional synthetic dataset and the UCR dataset for classification are presented in Section \ref{SEC6}, and the results are compared with other model selection methods. Finally, Section \ref{Sec7} concludes the paper. 

\section{Preliminaries}\label{Sec2}
In this section, we present the concepts and definitions regarding the nested model selection problems. Through this paper, we consider both supervised and unsupervised methods. Let $\textbf{x}_1,\textbf{x}_2,…,\textbf{x}_n$ be $n$ independent and identically distributed (i.i.d) observed data belongs to $\mathcal{X} \subset \mathbb{R}^p$. In the unsupervised scheme, a model set $ \mathcal{M}_k =\{f_{{\pmb{\theta}}_k} (\textbf{x}): {\pmb{\theta}}_k \in {\mathcal{H}}_k \} $ is the $k$-th set of probability density functions (pdf) to describe the observed data or an estimate of their distribution, where ${\pmb{\theta}}_k  (k\in\{1, 2, ..., L\})$ is the function parameter in the $k$-th model, and $ {\mathcal{H}}_k $ is the $k$-th model parameter space. 

In the case of supervised learning, $\textbf{y}_1,\textbf{y}_2,…,\textbf{y}_n$ are the members of the set $\mathcal{Y} \subset \mathbb{R}^q$ corresponding responses to the observed data,
and $\mathcal{M}_k =\{f_{{\pmb{\theta}}_k} \left(\textbf{y}|\textbf{x}\right):{\pmb{\theta}}_k \in {\mathcal{H}}_k \}$ is a set of conditional probability density functions (pdf) to describe the relationship between the observed data and the corresponding response. Also, the collection of $L$ models, $\{\mathcal{M}_k\}_{k=1}^L$, is called the class of models set. 

For every  $ \pmb{\theta} $, let $l(f_{{\pmb{\theta}}}(\textbf{x})):\mathcal{X} \to \mathbb{R}^{+}$ be a non-negative loss function of parameter $ {\pmb{\theta}} $ given observed data $ \textbf{x} $ in unsupervised methods, and $l(f_{{\pmb{\theta}}}(\textbf{x}),\textbf{y}):\mathcal{X}\times \mathcal{Y} \to \mathbb{R}^{+}$ be the loss function of parameter $ {\pmb{\theta}} $ given observed data $ \textbf{x}$ and its corresponding response $\textbf{y} $ in the supervised method. In the following, we use $l(\textbf{z},\pmb{\theta})$ instead of both $l(f_{{\pmb{\theta}}}(\textbf{x}))$ and $l(f_{{\pmb{\theta}}}(\textbf{x}),\textbf{y})$ for ease of notation in unsupervised and supervised learning methods, respectively. Also, let $\textbf{z}=\textbf{x}$ with an unknown distribution $\mathcal{P}$ over $\mathcal{X}$ in the unsupervised method, and $\textbf{z}=(\textbf{x},\textbf{y})$ be a random tuple with an unknown distribution $\mathcal{P}$ over $\mathcal{X}\times\mathcal{Y}$ in supervised methods.

Based on all the machine learning schemes, finding the predictor with the minimum risk is the main target. The risk of a function is the expected loss function and is defined as follows
\begin{equation}\label{riskEQ}
	R(\pmb{\theta})=\mathbb{E}_{\textbf{Z}} \{l(\textbf{z},\pmb{\theta})\}.
\end{equation}
\noindent Note that $  R(\pmb{\theta}) $ is not available since the data distribution is unknown. Then, based on the observed data $S_n=\left( \textbf{z}_1 ,\textbf{z}_2,…,\textbf{z}_n \right)$, we use the empirical risk function as an alternative to the risk function where it will be computed as follows
\begin{equation}\label{empEQ}
	R_{emp} (S_n,\pmb{\theta})=\frac{1}{n} \sum_{i=1}^{n} l(\textbf{z}_i,\pmb{\theta}).
\end{equation}
Let $\pmb{\theta}^*=\arg{\displaystyle\min_{{\pmb{\theta}_k \in \mathcal{H}_k , \ k\in\{1,2,...,L\}}}{R(\pmb{\theta}_k)} }$  be any parameter with the minimum risk in all models of the class.  $f_{\pmb{\theta}^* }$ is the function with the best generalization ability in the model class and is called the global risk minimizer predictor. Also, the parameters with the minimum risk in each model are as follows
\begin{equation}\label{minriskparEQ}
	{\pmb{\theta}}_k^*=\arg\min_{\pmb{\theta}_k \in \mathcal{H}_k} {R(\pmb{\theta}_k)}  , \ k\in\{1,2,...,L\}.
\end{equation}
As the risk of a parameter is unavailable, ${\pmb{\theta}}_k^*$ is also unavailable. So, based on the observed data, the empirical risk minimization (ERM) finds the parameters with the minimum empirical risk by
\begin{equation}\label{minempriskparEQ}
	\hat{\pmb{\theta}}_k = \arg\min_{\pmb{\theta}_k \in \mathcal{H}_k} {R_{emp}(S_n,\pmb{\theta}_k )} , \ k\in\{1,2,...,L\}.
\end{equation}
as an alternative to  ${\pmb{\theta}}_k^*$. Moreover, we say $f(x)=\mathcal{O}(g(x))$ if there are positive real numbers $T$ and a real number $x_0$ such that for every $x\geq x_0$, $f(x)\leq T g(x)$. Also, we say $f(x)=o(g(x))$ if for every constant $\varepsilon>0$, there is a real number $x_0$, such that for every $x\geq x_0$, $f(x)\leq \varepsilon g(x)$ \cite{sipser1996introduction}.

The following section addresses the definitions and concepts required in this paper.  
\section{Concept and Definitions}\label{Sec3}
Now, we focus on model selection by sorting the models based on the concept of nested models. It requires a precise specification of nested models. As a result, the definition of nested models is provided in this section.
\begin{defi}[\textbf{Nested Models}]\label{NestedDEF} 
	$\mathcal{M}_1$ is nested in $\mathcal{M}_2$ if and only if, for all $\pmb{\theta}_1 \in \mathcal{H}_1$, there is a $\pmb{\theta}_2 \in \mathcal{H}_2$ (not necessarily equal  $\pmb{\theta}_1 $), where $f_{\pmb{\theta}_1}=f_{\pmb{\theta}_2}$.
\end{defi}

\noindent Note that we say $f_{\pmb{\theta}_1}=f_{\pmb{\theta}_2}$ if for every $\textbf{z}\in \Xc \times \Yc $ in the supervised scheme (or $\textbf{z}\in \Xc $ for the unsupervised scheme), $f_{\pmb{\theta}_1}(\textbf{z})= f_{\pmb{\theta}_2}(\textbf{z})$. It is worth noticing that in \cite{pesaran1978testing}, the authors use Kullback-Liebler divergence to determine whether two probability distribution functions are equal or not.  

Now, we can define other class types of models. We can define two models that could be non-nested or partially nested. In the following, the definitions of non-nested and partially-nested models are presented.
\begin{defi}[\textbf{Partially Nested Models}]\label{PartiallyNestedDEF} 
	$\mathcal{M}_1$ is partially nested with $\mathcal{M}_2$ if and only if there are $\pmb{\theta}_1 \in \mathcal{H}_1$ and $\pmb{\theta}_2 \in \mathcal{H}_2$ ($\pmb{\theta}_2$ not necessarily equals $\pmb{\theta}_1 $), where $f_{\pmb{\theta}_1}=f_{\pmb{\theta}_2}$.
\end{defi}
\begin{defi}[\textbf{Non-Nested Models}]\label{Non-NestedDEF} 
	$\mathcal{M}_1$ and $\mathcal{M}_2$ are non-nested if and only if for all $\pmb{\theta}_1 \in \mathcal{H}_1$ and $\pmb{\theta}_2 \in \mathcal{H}_2$, $f_{\pmb{\theta}_1}\neq f_{\pmb{\theta}_2}$.
\end{defi}
Based on the nested and partially nested definitions, if $\mathcal{M}_1$ is nested in $\mathcal{M}_2$, $\mathcal{M}_2$  is either nested in $\mathcal{M}_1$ or partially nested with $\mathcal{M}_1$. Similarly, if $\mathcal{M}_1$ is partially nested with $\mathcal{M}_2$, $\mathcal{M}_2$ is either nested in $\mathcal{M}_1$ or partially nested with $\mathcal{M}_1$. However, based on the non-nested definition,  two non-nested models are always mutually non-nested.

Definition \ref{NestedDEF} is only appropriate for two models; however, we frequently deal with a class of more than two models.  Hence, the following definition will be presented for the class of sequentially nested models.

\begin{defi}[\textbf{Sequentially Nested Model Class}]\label{SNDEF}
	A model class $\{\mathcal{M}_k \}_{k=1}^L$ is a sequentially Nested class if for $L>2$, every $i,j \in \{1,2,…,L\},$ where if $i \leq j$, $\mathcal{M}_i$ is nested in $\mathcal{M}_j$.
\end{defi}
Notice that every set of models (consisting of nested, partially nested, and non-nested models) can be combined and arranged to achieve a sequentially nested model family of the models.  We call the procedure of generating nested model class the \textit{nesting process}. 

An approach to arranging and combining every set of models to make a sequentially family of models is discussed in the following corollary. 
\begin{col}[\textbf{Nesting Process}]\label{SNgeneratingCOL}
    Let $\{\bar{\Mc}_k\}_{k=1}^L$ be an arbitrary set of models. Then, let $\mathcal{M}_1=\bar{\Mc}_1$ and for every $k\in\{1,2,...,L-1\}$,  $\mathcal{M}_{k+1}=\mathcal{M}_{k} \cup \bar{\Mc}_{k+1}$.  The model family $\{\Mc_k\}_{k=1}^L$ is  sequentially nested.
\end{col}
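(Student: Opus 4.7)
The plan is to reduce the statement to two elementary facts: (i) each consecutive pair $\mathcal{M}_k, \mathcal{M}_{k+1}$ in the constructed sequence satisfies the pairwise nesting condition of Definition~\ref{NestedDEF}; and (ii) the nesting relation is transitive. Combining these with induction on $j-i$ then yields the sequential nesting of the whole family $\{\mathcal{M}_k\}_{k=1}^L$ as required by Definition~\ref{SNDEF}.

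First I would handle step (i) by unpacking the recursion $\mathcal{M}_{k+1}=\mathcal{M}_{k}\cup\bar{\Mc}_{k+1}$. Interpreting the union at the level of the associated function families, every $f_{\pmb{\theta}_k}$ with $\pmb{\theta}_k\in\mathcal{H}_k$ belongs to $\mathcal{M}_{k+1}$; consequently, there exists $\pmb{\theta}_{k+1}\in\mathcal{H}_{k+1}$ (namely the parameter that indexes this function inside $\mathcal{M}_{k+1}$) for which $f_{\pmb{\theta}_k}=f_{\pmb{\theta}_{k+1}}$ pointwise on the relevant domain. That is precisely the nesting condition, so $\mathcal{M}_k$ is nested in $\mathcal{M}_{k+1}$.

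Next, for step (ii), I would show transitivity directly from Definition~\ref{NestedDEF}: if $\mathcal{M}_a$ is nested in $\mathcal{M}_b$ and $\mathcal{M}_b$ in $\mathcal{M}_c$, then for any $\pmb{\theta}_a\in\mathcal{H}_a$ one can chain the two existential statements, obtaining $\pmb{\theta}_b\in\mathcal{H}_b$ with $f_{\pmb{\theta}_a}=f_{\pmb{\theta}_b}$ and then $\pmb{\theta}_c\in\mathcal{H}_c$ with $f_{\pmb{\theta}_b}=f_{\pmb{\theta}_c}$, so $f_{\pmb{\theta}_a}=f_{\pmb{\theta}_c}$. An induction on $d=j-i$ then completes the argument: the base case $d=1$ is (i), and the inductive step applies (ii) to the hypothesis $\mathcal{M}_i$ nested in $\mathcal{M}_{j-1}$ together with $\mathcal{M}_{j-1}$ nested in $\mathcal{M}_{j}$.

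The main obstacle is not technical but interpretive: the claim hinges on correctly reading $\mathcal{M}_{k}\cup\bar{\Mc}_{k+1}$ as a union of two \emph{parameterized} function families, which requires one to pin down how the combined parameter space $\mathcal{H}_{k+1}$ relates to $\mathcal{H}_k$ and $\bar{\mathcal{H}}_{k+1}$ (for instance, via a disjoint union together with any re-indexing needed to keep parameters distinguishable). Once this convention is fixed, the proof itself is essentially bookkeeping, since nesting in Definition~\ref{NestedDEF} asks only for equality of the underlying functions and not of the parameters themselves, so duplicated functions across $\mathcal{M}_k$ and $\bar{\Mc}_{k+1}$ cause no difficulty.
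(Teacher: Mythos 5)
Your proposal is correct, and it is noticeably more complete than the paper's own argument. The paper disposes of this corollary in a single sentence: because $\mathcal{M}_{k+1}=\mathcal{M}_k\cup\bar{\Mc}_{k+1}$, each $\mathcal{M}_k$ is nested in $\mathcal{M}_{k+1}$ — and it stops there, leaving implicit the extension from consecutive pairs to arbitrary pairs $i\leq j$, which is what Definition~\ref{SNDEF} actually demands. You supply exactly that missing bookkeeping: consecutive nesting from the union, transitivity of the nesting relation by chaining the two existential quantifiers, and induction on $j-i$. That is a valid and self-contained route (you might add the trivial case $i=j$, handled by taking $\pmb{\theta}_2=\pmb{\theta}_1$ in Definition~\ref{NestedDEF}, since Definition~\ref{SNDEF} includes it). Worth noting, though, is an even shorter path that exploits the specific construction rather than transitivity in the abstract: the recursion gives the monotone chain of sets $\mathcal{M}_1\subseteq\mathcal{M}_2\subseteq\cdots\subseteq\mathcal{M}_L$, so for any $i\leq j$ one has $\mathcal{M}_i\subseteq\mathcal{M}_j$ directly, and set containment immediately yields nesting because every $f_{\pmb{\theta}_i}\in\mathcal{M}_i$ is itself an element of $\mathcal{M}_j$ and hence is indexed by some parameter of $\mathcal{M}_j$. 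Your transitivity lemma is the more general tool — it applies to any nested family, not just ones built by unions — while the containment argument buys brevity for this particular construction. Your closing interpretive remark is also apt: the paper treats a model simply as a set of functions, so the union is a plain set union and the "combined parameter space" issue you flag is resolved exactly as you suggest, by re-indexing without requiring parameter equality.
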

\begin{proof}
    It can be seen that because of using the union operator for every $ k\in\{1,2,...,L-1\}$, $\mathcal{M}_k$ is nested in $\mathcal{M}_{k+1}$.
\end{proof}
 There are several ways to generate a sequentially nested model family with different arrangements. However, a question that emerges is which one of these arrangements is more useful, and does every model arrangement lead to obtaining the most parsimonious model containing the risk minimizer predictor? 
 
In the model selection problem,  a predictor exists that minimizes the risk, and we want to find the simplest model that contains this predictor. We show that a way to confront this problem is to sort the models in an intelligent way just to consider the relevant parameters space of the models. However, the sorting of the models is challenging. In this paper, we present a procedure that uses sorting information to obtain the most parsimonious model.

First, to address these challenges, we need to consider some characteristics of the sequentially nested models family. In the following corollary, we present an important side of sequentially nested model families.
\begin{col}\label{KASSUMPTION}
	 In every sequentially nested model family $\{\mathcal{M}_k \}_{k=1}^L$,  there is a model index $1< K \leq L$ such that a global risk minimizer predictor is in the $K$-th model, $f_{\pmb{\theta}^*} \in \mathcal{M}_K$ and $f_{\pmb{\theta}^*} \notin \mathcal{M}_{K-1}$, otherwise $f_{\pmb{\theta}^*} \in \mathcal{M}_1$, i.e. $K=1$.
\end{col}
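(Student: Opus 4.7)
The plan is to prove Corollary \ref{KASSUMPTION} by a simple well-ordering argument on the set of indices of models containing the global risk minimizer. First, I would observe that, by the defining formula for $\pmb{\theta}^*$ in \eqref{minriskparEQ}-style (the argmin taken over $\bigcup_{k=1}^L \mathcal{H}_k$), the parameter $\pmb{\theta}^*$ necessarily belongs to at least one parameter space $\mathcal{H}_j$, so the function $f_{\pmb{\theta}^*}$ belongs to the corresponding model $\mathcal{M}_j$. Therefore the set
\begin{equation*}
\mathcal{K} \;:=\; \{\, k \in \{1, 2, \ldots, L\} \,:\, f_{\pmb{\theta}^*} \in \mathcal{M}_k \,\}
\end{equation*}
is a non-empty subset of $\{1, 2, \ldots, L\}$ and, by the well-ordering of the natural numbers, admits a minimum element $K$.

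Next, I would split into the two advertised cases. If $K = 1$, then $f_{\pmb{\theta}^*} \in \mathcal{M}_1$, which is exactly the second alternative stated in the corollary. If $K > 1$, then by the very definition of $K$ as the minimum of $\mathcal{K}$, no smaller index can belong to $\mathcal{K}$, so in particular $f_{\pmb{\theta}^*} \notin \mathcal{M}_{K-1}$, while $f_{\pmb{\theta}^*} \in \mathcal{M}_K$ by the choice of $K$. This establishes the first alternative with $1 < K \leq L$. The sequentially nested structure from Definition \ref{SNDEF} plays only a supporting (consistency) role here: together with Definition \ref{NestedDEF} it guarantees that once $f_{\pmb{\theta}^*}$ appears in $\mathcal{M}_K$, it also appears in every subsequent $\mathcal{M}_k$ with $k \geq K$, so $K$ is the well-defined "first occurrence" index of the risk minimizer in the chain.

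The only subtlety — and the main point worth stating explicitly rather than a real obstacle — is that Definition \ref{NestedDEF} and hence the statement "$f_{\pmb{\theta}^*} \in \mathcal{M}_k$" is phrased in terms of \emph{function} equality $f_{\pmb{\theta}_1} = f_{\pmb{\theta}_2}$ rather than parameter equality. Thus the membership $f_{\pmb{\theta}^*} \in \mathcal{M}_k$ only requires the existence of some $\pmb{\theta}_k \in \mathcal{H}_k$ (not necessarily $\pmb{\theta}^*$ itself) such that $f_{\pmb{\theta}_k} = f_{\pmb{\theta}^*}$ pointwise. Consequently the minimum index $K$ is determined by the function $f_{\pmb{\theta}^*}$, not by any particular parameterization of it, and the entire argument reduces to applying the well-ordering principle to $\mathcal{K}$. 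No estimation, concentration, or finite-sample machinery is required; the statement is a purely set-theoretic observation about the chain structure of nested models.
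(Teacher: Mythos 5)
Your proposal is correct and is essentially the paper's own argument: the paper locates $K$ by an explicit downward search from $\mathcal{M}_L$ (which contains $f_{\pmb{\theta}^*}$ by nestedness) until the minimizer first disappears, which is just an iterative implementation of your well-ordering/minimal-index argument, including the same split into the cases $K>1$ and $K=1$. Your explicit remark that membership is a matter of function equality rather than parameter identity is a fine point the paper leaves implicit, but it does not change the substance.
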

\begin{proof}
    Since $\{\mathcal{M}_k \}_{k=1}^L$ is a sequentially nested model family, any global risk minimizer predictor $f_{\pmb{\theta}^*} $ is always a member of $\mathcal{M}_L$. Therefore, we investigate $\mathcal{M}_{L-1}$. If there is not any  $f_{\pmb{\theta}^*} $ in  $\mathcal{M}_{L-1}$, we have $K=L$. Otherwise, an $f_{\pmb{\theta}^*} $ is in $\mathcal{M}_{L-1}$, and we check  $\mathcal{M}_{L-2}$. Similarly, if there is no  $f_{\pmb{\theta}^*} $ in  $\mathcal{M}_{L-2}$, we have $K=L-1$. Otherwise, an $f_{\pmb{\theta}^*} $ is in $\mathcal{M}_{L-2}$, and we check $\mathcal{M}_{L-3}$. We repeat this procedure until model  $\mathcal{M}_{2}$ is investigated. If  $\mathcal{M}_{2}$ contains $f_{\Ptheta^*}$, we check $\Mc_1$. Then, If $f_{\Ptheta^*} \notin \Mc_1$, we have $K=2$. Otherwise, $f_{\Ptheta^*}\in \Mc_1$, and $K=1$. These statements show that there is an index $K$, $1\leq K \leq L$, that $\Mc_K$ contains the risk minimizer model, and this model is not in $\Mc_{K-1}$.
\end{proof}
Corollary \ref{KASSUMPTION} shows that the $K$-th model is the \textit{most parsimonious model} of a sequentially nested model family $\{\mathcal{M}_k \}_{k=1}^L$ that contains the risk minimizer, and the model selection procedure aims to find $K$. However, $K$ is related to the arrangement of the sequentially nested model family. An optimum arrangement leads to the minimum available $K$. It means that the intelligent model arranging causes model order reduction or reduces the complexity of the desired model with the minimum risk. 

Achieving the risk minimizer is the target of model selection, and degradation in minimum risk is a clue to achieving this minimum risk. However, this risk, due to a lack of knowledge about the probability distribution of the observed data, is not generally available. Hence, we use empirical criteria to recognize whether the minimum risk in a sequentially nested model family is decreased or not. 
 
For two nested models $\mathcal{M}_i$ and $\mathcal{M}_j$ where $\mathcal{M}_i$ is nested in $\mathcal{M}_j$, we introduce the difference between the minimum empirical risk of these models as $\Delta R_{emp}(S_n,i,j)=R_{emp}(S_n,\hat{\pmb{\theta}}_i)-R_{emp}(S_n,\hat{\pmb{\theta}}_j)$. In Subsection \ref{Section4a}, we will show that this criterion is non-negative.  Also, in the sequentially nested model family $\{\mathcal{M}_k\}_{k=1}^L$, we call the difference between the minimum empirical risk of two successive models, $\Delta R_{emp}(S_n,k)=R_{emp}(S_n,\hat{\pmb{\theta}}_{k-1})-R_{emp}(S_n,\hat{\pmb{\theta}}_{k})$, successive empirical excess risk (SEER).  We will show that SEER has an important role in model selection.

Analyzing the behavior of the difference between empirical risks and risks in a model helps to understand the statistical behavior of SEER. 

Now, we introduce a Glivenko-Cantelli functions class that will be useful in our considerations in the sequel.
\begin{defi}[\textbf{Glivenko-Cantelli Function Class \cite{wainwright2019high}}]\label{GCdef}
    The class of functions $\mathcal{F}$ is a Glivenko-Cantelli class for a distribution $\mathcal{P}$ $(\textbf{Z} \sim \mathcal{P})$ if $\sup_{f\in\mathcal{F}} |\frac{1}{n}\sum_{i=1}^{n} f(\textbf{z}_i) - \mathbb{E}\{f(\textbf{Z})\} |$ converges to zero in probability as $n\to \infty$, i.e., for every $\varepsilon>0$
    \begin{equation}
        \lim_{n\to \infty}\mathbb{P}\Big\{ \sup_{f\in\mathcal{F}} \Big| \frac{1}{n}\sum_{i=1}^{n} f(\textbf{z}_i) - \mathbb{E}\{f(\textbf{Z})\} \Big|>\varepsilon\Big\}=0.
    \end{equation}
\end{defi}
The definition of the Glivenko-Cantelli class of functions states that the concentration of every function around its mean converges to zero asymptotically.  Also, the theorem 4.10 in \cite{wainwright2019high} represents a bound based on the Rademacher complexity for bounded function classes. It is shown that if the Rademacher complexity of the class of bounded functions is in the order of $o(1)$ (for $n\to \infty $,  the Radmacher complexity of the class tends to zero), this class of functions is Glivenko-Cantelli class. 

 In the next section, we present the details of the proposed method based on the concepts we presented in this section. We will discuss the properties of a sequentially nested class of models and find some bounds in nested models. Then, we propose a novel model selection method.
\section{Model Selection in Nested Model Families}\label{Sec4}
\sloppy As discussed in the previous sections, ${\pmb{\theta}}_k^*$ and $\hat{\pmb{\theta}}_k$ are crucial parameters in the model selection problems. Corresponding to these parameters in the model class $\{\mathcal{M}_k\}_{k=1}^L$, we have sequences of the minimum risks $R({\pmb{\theta}}_1^* ),R({\pmb{\theta}}_2^* ),…,R({\pmb{\theta}}_L^* )$, and the minimum empirical risks $R_{emp}(S_n,\hat{\pmb{\theta}}_1 ),R_{emp}(S_n,\hat{\pmb{\theta}}_2 ),…,R_{emp}(S_n,\hat{\pmb{\theta}}_L )$.  In the sequentially nested model families, these sequences have special properties that we will use in the nested empirical risk (NER) model selection methods. We present these properties in the next subsection.
\subsection{Properties of Minimum Risks and Minimum Empirical Risks in Nested Families\label{Section4a} }
In this subsection, first, we present the non-increasing property of sequences of minimum risks and minimum empirical risks in the sequentially nested families.
\begin{lemma}\label{lem1}
	The sequences of minimum risks and minimum empirical risks in a sequentially nested model class are non-increasing.
\end{lemma}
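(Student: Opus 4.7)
The plan is to reduce the claim, for each $k\in\{1,\ldots,L-1\}$, to a one-line matching argument at the level of the functions $f_{\pmb{\theta}}$ rather than the parameter vectors. By Definition \ref{SNDEF} the consecutive pair $\mathcal{M}_k,\mathcal{M}_{k+1}$ is already nested, so it suffices to prove $R(\pmb{\theta}_{k+1}^*)\leq R(\pmb{\theta}_k^*)$ and $R_{emp}(S_n,\hat{\pmb{\theta}}_{k+1})\leq R_{emp}(S_n,\hat{\pmb{\theta}}_k)$; iterating over $k$ then yields both non-increasing sequences.

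First I would fix $k$ and apply Definition \ref{NestedDEF} to the risk minimizer $\pmb{\theta}_k^*\in\mathcal{H}_k$: there exists some $\pmb{\theta}'\in\mathcal{H}_{k+1}$ (not necessarily equal to $\pmb{\theta}_k^*$) such that $f_{\pmb{\theta}'}=f_{\pmb{\theta}_k^*}$. Because the loss was introduced in Section \ref{Sec2} as $l(f_{\pmb{\theta}}(\textbf{x}))$ or $l(f_{\pmb{\theta}}(\textbf{x}),\textbf{y})$, it depends on the parameter only through the induced function $f_{\pmb{\theta}}$. Hence $l(\textbf{z},\pmb{\theta}')=l(\textbf{z},\pmb{\theta}_k^*)$ pointwise in $\textbf{z}$, so taking expectations in \eqref{riskEQ} gives $R(\pmb{\theta}')=R(\pmb{\theta}_k^*)$. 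Since $\pmb{\theta}_{k+1}^*$ minimizes $R$ over the larger set $\mathcal{H}_{k+1}$ by \eqref{minriskparEQ}, we obtain
\begin{equation*}
R(\pmb{\theta}_{k+1}^*)\;\leq\; R(\pmb{\theta}')\;=\;R(\pmb{\theta}_k^*),
\end{equation*}
which is the required monotonicity of the minimum risks.

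Second, I would run the identical argument with $\hat{\pmb{\theta}}_k$ playing the role of $\pmb{\theta}_k^*$ and the empirical risk in \eqref{empEQ} in place of the population risk. Invoking Definition \ref{NestedDEF} again produces a parameter $\pmb{\theta}''\in\mathcal{H}_{k+1}$ with $f_{\pmb{\theta}''}=f_{\hat{\pmb{\theta}}_k}$; since $R_{emp}(S_n,\cdot)$ is the sample mean of the same loss, it likewise depends only on $f_{\pmb{\theta}}$, so $R_{emp}(S_n,\pmb{\theta}'')=R_{emp}(S_n,\hat{\pmb{\theta}}_k)$, and minimization over $\mathcal{H}_{k+1}$ via \eqref{minempriskparEQ} yields $R_{emp}(S_n,\hat{\pmb{\theta}}_{k+1})\leq R_{emp}(S_n,\hat{\pmb{\theta}}_k)$.

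The only point worth flagging as a potential subtlety is the implicit identification of the loss with a functional of $f_{\pmb{\theta}}$ rather than of the abstract parameter label $\pmb{\theta}$; this is built into the notational convention introduced just before equation \eqref{riskEQ}, so no additional hypothesis is needed. Consequently the proof is essentially bookkeeping on top of Definitions \ref{NestedDEF} and \ref{SNDEF}, and I do not anticipate any genuine technical obstacle.
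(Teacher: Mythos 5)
Your proof is correct and follows essentially the same argument as the paper: apply Definition \ref{NestedDEF} to the (empirical) risk minimizer of the smaller model to produce a matching parameter in the larger model with equal (empirical) risk, then invoke minimality in the larger model. The only cosmetic differences are that you argue over consecutive pairs and iterate rather than handling arbitrary $i\leq j$ directly, and you make explicit the step that equal induced functions yield equal losses pointwise, which the paper leaves implicit.
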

\begin{proof}
	In a sequentially nested class, for every $i,j\in\{1,2,…,L\},i\leq j$, $\mathcal{M}_i$ is nested in $\Mc_j$. Based on the nested model Definition \ref{NestedDEF}, for any risk minimizer function $f_{{\pmb{\theta}_i^*}} $ in $\mathcal{M}_i$, there is a function $f_{\bar{\pmb{\theta}}_j} $ in $\mathcal{M}_j$, where $f_{{\pmb{\theta}_i^*}}=f_{\bar{\pmb{\theta}}_j} $ and $R({\pmb{\theta}_i^*})=R({\bar{\pmb{\theta}}_j})$. Since  $f_{\bar{\pmb{\theta}}_j}$ is in $\mathcal{M}_j$,  the minimum risk in $\Mc_j$ is not greater than $R({\bar{\pmb{\theta}}_j})$, i.e. $R({\pmb{\theta}_j^*})\leq R({\bar{\pmb{\theta}}_j})$, and consequently $R({\pmb{\theta}_j^*})\leq R({\pmb{\theta}_i^*})$.
	
	Similarly, for the empirical risk minimizer function $f_{\hat{\pmb{\theta}_i}} $ in $\mathcal{M}_i$, there is a function $f_{\tilde{\pmb{\theta}}_j} $ in $\mathcal{M}_j$, where $f_{\hat{\pmb{\theta}_i}}=f_{\tilde{\pmb{\theta}}_j}$ and $R_{emp}(S_n,\hat{\pmb{\theta}}_i)=R_{emp}(S_n,{\tilde{\pmb{\theta}}_j})$. Since $f_{\tilde{\pmb{\theta}}_j}$ is in $\mathcal{M}_j$, the minimum empirical risk in $\Mc_j$ is not greater than $R_{emp}(S_n,{\tilde{\pmb{\theta}}_j})$, i.e. $R_{emp}(S_n,{\hat{\Ptheta}}_j)\leq R_{emp}(S_n,{\tilde{\pmb{\theta}}_j})$, and consequently  $R_{emp}(S_n,\hat{\pmb{\theta}}_j)\leq R_{emp}(S_n,\hat{\pmb{\theta}}_i)$.
\end{proof}
Moreover, in the sequentially nested model families, the sequence of minimum risk has an elbowing property, where the following lemma introduces this property. It is an exclusive property for the sequentially nested model class and generally does not hold for every model class.
\begin{lemma}\label{lem2}
	For the sequentially nested model family $\{\mathcal{M}_k \}_{k=1}^L$, the following hold for every $k\geq K$ and every $i< K$,
	\begin{flalign}
		R({\pmb{\theta}}_{k}^*)&= R(\pmb{\theta}^*),\label{eq11_1} \\
		R({\pmb{\theta}}_{i}^*)&> R(\pmb{\pmb{\theta}}^*).\label{eq11_2}
	\end{flalign}
\end{lemma}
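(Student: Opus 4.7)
The plan is to treat the two claims separately, with Corollary~\ref{KASSUMPTION} doing the main work in both. Its constructive proof (scanning from $\Mc_L$ down to $\Mc_1$) effectively characterizes $K$ as the smallest index whose model contains a global risk minimizer, and I would rely on that reading throughout.

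For \eqref{eq11_1} with $k \geq K$, I would use a sandwich argument. Sequential nestedness (Definition~\ref{SNDEF}) together with $K \leq k$ gives that $\Mc_K$ is nested in $\Mc_k$. Applied to any global risk minimizer $f_{\Ptheta^*} \in \Mc_K$ supplied by Corollary~\ref{KASSUMPTION}, Definition~\ref{NestedDEF} produces some $\bar{\Ptheta}_k \in \Hc_k$ with $f_{\bar{\Ptheta}_k} = f_{\Ptheta^*}$, hence $R(\Ptheta_k^*) \leq R(\bar{\Ptheta}_k) = R(\Ptheta^*)$. The reverse inequality $R(\Ptheta_k^*) \geq R(\Ptheta^*)$ is immediate from the global minimality of $\Ptheta^*$, so equality follows. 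Alternatively, Lemma~\ref{lem1} applied between the indices $K$ and $k$ can replace the explicit nested construction.

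For \eqref{eq11_2} with $i < K$, the most direct route is by contradiction. Suppose $R(\Ptheta_i^*) = R(\Ptheta^*)$. Then $\Ptheta_i^* \in \Hc_i$ attains the global minimum, so $f_{\Ptheta_i^*} \in \Mc_i$ is itself a global risk minimizer. Since $i \leq K-1$ and the class is sequentially nested, $\Mc_i$ is nested in $\Mc_{K-1}$, so Definition~\ref{NestedDEF} yields some $\Ptheta_{K-1} \in \Hc_{K-1}$ with $f_{\Ptheta_{K-1}} = f_{\Ptheta_i^*}$, placing a global risk minimizer inside $\Mc_{K-1}$. This contradicts the defining property of $K$ from Corollary~\ref{KASSUMPTION}. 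The weak inequality $R(\Ptheta_i^*) \geq R(\Ptheta^*)$ is automatic from the definition of $\Ptheta^*$, so the inequality must be strict.

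The only subtle step is the reading of Corollary~\ref{KASSUMPTION}. Its phrasing ``a global risk minimizer predictor is in the $K$-th model \ldots and $f_{\Ptheta^*} \notin \Mc_{K-1}$'' could be misread as a claim about one particular minimizer, but its proof actually selects $K$ as the smallest index whose model contains any global risk minimizer; that stronger reading is exactly what the contradiction step in \eqref{eq11_2} needs. Beyond this point, no estimates or probabilistic tools enter, and the entire proof should be short.
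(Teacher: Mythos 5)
Your proof is correct and takes essentially the same route as the paper's: nestedness propagates a global risk minimizer from $\mathcal{M}_K$ into every $\mathcal{M}_k$ with $k \geq K$ (your sandwich argument makes explicit what the paper states directly), and for $i < K$ your contradiction step is just the contrapositive of the paper's direct claim that no global minimizer lies in any $\mathcal{M}_i$ nested in $\mathcal{M}_{K-1}$. Your remark about the correct reading of Corollary~\ref{KASSUMPTION} is well taken --- the paper's proof likewise relies on the stronger ``no global risk minimizer at all in $\mathcal{M}_{K-1}$'' interpretation, which its scanning construction indeed establishes.
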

\begin{proof}
	Based on Corollary \ref{KASSUMPTION}, the global risk minimizer $f_{\pmb{\theta}^*}$ is an element of $\mathcal{M}_K$. Therefore, in the sequentially nested model family $\{\mathcal{M}_k \}_{k=1}^L$, for $K \leq k \leq L$,  $f_{\pmb{\theta}^*}$ is also an element of $\mathcal{M}_k$, and the minimum risk of all these models equals the global minimum risk, i.e., $R(\pmb{\theta}^* ) = R({\pmb{\theta}}_k^* )$.
 
Now, based on Corollary \ref{KASSUMPTION}, there is not any global risk minimizer predictor $f_{\pmb{\theta}^*}$ in $\mathcal{M}_{K-1}$. Consequently, in the sequentially nested model family $\{\mathcal{M}_i \}_{i<K}$, there is not any global risk minimizer predictor $f_{\pmb{\theta}^*}$ in $\mathcal{M}_{i}$, and the minimum risk of these models is greater than $R(\pmb{\theta}^*)$, i.e., $R({\pmb{\theta}}_{i}^*) >R(\pmb{\theta}^*)$.
\end{proof}
Lemma \ref{lem2} illustrates the role of risk minimizer predictor in the sequence of minimum risks in sequentially nested model families. In the following Corollary, we show the trend of the sequence of minimum risks in the sequentially nested model classes.
\begin{col}\label{trendCOL}
	For the sequentially nested model class $\{\mathcal{M}_k\}_{k=1}^L$, the following trend in the sequence of minimum risks holds
	\begin{equation}\label{17}
		R({\pmb{\theta}}_1^*) \geq R( {\pmb{\theta}}_2^*) \geq ... \geq R( {\pmb{\theta}}_{K-1}^*) > R({\pmb{\theta}}_K^*)=...=R( {\pmb{\theta}}_L^*).
	\end{equation}
\end{col}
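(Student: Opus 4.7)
The statement is a direct consequence of the two preceding lemmas, so my plan is simply to assemble it in three stages corresponding to the three distinct pieces of the chain in \eqref{17}: the non-increasing portion up to index $K-1$, the strict drop between indices $K-1$ and $K$, and the constant tail from index $K$ onward. Since both Lemma~\ref{lem1} and Lemma~\ref{lem2} are already available, no new analytic work should be required; the only task is to verify that the three pieces slot together correctly.

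First I would invoke Lemma~\ref{lem1} applied to the full sequentially nested family $\{\mathcal{M}_k\}_{k=1}^L$: this immediately yields the chain of non-strict inequalities $R(\pmb{\theta}_1^*)\geq R(\pmb{\theta}_2^*)\geq\cdots\geq R(\pmb{\theta}_L^*)$, which covers every comparison except the one I need to strengthen to strict. Next I would apply \eqref{eq11_1} of Lemma~\ref{lem2}, valid for every $k\geq K$, to conclude $R(\pmb{\theta}_K^*)=R(\pmb{\theta}_{K+1}^*)=\cdots=R(\pmb{\theta}_L^*)=R(\pmb{\theta}^*)$, which gives the constant tail.

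The remaining step is the strict inequality $R(\pmb{\theta}_{K-1}^*)>R(\pmb{\theta}_K^*)$. Here I would combine \eqref{eq11_2} of Lemma~\ref{lem2} at $i=K-1$, which gives $R(\pmb{\theta}_{K-1}^*)>R(\pmb{\theta}^*)$, with the already-established equality $R(\pmb{\theta}_K^*)=R(\pmb{\theta}^*)$; substituting yields the desired strict drop. Concatenating the three pieces reproduces \eqref{17} verbatim. I should also briefly note the degenerate case $K=1$ (no global risk minimizer exists outside $\mathcal{M}_1$), in which the strict-inequality segment is vacuous and the chain reduces to the all-equalities tail starting at index $1$.

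I do not foresee an obstacle in this proof; the only subtlety is bookkeeping around the boundary index $K$ and the edge case $K=1$, to make sure the strict inequality is not asserted when it is vacuous. Everything else is a mechanical concatenation of facts already proved.
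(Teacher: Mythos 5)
Your proof is correct and follows exactly the route the paper takes: its proof of Corollary~\ref{trendCOL} is the one-line remark that \eqref{17} follows by combining Lemmas~\ref{lem1} and~\ref{lem2}, which is precisely the three-piece assembly you carry out (Lemma~\ref{lem1} for the non-strict chain, \eqref{eq11_1} for the constant tail, and \eqref{eq11_2} at $i=K-1$ for the strict drop). Your added attention to the boundary case $K=1$ is a sensible bit of bookkeeping that the paper's proof leaves implicit.
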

\begin{proof}
	Simply, by combining the results in Lemmas \ref{lem1} and \ref{lem2}, \eqref{17} will be obtained.
\end{proof}
As shown in \eqref{17}, the sequence of minimum risks has the elbowing property. It means this sequence of minimum risks is non-increasing, and they will become equal at a point in the sequence.  Based on this Corollary, expanding the model from $\mathcal{M}_K$ to any $\mathcal{M}_{K+1},...,\mathcal{M}_{L}$ is not useful since the minimum risk will not decrease. 

The following subsection proposes upper and lower bounds on the difference between the models' minimum empirical risks and the SEER. 
\subsection{Probably Approximately Correct (PAC) Bounds on the SEER}\label{Section4b}
 In this subsection, we obtain the PAC bounds on the SEER using the properties of the Glivenko-Cantelli function class. We use these bounds in the proposed model selection method called the nested empirical risk (NER). In this regard,  we investigate the convergence of the difference of minimum empirical risk  to the difference of minimum risk of two models. In the following theorem, for two models, $\Mc_i$ and $\Mc_j$, we will show that a bound with $o(1)$ rate exists on  $|R_{emp}(S_n,\hat{\Ptheta}_i)-R_{emp}(S_n,\hat{\Ptheta}_j)-R(\Ptheta_i^*)+R(\Ptheta_j^*)|$.
\begin{theorem}\label{o1learnboundCOL}
     Let $\mathcal{L}_i=\{l(.,\Ptheta_i):\Ptheta_i \in \Hc_i\}$ and $\mathcal{L}_j=\{l(.,\Ptheta_j):\Ptheta_j \in \Hc_j\}$ be the Glivenco-Cantelli classes of loss functions for the models $\Mc_i$ and $\Mc_j$, respectively. Then, there are a function $\gamma_{i,j}(n,\delta,S_n)$ with the order of $o(1)$ and a positive integer $N$, where for every  $n\geq N$ and every $0<\delta\leq 1$, the following holds
    \begin{flalign}
        \mathbb{P}\Big\{| R_{emp}(S_n,\hat{\pmb{\theta}}_i)- R_{emp}(S_n,\hat{\pmb{\theta}}_j)-R({\pmb{\theta}}_i^*)+R({\pmb{\theta}}_j^*)| \leq \gamma_{i,j}(n,\delta,S_n)\Big\}\geq 1-\delta.\label{O1existencebound22EQ}
    \end{flalign}
\end{theorem}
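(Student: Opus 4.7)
The plan is to reduce the claim to a uniform deviation control inside each of the two models separately and then combine them by triangle inequality and a union bound. Concretely, I would work with the decomposition
\begin{equation*}
R_{emp}(S_n,\hat{\Ptheta}_i)-R_{emp}(S_n,\hat{\Ptheta}_j)-R(\Ptheta_i^*)+R(\Ptheta_j^*)
=\big[R_{emp}(S_n,\hat{\Ptheta}_i)-R(\Ptheta_i^*)\big]-\big[R_{emp}(S_n,\hat{\Ptheta}_j)-R(\Ptheta_j^*)\big],
\end{equation*}
so that a triangle inequality reduces the theorem to obtaining, for each model separately, an $o(1)$ high-probability bound on $|R_{emp}(S_n,\hat{\Ptheta}_k)-R(\Ptheta_k^*)|$ with $k\in\{i,j\}$.

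The first key step is the standard ERM sandwich argument. Since $\hat{\Ptheta}_k$ minimizes the empirical risk inside $\mathcal{H}_k$, we have $R_{emp}(S_n,\hat{\Ptheta}_k)\leq R_{emp}(S_n,\Ptheta_k^*)$; since $\Ptheta_k^*$ minimizes the true risk, we have $R(\Ptheta_k^*)\leq R(\hat{\Ptheta}_k)$. Combining these with $R_{emp}(S_n,\Ptheta_k^*)\leq R(\Ptheta_k^*)+\sup_{\Ptheta_k\in\mathcal{H}_k}|R_{emp}(S_n,\Ptheta_k)-R(\Ptheta_k)|$ and the analogous lower bound at $\hat{\Ptheta}_k$ yields the clean uniform-deviation bound
\begin{equation*}
\big|R_{emp}(S_n,\hat{\Ptheta}_k)-R(\Ptheta_k^*)\big|\leq \sup_{\Ptheta_k\in\mathcal{H}_k}\big|R_{emp}(S_n,\Ptheta_k)-R(\Ptheta_k)\big|=:D_k(S_n).
\end{equation*}

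The second step is to convert the Glivenko–Cantelli hypothesis on $\mathcal{L}_i$ and $\mathcal{L}_j$ into a quantitative $o(1)$ rate. By Definition \ref{GCdef}, $D_i(S_n)$ and $D_j(S_n)$ each converge to $0$ in probability. For every fixed $\delta\in(0,1]$, I would define
\begin{equation*}
\gamma_k\!\left(n,\tfrac{\delta}{2}\right):=\inf\Big\{\varepsilon>0:\ \Pb\big\{D_k(S_n)>\varepsilon\big\}\leq \tfrac{\delta}{2}\Big\},\qquad k\in\{i,j\}.
\end{equation*}
Convergence in probability forces $\gamma_k(n,\delta/2)\to 0$ as $n\to\infty$, i.e.\ $\gamma_k(n,\delta/2)=o(1)$, and for $n$ larger than some threshold $N_k(\delta)$ the infimum is attained so that $\Pb\{D_k(S_n)\leq \gamma_k(n,\delta/2)\}\geq 1-\delta/2$. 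Set $N:=\max(N_i(\delta),N_j(\delta))$ and apply a union bound to get
\begin{equation*}
\Pb\big\{D_i(S_n)\leq \gamma_i(n,\tfrac{\delta}{2})\ \text{and}\ D_j(S_n)\leq \gamma_j(n,\tfrac{\delta}{2})\big\}\geq 1-\delta,
\end{equation*}
so on this event the triangle inequality gives the target bound with
\begin{equation*}
\gamma_{i,j}(n,\delta,S_n):=\gamma_i\!\left(n,\tfrac{\delta}{2}\right)+\gamma_j\!\left(n,\tfrac{\delta}{2}\right),
\end{equation*}
which is $o(1)$ as the sum of two $o(1)$ quantities.

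The main conceptual obstacle, and the one that requires the most care, is precisely the passage from the qualitative Glivenko–Cantelli statement (convergence in probability, with no explicit rate) to the existence of an explicit $o(1)$ envelope $\gamma_k(n,\delta/2)$. The inf-based construction above is the cleanest route; I would also point out, following the remark after Definition \ref{GCdef} that uses Theorem 4.10 of \cite{wainwright2019high}, that if one additionally has a Rademacher-complexity bound of order $o(1)$ then $\gamma_k$ can be made data-dependent and explicit, which is why the statement of the theorem permits $\gamma_{i,j}$ to depend on $S_n$. Everything else is routine: the ERM sandwich, the triangle inequality, and the union bound.
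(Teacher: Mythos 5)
Your proposal is correct and follows essentially the same route as the paper's Appendix A proof: the ERM sandwich combined with the Glivenko--Cantelli uniform deviation in each model (the paper's Theorem 1.A), a triangle inequality plus union bound across the two models (Theorem 1.B), and finally the conversion of convergence in probability into an $o(1)$ high-probability envelope. Your inf-based quantile construction of $\gamma_k(n,\delta/2)$ is in fact a cleaner, more explicit rendering of the paper's final step, which only asserts the existence of a function $\gamma_{i,j}$ squeezed below an arbitrary $\varepsilon$ without constructing it.
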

\begin{proof}
See the proof in Appendix \ref{APPA}.
      \end{proof}
Using Theorem \ref{o1learnboundCOL}, the following theorem proves the existence of the $o(1)$ lower and upper bounds on the difference of the minimum empirical risks of two nested models in two states, i.e., the state with different minimum risks and the state with the same minimum risks, respectively.
 \begin{theorem}\label{LboundTH}
	Consider two models,  $\mathcal{M}_i $ and $\mathcal{M}_j$,  where $\mathcal{L}_i=\{l(.,\Ptheta_i):\Ptheta_i\in \Hc_i\}$ and $\mathcal{L}_j=\{l(.,\Ptheta_j):\Ptheta_j\in \Hc_j\}$ are Glivenko-Cantelli classes of loss functions. Let $\mathcal{M}_i $ be nested in $\mathcal{M}_j$, so $R(\Ptheta^*_i)\geq R(\Ptheta^*_j)$. Also, assume  $\gamma_{i,j}\left(n,{\delta},S_n\right)$ in Theorem \ref{o1learnboundCOL} is an $o(1)$ upper bound for $| R_{emp}(S_n,\hat{\pmb{\theta}}_i)- R_{emp}(S_n,\hat{\pmb{\theta}}_j)-R({\pmb{\theta}}_i^*)+R({\pmb{\theta}}_j^*)|$.
	\begin{itemize}
		\item{ \textbf{State 1}: For $R(\pmb{\theta}^*_i)>R(\pmb{\theta}^*_j)$ and every $0<\delta\leq 1$, there is a positive integer $N_1$, such that for every $n\geq N_1$, with the probability of at least $1-\delta$, the following holds\begin{equation}\label{minusKboundEQ}
				R_{emp}(S_n,\hat{\pmb{\theta}}_{i})-R_{emp}(S_n,\hat{\pmb{\theta}}_{j})  \geq \gamma_{i,j}\left(n,{\delta},S_n\right)  ,
		\end{equation}}
		\item{\textbf{State 2}: For $R(\pmb{\theta}^*_i)=R(\pmb{\theta}^*_j)$, there is a positive integer $N_2$, such that for every $n\geq N_2$,  $0<\delta\leq 1$,  with the probability of at least $1-\delta$, the following holds
			\begin{equation}\label{empexcessERMINEQ}
				R_{emp}(S_n,\hat{\pmb{\theta}}_{i})-R_{emp}(S_n,\hat{\pmb{\theta}}_{j})  \leq \gamma_{i,j}\left(n,{\delta},S_n\right).
		\end{equation}}
	\end{itemize}
\end{theorem}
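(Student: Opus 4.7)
Both states follow from Theorem \ref{o1learnboundCOL} applied to the pair $(\Mc_i,\Mc_j)$, combined with the sign of $R(\Ptheta_i^*)-R(\Ptheta_j^*)$ (which is non-negative because $\Mc_i$ is nested in $\Mc_j$, by Lemma \ref{lem1}). My starting point is the two-sided sandwich obtained by removing the absolute value in \eqref{O1existencebound22EQ}: on an event of probability at least $1-\delta$ and for every $n\geq N$,
\begin{equation}
R(\Ptheta_i^*)-R(\Ptheta_j^*)-\gamma_{i,j}(n,\delta,S_n)\;\leq\; R_{emp}(S_n,\hat{\Ptheta}_i)-R_{emp}(S_n,\hat{\Ptheta}_j)\;\leq\; R(\Ptheta_i^*)-R(\Ptheta_j^*)+\gamma_{i,j}(n,\delta,S_n).
\end{equation}
Each state is then obtained by reading off one side of this sandwich, after handling the asymptotic behavior of $\gamma_{i,j}$ appropriately.

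\textbf{State 2} is immediate: when $R(\Ptheta_i^*)=R(\Ptheta_j^*)$, the right-hand inequality above collapses to $R_{emp}(S_n,\hat{\Ptheta}_i)-R_{emp}(S_n,\hat{\Ptheta}_j)\leq \gamma_{i,j}(n,\delta,S_n)$, so I can take $N_2=N$ and the claim holds with probability at least $1-\delta$. (The companion lower bound $0\leq R_{emp}(S_n,\hat{\Ptheta}_i)-R_{emp}(S_n,\hat{\Ptheta}_j)$ is already known from Lemma \ref{lem1}, but is not required for this part of the statement.)

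\textbf{State 1} requires a little more work. Set $\Delta := R(\Ptheta_i^*)-R(\Ptheta_j^*)>0$; this is a fixed positive constant determined by the model pair. The left-hand inequality of the sandwich gives $R_{emp}(S_n,\hat{\Ptheta}_i)-R_{emp}(S_n,\hat{\Ptheta}_j)\geq \Delta-\gamma_{i,j}(n,\delta,S_n)$, so to match the target bound $\gamma_{i,j}$ it is enough to show $\Delta-\gamma_{i,j}(n,\delta,S_n)\geq \gamma_{i,j}(n,\delta,S_n)$, i.e.\ $\gamma_{i,j}(n,\delta,S_n)\leq \Delta/2$, once $n$ is large enough. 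Because Theorem \ref{o1learnboundCOL} provides $\gamma_{i,j}(n,\delta,\cdot)=o(1)$ for every fixed $\delta$, I can pick an integer $N_1\geq N$, depending on $\Delta$ and $\delta$ only, such that $\gamma_{i,j}(n,\delta,S_n)\leq \Delta/2$ whenever $n\geq N_1$. On the good event of Theorem \ref{o1learnboundCOL}, both facts hold simultaneously, which yields \eqref{minusKboundEQ} with probability at least $1-\delta$.

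\textbf{Main obstacle.} The only subtle point is the interpretation of the $o(1)$ statement on $\gamma_{i,j}$: since $\gamma_{i,j}$ depends on the sample $S_n$, the inequality $\gamma_{i,j}(n,\delta,S_n)\leq \Delta/2$ must be justified in a mode strong enough to coexist with the $(1-\delta)$-event of the sandwich without costing extra probability. I expect the envelope produced in the proof of Theorem \ref{o1learnboundCOL} via the Glivenko-Cantelli property to be deterministically $o(1)$ in $n$ for fixed $\delta$ (or, failing that, to admit a deterministic majorant with the same property), in which case the choice $N_1=N_1(\Delta,\delta)$ is uniform in $S_n$ and no union bound is needed. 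This is the step I would flag and write out most carefully.
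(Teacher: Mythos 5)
Your proof is correct and follows essentially the same route as the paper: both arguments apply Theorem \ref{o1learnboundCOL}, read off one side of the sandwich directly for State 2, and for State 1 use the $o(1)$ property of $\gamma_{i,j}$ to force $\gamma_{i,j}(n,\delta,S_n)\leq \tfrac{1}{2}\big(R(\Ptheta_i^*)-R(\Ptheta_j^*)\big)$ for all $n\geq N_1=\max(N,N_1^{\prime})$, exactly as in the paper's inequalities \eqref{eq33eq4}--\eqref{eq33eq6}. The obstacle you flag at the end is resolved just as you anticipate: the paper explicitly treats $\gamma_{i,j}(n,\delta,S_n)$ as a deterministic function, so the threshold $N_1$ is uniform in the sample and no additional probability is spent.
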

\begin{proof}
    In the first step, we prove the state 1 where $R(\Ptheta^*_i)>R(\Ptheta^*_j)$. Based on Theorem \ref{o1learnboundCOL},  there is a positive integer $N$ where for every $n\geq N$ and $0<\delta\leq 1$, we have
      \begin{flalign}
        \mathbb{P}\Big\{| R_{emp}(S_n,\hat{\pmb{\theta}}_j)- R_{emp}(S_n,\hat{\pmb{\theta}}_i)+R({\pmb{\theta}}_i^*)-R({\pmb{\theta}}_j^*)|  \leq \gamma_{i,j}(n,\delta,S_n)\Big\}\geq 1-\delta.\label{eq33eq1}
    \end{flalign}
    Since the above inequality holds for the absolute value of the left-hand-side of inequality, the following also holds
    \begin{flalign}
        \mathbb{P}\Big\{ R_{emp}(S_n,\hat{\pmb{\theta}}_i)- R_{emp}(S_n,\hat{\pmb{\theta}}_j)\geq R({\pmb{\theta}}_i^*)-R({\pmb{\theta}}_j^*)  -\gamma_{i,j}(n,\delta,S_n)\Big\}\geq 1-\delta.\label{eq33eq2}
    \end{flalign}
    Note that  
$R({\pmb{\theta}}_i^*),\ R({\pmb{\theta}}_j^*)$, and $\gamma_{i,j}(n,\delta,S_n) $ are deterministic functions. Also, for state 1, $\frac{1}{2}\big(R({\pmb{\theta}}_i^*)-R({\pmb{\theta}}_j^*)\big)>0$. Moreover, since $\gamma_{i,j}(n,\delta,S_n)=o(1)$, for every $0<\delta\leq 1$, there is a positive integer $N_1^{\prime}$ where for every $n\geq N_1^{\prime}$, the following holds
\begin{equation}\label{eq33eq4}
    (1/2)\big(R({\pmb{\theta}}_i^*)-R({\pmb{\theta}}_j^*)\big)>\gamma_{i,j}(n,\delta,S_n)
\end{equation}
We can express \eqref{eq33eq4} as follows
\begin{equation}\label{eq33eq5}
    R({\pmb{\theta}}_i^*)-R({\pmb{\theta}}_j^*)-\gamma_{i,j}(n,\delta,S_n)>\gamma_{i,j}(n,\delta,S_n)
\end{equation}
Now, using \eqref{eq33eq5} in \eqref{eq33eq2}, for every $0<\delta\leq 1$, there is a positive integer $N_1=\max{(N,N_1^{\prime})}$ where  for every $n\geq N_1$, we have
\begin{equation}\label{eq33eq6}
        \mathbb{P}\Big\{ R_{emp}(S_n,\hat{\pmb{\theta}}_i)- R_{emp}(S_n,\hat{\pmb{\theta}}_j)  \geq   \gamma_{i,j}(n,\delta,S_n)\Big\}\geq1-\delta,
    \end{equation}
where it proves the state 1.

    Now, we prove the state 2 where $R(\Ptheta^*_i)=R(\Ptheta^*_j)$. Based on Theorem \ref{o1learnboundCOL}, there is a positive integer $N_2$, where for every $n\geq N_2$ and $0<\delta\leq1$, the following holds
    \begin{flalign}
        \mathbb{P}\Big\{ |R_{emp}(S_n,\hat{\pmb{\theta}}_i)- R_{emp}(S_n,\hat{\pmb{\theta}}_j)-R({\pmb{\theta}}_i^*)+R({\pmb{\theta}}_j^*)|\leq \gamma_{i,j}(n,\delta,S_n)\Big\}\geq1-\delta.\label{eq330eq7}
        \end{flalign}
        Since \eqref{eq330eq7} holds for the absolute value of the left-hand-side of the inequality, the following also holds  
    \begin{flalign}
        \mathbb{P}\Big\{ R_{emp}(S_n,\hat{\pmb{\theta}}_i)- R_{emp}(S_n,\hat{\pmb{\theta}}_j)-R({\pmb{\theta}}_i^*)+R({\pmb{\theta}}_j^*)\leq \gamma_{i,j}(n,\delta,S_n)\Big\}\geq1-\delta.\label{eq33eq7}
        \end{flalign}
        In the state 2, $R({\pmb{\theta}}_j^*)-R({\pmb{\theta}}_i^*)=0$, and we have
         \begin{equation}\label{eq33eq8}
        \mathbb{P}\Big\{ R_{emp}(S_n,\hat{\pmb{\theta}}_i)- R_{emp}(S_n,\hat{\pmb{\theta}}_j)\leq \gamma_{i,j}(n,\delta,S_n)\Big\}\geq1-\delta,
        \end{equation}
        where it proves the state 2.
\end{proof}

In state 1 of Theorem \ref{LboundTH}, we have shown that for two nested models $\mathcal{M}_i$ and $\mathcal{M}_j$ where $\Mc_i$ is nested in $\Mc_j$, if the minimum risk of $\mathcal{M}_j$ will be smaller than the minimum risk of $\mathcal{M}_i$, with a high probability, the difference between minimum empirical risks will be lower bounded by $\gamma_{i,j}(n,\delta,S_n)$. Also, in state 2 of Theorem \ref{LboundTH}, we show that if the minimum risk of these nested models is the same, with a high probability, the difference between minimum empirical risks will be upper bounded by $\gamma_{i,j}(n,\delta,S_n)$.  Indeed, we show that the difference in minimum risks relates to the difference between minimum empirical risks in two nested models. Therefore, for two models,  $\Mc_i$ and $\Mc_j$ where $\Mc_i$ is nested in $\Mc_j$, we can use $\gamma_{i,j}(n,\delta,S_n)$ as a threshold on the difference of minimum empirical risks to predict whether their minimum risk is the same or not. 

In the sequel, for ease of notation, in a sequentially nested model family $\{\Mc_k\}_{k=1}^L$, for every $k\in\{2,...,L\}$, we use $\gamma_{k}(n,\delta,S_n)$ instead of $\gamma_{k-1,k}(n,\delta,S_n)$ as an $o(1)$  upper bound on $| R_{emp}(S_n,\hat{\pmb{\theta}}_{k-1})- R_{emp}(S_n,\hat{\pmb{\theta}}_k)-R({\pmb{\theta}}_{k-1}^*)+R({\pmb{\theta}}_{k}^*)|$ for the two successive nested models $\Mc_{k-1}$ and $\Mc_{k}$.

 In the following corollary, using Theorem \ref{LboundTH}, we prove the existence of $o(1)$ upper and lower bounds for the SEER in the sequentially nested model families.
\begin{col}[\textbf{SEER PAC bounds}]\label{NestedboundTH}
	Consider the sequentially nested model class  $\{\mathcal{M}_k \}_{k=1}^L$ where for every $k\in\{1,2,...,L\}$,  $\mathcal{L}_k=\{l(.,\Ptheta_k):\Ptheta_k\in \Hc_k\}$ is Glivenko-Cantelli class of loss functions. Also, for every $k\in\{ 2,3,...,L\}$, assume the function $\gamma_k(n,\delta,S_n)$ in Theorem \ref{o1learnboundCOL} is an $o(1)$ upper bound for $| R_{emp}(S_n,\hat{\pmb{\theta}}_{k-1})- R_{emp}(S_n,\hat{\pmb{\theta}}_k)+R({\pmb{\theta}}_k^*)-R({\pmb{\theta}}_{k-1}^*)|$. 
	\begin{enumerate}
		\item{  For every $0<\delta\leq 1$, there is a positive integer $N_K\geq1$ such that for every $n\geq N_K$, with the probability of at least $1-\delta$, the following holds
			\begin{equation}\label{minusKboundEQ11}
				R_{emp}(S_n,\hat{\pmb{\theta}}_{K-1})-R_{emp}(S_n,\hat{\pmb{\theta}}_{K})  \geq \gamma_{K}\left(n,{\delta},S_n\right) .
		\end{equation}}
		\item{For every $0<\delta\leq 1$,  and every $k>K$, there is a positive integer  $N_k$, such that  $n\geq N_k$, with the probability of at least $1-\delta$, the following holds
			\begin{equation}\label{empexcessERMINEQ11}
				R_{emp}(S_n,\hat{\pmb{\theta}}_{k-1})-R_{emp}(S_n,\hat{\pmb{\theta}}_{k})  \leq \gamma_{k}\left(n,{\delta},S_n\right).
		\end{equation}}
	\end{enumerate}
\end{col}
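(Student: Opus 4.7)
The plan is to observe that this corollary is essentially a direct specialization of Theorem \ref{LboundTH} to the two qualitatively different regimes exhibited by the sequence of minimum risks. Because $\{\mathcal{M}_k\}_{k=1}^L$ is sequentially nested, every consecutive pair $\mathcal{M}_{k-1}, \mathcal{M}_k$ satisfies the nesting hypothesis of Theorem \ref{LboundTH}, and the Glivenko--Cantelli hypotheses on $\mathcal{L}_{k-1}$ and $\mathcal{L}_k$ hold by assumption, so the machinery of that theorem is available at every step. The notational abbreviation $\gamma_k(n,\delta,S_n) := \gamma_{k-1,k}(n,\delta,S_n)$, introduced just before the statement, lets the bounds line up verbatim with those produced by Theorem \ref{LboundTH}.

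For part 1, I would first note that by Corollary \ref{KASSUMPTION} we may assume $K>1$; the case $K=1$ renders the claim vacuous since no index $K-1$ exists. Under $K>1$, Lemma \ref{lem2} (equivalently, the elbow structure in Corollary \ref{trendCOL}) gives $R(\pmb{\theta}_{K-1}^*) > R(\pmb{\theta}^*) = R(\pmb{\theta}_{K}^*)$. This places the pair $(\mathcal{M}_{K-1}, \mathcal{M}_K)$ in State 1 of Theorem \ref{LboundTH}; applying that theorem with $i=K-1$, $j=K$ produces a positive integer $N_K$ such that for every $n\geq N_K$, the lower bound \eqref{minusKboundEQ11} holds with probability at least $1-\delta$.

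For part 2, fix any $k>K$. Lemma \ref{lem2} (again via Corollary \ref{trendCOL}) now delivers $R(\pmb{\theta}_{k-1}^*) = R(\pmb{\theta}^*) = R(\pmb{\theta}_k^*)$, which places the pair $(\mathcal{M}_{k-1}, \mathcal{M}_k)$ in State 2 of Theorem \ref{LboundTH}. Applying that theorem with $i=k-1$, $j=k$ yields a positive integer $N_k$ such that for every $n\geq N_k$, the upper bound \eqref{empexcessERMINEQ11} holds with probability at least $1-\delta$. The integers $N_k$ are produced independently for each $k>K$, so no uniform-in-$k$ control is required by the statement.

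The hard part will not be analytic in nature, since Theorem \ref{LboundTH} supplies everything; the real subtlety is merely recognising which state applies at each step, and this is dictated entirely by the elbow structure of the minimum-risk sequence in Corollary \ref{trendCOL}. The strict drop at step $K$ triggers State 1 and yields the lower bound on the SEER, whereas the plateau at every $k>K$ triggers State 2 and yields the upper bound. The only bookkeeping care required is (i) verifying that each pair of loss classes in the chain is Glivenko--Cantelli, which is given, and (ii) handling the degenerate case $K=1$ in part 1 as vacuous.
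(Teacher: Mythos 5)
Your proposal matches the paper's own proof essentially verbatim: both use Lemma \ref{lem2} to get the strict drop $R(\pmb{\theta}^*_{K-1})>R(\pmb{\theta}^*_{K})$ and the plateau $R(\pmb{\theta}^*_{k-1})=R(\pmb{\theta}^*_{k})$ for $k>K$, and then invoke State 1 and State 2 of Theorem \ref{LboundTH} on the consecutive pairs $(i,j)=(K-1,K)$ and $(i,j)=(k-1,k)$, respectively, with the $N_k$ chosen separately for each $k$. Your explicit treatment of the degenerate case $K=1$ (part 1 becomes vacuous) is a small point of care that the paper's proof omits, but the substance of the argument is identical.
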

\begin{proof}
	Since $\{\mathcal{M}_k \}_{k=1}^L$  is a sequentially nested model class, based on Lemma \ref{lem2}, $R(\pmb{\theta}^*_{K-1})>R(\pmb{\theta}^*_{K})$. Then, state 1 of Corollary \ref{NestedboundTH} satisfies all conditions of state 1 of Theorem \ref{LboundTH}, and based on that, for every  $0<\delta\leq 1$, there is an integer $N_K$ such that for every $n\geq N_K$, with the probability of $1-\delta$, \eqref{minusKboundEQ11} holds. 
 
 In addition, for every $k>K$, based on Lemma \ref{lem2}, $R(\pmb{\theta}^*_{k-1})=R(\pmb{\theta}^*_{k})$.  Now, it is easy to know that state 2 of Corollary \ref{NestedboundTH} satisfies all conditions of state 2 of the Theorem \ref{LboundTH}, and based on that, there is an integer $N_k$ such that for every  $n\geq N_k$ and $0<\delta\leq 1$,  with the probability of at least $1-\delta$, \eqref{empexcessERMINEQ11} holds.
\end{proof}
 Corollary \ref{NestedboundTH} proves the existence of  $o(1)$ upper and lower bounds on the SEER (In Subsection \ref{Sec5}, we will calculate the SEER bounds for the linear regression problem as an important example of a learning problem).  

 Now, for every $k\in\{2,3,...,L\}$, let $\Delta R_{emp}(S_n,k)=R_{emp}(S_n,\hat{\Ptheta}_{k-1})-R_{emp}(S_n,\hat{\Ptheta}_k)$. We can rewrite \eqref{minusKboundEQ11} as follows
\begin{equation}
	\mathbb{P}\{ \Delta R_{emp}(S_n,K)\geq \gamma_{K}\left(n,\delta,S_n\right) \}\geq 1-\delta. \label{Reverscound}
\end{equation}
This inequality states that for some small values of $\delta$, it is very likely to  $ \Delta R_{emp}(S_n,K)\geq \gamma_{K}\left(n,\delta,S_n\right)$. Also,
for $k > K$, \eqref{empexcessERMINEQ11} can be rewritten as follows
\begin{equation}\label{DELTAEQ}
	\mathbb{P}\{{\Delta R}_{emp}(S_n,k)\geq \gamma_{k}\left(n,\delta,S_n\right) \}\leq \delta.
\end{equation}
 This inequality states that if model index $k$ is greater than $K$, for some little $\delta$ values, it is very unlikely to  $ {\Delta R}_{emp}(S_n,k)\geq\gamma_{k}\left(n,\delta,S_n\right) $. In the following subsection, we use these facts for the proposed model selection method.
\subsection{Nested Empirical Risk (NER) Method} \label{SEC4.1}
In this subsection, we show how our proposed method, called nested empirical risk (NER), uses the SEER PAC bounds to estimate the model order from a prepared sequentially nested model family. 
Consider a sequentially nested model family $\{\mathcal{M}_k\}_{k=1}^L$,  where $\mathcal{M}_k=\{f_{\Ptheta_k}: {\Ptheta_k}\in\Hc_k\}$. It is unknown which model in this sequentially nested model family contains the global risk minimizer predictor. In other words, in this family, the model index $K$ is unknown. In this regard, we will use inequalities \eqref{Reverscound} and \eqref{DELTAEQ} to estimate the model index $K$. Inequality \eqref{Reverscound} states that for the model index $K$, it is very likely that $ \Delta R_{emp}(S_n,K)$ will be greater than $\gamma_{K}\left(n,\delta,S_n\right)$. Also,  based on inequality  \eqref{DELTAEQ}, it is very unlikely that $ \Delta R_{emp}(S_n,K+1),\Delta R_{emp}(S_n,K+2),...,\Delta R_{emp}(S_n,L)$ will be greater than $\gamma_{K+1}\left(n,\delta,S_n\right),\gamma_{K+2}\left(n,\delta,S_n\right),...,\gamma_{L}\left(n,\delta,S_n\right)$, respectively. So, for every $k$, we use $ \gamma_k \left(n,\delta,S_n\right) $ as a threshold for $  {\Delta R}_{emp}(S_n,k)  $. Therefore, for a model index $k$, if $  {\Delta R}_{emp}(S_n,k)  $ is greater than $ \gamma_{k}\left(n,\delta,S_n\right) $, with a high probability, we can predict that model index $k$ is not greater than $K$, and based on Corollary \ref{KASSUMPTION}, $\mathcal{M}_{k-1}$ does not contain $f_{\pmb{\theta}^*}$.

Now, for every $k\in\{2,3,...,L\}$, we propose the following test to predict the absence of  $f_{\pmb{\theta}^*}$ in $\mathcal{M}_{k-1}$,
\begin{equation}\label{TESTEQ}
	T_k=\mathbb{I}\{{\Delta R}_{emp}(S_n,k)\geq\gamma_{k}\left(n,\delta,S_n\right)\},
\end{equation}
where $\mathbb{I}$ is an indicator function that if its argument is true, the output equals $1$, and if its argument is false, the output equals $0$. Based on Corollary \ref{NestedboundTH}, if $ T_k=1 $, we can have $R(\pmb{\theta}^*_{k-1})>R(\pmb{\theta}^*_k)$. As a consequence, we predict that $f_{\pmb{\theta}^*}$ is not an element of $\mathcal{M}_{k-1}$. Also, if $ T_k=0 $, we can guess that  $R(\pmb{\theta}^*_{k-1})=R(\pmb{\theta}^*_k)$. Therefore, for the sequentially nested model class $\{\mathcal{M}_k\}_{k=1}^L$, we can estimate $K$ as follows 
\begin{flalign}\label{KHATEQ}
	\hat{K}&=\max \{k\in\{1,2,...,L\}:T_k=1\},
\end{flalign}
 where in the NER model order selection, we let $T_1=1$ since, based on Corollary \ref{KASSUMPTION}, we have $K\geq 1$.  We use the $\max$ operator since it may be a model index $k< K$  where $  {\Delta R}_{emp}(S_n,k)  $ is less than $\gamma_{k}\left(n,\delta,S_n\right) $, and $T_k=0$. But, based on \eqref{Reverscound} and \eqref{DELTAEQ},  it is very likely that $T_K=1$, and for  $k>K$, it is very unlikely that $T_k=1$.
 In the following theorem, we calculate a lower bound on the probability of the correct estimate of the model order. 
\begin{theorem}\label{PCMST} 
Let $\{\mathcal{M}_k\}_{k=1}^L$ be a sequentially nested model family, $\mathcal{L}_k=\{l(.,\Ptheta_k):\Ptheta_k\in \Hc_k\}, k\in\{1,2,...,L\}$ be corresponding Glivenko-Cantelli classes of loss functions, and $\hat{K}$ be calculated based on \eqref{KHATEQ}. Therefore, for every $0<\delta<1/(L-K+1)$, there is a positive integer $N$, where for $n\geq N$, we have 
	\begin{equation}
 \mathbb{P}\{\hat{K}=K\}\geq 1-(L-K+1)\delta.
	\end{equation}
\end{theorem}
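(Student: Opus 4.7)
The plan is to rewrite $\{\hat K = K\}$ as a joint event on the tests $T_k$, then control the complement via a union bound, invoking the two parts of Corollary \ref{NestedboundTH} to estimate each constituent probability.

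First I would observe that, by the definition \eqref{KHATEQ}, $\hat K = K$ holds precisely on the event
\begin{equation*}
E \;=\; \{T_K = 1\} \;\cap\; \bigcap_{k = K+1}^{L}\{T_k = 0\}.
\end{equation*}
The values of $T_k$ for $k < K$ are immaterial, because $\hat K$ is defined as the largest index where the test fires; any spurious $T_k = 1$ with $k < K$ is overridden as soon as $T_K = 1$. Taking complements and applying a union bound yields
\begin{equation*}
\mathbb{P}\{\hat K \neq K\} \;\leq\; \mathbb{P}\{T_K = 0\} \;+\; \sum_{k = K+1}^{L} \mathbb{P}\{T_k = 1\}.
\end{equation*}

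Next I would bound each term using Corollary \ref{NestedboundTH}. For the first term, part 1 of the corollary provides an integer $N_K$ such that, for every $n \geq N_K$, the inequality \eqref{minusKboundEQ11} (which is precisely the event $\{T_K = 1\}$) holds with probability at least $1 - \delta$; equivalently $\mathbb{P}\{T_K = 0\} \leq \delta$. For each $k > K$, part 2 of the corollary provides an integer $N_k$ such that, for every $n \geq N_k$, inequality \eqref{empexcessERMINEQ11} holds with probability at least $1 - \delta$, which is precisely the bound $\mathbb{P}\{T_k = 1\} \leq \delta$ stated in \eqref{DELTAEQ}. Setting $N = \max\{N_K, N_{K+1}, \ldots, N_L\}$, all $L - K + 1$ estimates hold simultaneously for $n \geq N$, and summing gives $\mathbb{P}\{\hat K \neq K\} \leq (L - K + 1)\delta$, which is nontrivial under the stated hypothesis $\delta < 1/(L - K + 1)$.

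The only substantive point requiring care is that the two parts of Corollary \ref{NestedboundTH} apply to the correct indices: for $k = K$ we are in the regime $R(\Ptheta_{K-1}^*) > R(\Ptheta_K^*)$ (Lemma \ref{lem2}, elbowing property of the minimum-risk sequence), which activates part 1, whereas for $k > K$ we have $R(\Ptheta_{k-1}^*) = R(\Ptheta_k^*)$, which activates part 2. Rather than any technical obstruction, the main subtlety is that the union bound is loose: the events $\{T_k = 1\}$ for different $k$ need not be independent, so a sharper estimate would require analyzing their joint law under the empirical process. Nonetheless, the crude union bound already yields the desired conclusion, and because each $\gamma_k(n,\delta,S_n) = o(1)$, the integer $N$ can always be chosen finite.
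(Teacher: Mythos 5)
Your proposal is correct and follows essentially the same route as the paper's own proof: the identical decomposition of $\{\hat K = K\}$ into $\{T_K=1\}\cap\bigcap_{k=K+1}^{L}\{T_k=0\}$, a union bound on the complement, the two states of Corollary \ref{NestedboundTH} applied to index $K$ and to indices $k>K$ respectively, and $N=\max\{N_K,\ldots,N_L\}$. Your added remarks (why $k<K$ is immaterial under the $\max$ in \eqref{KHATEQ}, and that Lemma \ref{lem2} puts each index in the correct regime) only make explicit what the paper leaves implicit.
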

\begin{proof}
	Based on \eqref{KHATEQ}, $\hat{K}$ is an estimate of $K$ when $T_{\hat{K}}=1$ and $T_{\hat{K}+1}=...=T_L=0$. Therefore, we have 
	\begin{flalign}
		\mathbb{P}\{\hat{K}=K\}&=\mathbb{P}\{T_K=1\land T_{K+1}=0 \land ...\land T_{L}=0\}\nonumber \\&= 1- \mathbb{P}\{T_K= 0 \lor T_{K+1}=1 \lor ... \lor T_{L}=1\}\nonumber\\&\geq 1- \mathbb{P}\{T_K= 0 \} -\sum_{k=K+1}^L \mathbb{P}\{T_{k}=1 \}. \label{PKHATKEQ1}
	\end{flalign}
	Based on state 1 of Corollary \ref{NestedboundTH}, for every  $0<\delta<1/(L-K+1)$, there is a positive integer $N\geq N_K$ where for every $n\geq N$, $ \mathbb{P}\{T_K= 0 \} $ is calculated as follows
	\begin{flalign}
		\mathbb{P}\{T_K= 0 \} &=\mathbb{P}\{{\Delta R}_{emp}(S_n,K)\leq{\gamma_{K}\left(n,\delta,S_n\right)}\} \leq \delta. \label{TL1NEEQL3}
	\end{flalign}
	Also, based on state 2 of Corollary \ref{NestedboundTH}, for every $k\in\{K+1,K+2,...,L\}$, there is a positive integer $N_k$ for every $n\geq N_k$, and $0<\delta<1/(L-K+1)$, we have the following
	\begin{flalign}
		\mathbb{P}\{T_k= 1 \} &=\mathbb{P}\{{\Delta R}_{emp}(S_n,k)\geq{\gamma_{k}\left(n,\delta,S_n\right)}\} \leq {\delta}. \label{TLONEEQL2} 
	\end{flalign}
	Using \eqref{TL1NEEQL3} and \eqref{TLONEEQL2} in \eqref{PKHATKEQ1}, for every $0<\delta<1/(L-K+1)$, there is a positive integer $N= \max{(N_{K},N_{K+1},...,N_{L})}$, where for every $n\geq N$, we have
	\begin{flalign}
		\mathbb{P}\{\hat{K}=K\}&\geq1-\delta-\sum_{k=K+1}^L \delta = 1- (L-K+1)\delta. \label{PKHATKEQ3}
	\end{flalign}
\end{proof}
This theorem implies that for some small $\delta$, the NER method can correctly predict the model order with a high probability of at least $1-(L-K+1)\delta$.

 Note that, for $L\geq \frac{1}{\delta}+K-1$, the lower bound $1- (L-K+1)\delta$ will be non-positive. In this case, this bound will be trivial, and it does not have any information. This case will mostly happen in high-dimensional scenarios. Usually, in high-dimensional scenarios, as prior information, it is assumed that there is a maximum model order $k_{max}$, where $K<k_{max}\ll L$. In this case, we investigate only the first $k_{max}$ models of the sequentially nested model family $(\{\Mc_k\}_{k=1}^{k_{max}})$, and we use $k_{max}$ instead of $L$ in the lower bound of the correct decision. For a small $k_{max}\ll \frac{1}{\delta}+K-1$, the upper bound will be informative.
In the following corollary, using Theorem \ref{PCMST}, we show that the NER method is consistent.
\begin{col}[\textbf{Consistency of the NER Model Order Selection}]\label{consistencyCOL}
Let the conditions of Theorem \ref{PCMST} hold. Then, as $n\to\infty$, $\mathbb{P}\{\hat{K}=K\}=1$ where $\hat{K}$ is calculated based on \eqref{KHATEQ}.
\end{col}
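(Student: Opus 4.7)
The plan is to invoke Theorem \ref{PCMST} directly and then take a double limit in $\delta$ and $n$. Since the statement is a limiting version of the finite-sample bound $\mathbb{P}\{\hat K=K\}\geq 1-(L-K+1)\delta$ already established, the proof reduces to showing that the right-hand side can be pushed arbitrarily close to $1$ by first choosing $\delta$ small and then $n$ large enough.

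Concretely, I would fix an arbitrary $\varepsilon>0$ and select $\delta=\delta(\varepsilon)>0$ with $\delta<\min\{\varepsilon/(L-K+1),\,1/(L-K+1)\}$, so that $(L-K+1)\delta<\varepsilon$ and the admissibility condition of Theorem \ref{PCMST} is satisfied. Theorem \ref{PCMST} then supplies a positive integer $N=N(\delta)$ such that for every $n\geq N$ we have
\begin{equation*}
\mathbb{P}\{\hat K=K\}\geq 1-(L-K+1)\delta\geq 1-\varepsilon.
\end{equation*}
Consequently $\liminf_{n\to\infty}\mathbb{P}\{\hat K=K\}\geq 1-\varepsilon$, and since $\varepsilon>0$ was arbitrary and the probability is bounded above by $1$, $\lim_{n\to\infty}\mathbb{P}\{\hat K=K\}=1$.

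There is essentially no obstacle beyond bookkeeping: the Glivenko--Cantelli hypothesis was already consumed inside Corollary \ref{NestedboundTH} and Theorem \ref{PCMST}, so the $o(1)$ nature of the thresholds $\gamma_k(n,\delta,S_n)$ and the PAC bounds on SEER are used only implicitly here. The only subtlety worth flagging is that $N$ depends on $\delta$ (through the maximum $N=\max(N_K,\ldots,N_L)$ produced in the proof of Theorem \ref{PCMST}); this is harmless because we first fix $\delta$ and only then let $n\to\infty$, so the order of quantifiers is respected. If one preferred a single-sequence argument instead of the $\varepsilon$-$\delta$ form, one could alternatively pick a sequence $\delta_n\downarrow 0$ slowly enough that $n\geq N(\delta_n)$ eventually and conclude by the same inequality; the two arguments are equivalent and I would present the cleaner $\varepsilon$-$\delta$ version.
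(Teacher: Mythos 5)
Your proof is correct and follows essentially the same route as the paper's: both invoke the finite-sample bound of Theorem \ref{PCMST} and rescale the confidence parameter (the paper sets $\delta=(L-K+1)\delta'$, you choose $\delta<\varepsilon/(L-K+1)$), then pass to the limit in $n$ with $\delta$ held fixed. Your explicit handling of the quantifier order and the $\liminf$ bookkeeping is a slightly more careful write-up of the identical argument.
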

\begin{proof}
	Based on \eqref{PKHATKEQ3}, for every  $0<\delta^{\prime}<1/(L-K+1)$, there is positive integer $N$ where for every $n\geq N$, we have
 \begin{equation}\label{constemp1}
     \mathbb{P}\{\hat{K}=K\}\geq1-(L-K+1)\delta^{\prime}.
 \end{equation} 
Now, let $\delta = (L-K+1)\delta^{\prime}$. Then, for every $0<\delta\leq1$, there is a positive integer $N$ where for every $n\geq N$, we have
\begin{equation}\label{constemp12}
     \mathbb{P}\{\hat{K}\neq K\}<\delta.
 \end{equation} 
 Based on \eqref{constemp12}, the following holds
	\begin{flalign}
		\lim_{n\to \infty} \mathbb{P}\{\hat{K}\neq K\}=0. \label{deltatempEQ}
\end{flalign}
Then, using \eqref{deltatempEQ},  we have $\lim_{n\to \infty} \mathbb{P}\{\hat{K}= K\}=1$.
\end{proof}
In this section, we considered the properties of the sequences of minimum risks and minimum empirical risks and the bounds of the SEER. We proposed the NER method and used it to select the order of a sequentially nested model family. The NER method does not provide an additive penalty to minimum empirical risk. This method, using test \eqref{TESTEQ}, predicts the true model order. Also, we show that the NER model order selection method is consistent. In the next section, we will show how to use the properties of the nested models intelligently to achieve the most parsimonious model in general classes of models.
 \section{Model Sorting and Selection}\label{Section50}
In this section, we aim to propose a method to achieve the most parsimonious model containing the risk minimizer.  We will present a procedure to determine the useful parameter space of models. In this regard, we expand the parameter spaces of the model. To exploit the exclusive properties of nested models, the models will expand such that each model will be nested in another one, and we call it model sorting. Note that we can transform every arbitrary set of models $\{\bar{\Mc}_k\}_{k=1}^L$ into a sequentially nested model family.  We let  $\mathcal{M}_1=\bar{\mathcal{M}}_1$, and for every $1\leq k < L$,  $\mathcal{M}_{k+1}=\mathcal{M}_{k} \cup \bar{\mathcal{M}}_{k+1}$.  Based on Corollary \ref{SNgeneratingCOL}, the model family $\{\Mc_k\}_{k=1}^L$ is sequentially nested. Using this corollary, we can transform every arbitrary set of models into a sequentially nested model family. We call the process of transforming a set of models into a sequentially nested model family the \textit{nesting process}. 

In this section, we perform the model arranging intelligently such that the valuable parameters have a more prominent role in the model selection procedure. To arrange the models intelligently, first, we try to use the most valuable parameters and then eliminate the useless parameters in model selection. We call this procedure of intelligent arrangement of nested models and using the most valuable parameters, nested model sorting or in brief, \textit{model sorting}.
 
In the model sorting procedure, we need a criterion to determine whether a model expansion is useful or not. Note that we aim to achieve the risk minimizer predictor in the whole model parameter spaces called the global minimum risk predictor. Therefore, a model expansion that reduces the local minimum risk (minimum risk in special model space) helps us to get closer to the global minimum risk predictor.  We use a criterion to investigate the degradation of the minimum risk through the model expansion.  We say a model expansion is useful if the local minimum risk will be decreased. However, among all parameters, we may have many candidates for model expansion that decrease the minimum risk.  Since we want first to use the most valuable parameters, among these candidates, we first consider the model expansion that causes the maximum reduction in local minimum risk. It helps to arrange the models such that the most useful spaces are used first through the model expansion.  However, since the risk function is unavailable, we do not know whether the minimum risk is changed or not. For this reason, we use the test \eqref{TESTEQ} to determine the possible changes of minimum risk through the model expansion. 

In the following, we propose an iterative model sorting procedure to achieve the most parsimonious model containing a risk minimizer.  We call this procedure the \textit{sorted NER} (S-NER) model selection method. 

It is worth noting that the S-NER is based on the expansion of the models based on the test \eqref{TESTEQ} to approve or decline this expansion procedure. In other words, this algorithm examines whether the present and the expanded models can be considered successive models in a model sorting method. Now, we describe the S-NER model selection procedure. 

Let $\mathcal{M}_0=\{f_{\Ptheta_0},\Ptheta_0\in\Hc_0\}$ be a worst-case model that does not apply any prediction based on data where $\Ptheta_0$ is the model parameter, and $\Hc_0$ is the parameter space of $\Mc_0$. Also, let the empirical risk of the model $\Mc_0$ be $R_{emp}(S_n,\hat{\pmb{\theta}}_0)$. For example, in the regression problems, $\mathcal{M}_0$ is the model of zero function with the empirical risk, $R_{emp}(S_n,\hat{\pmb{\theta}}_0)=(1/n)\sum_{i=1}^n l(0,\textbf{y}_i)$, or in an array of sensors, this situation will happen when there is not any source in the environment. 

Now, we generate a sequentially nested model family by sorting the models, and simultaneously, we approve or decline the models using the test \eqref{TESTEQ}.  
 
Consider a parameters set $\Theta^{\prime}=\{ {\Ptheta}_1^{\prime},{\Ptheta}_2^{\prime},...,{\Ptheta}_L^{\prime} \}$ as a set of potential parameters, where for every $i\in \{1,2,...,L\}$, ${\pmb{\theta}}^{\prime}_i\in \mathcal{B}_i$,  and $\mathcal{B}_i$ is the parameter space of ${\Ptheta}_i^{\prime}$.  Let $\bar{\mathcal{I}}=\{1,2,...,L\}$ be the set of indices corresponding to $\Theta^{\prime}$. Also, for every combination set of $\{1,2,...,L\}$, e.g. $\mathcal{J}=\{j_1,j_2,...,j_k\}$ where $k,j_1,j_2,...,j_k\in\{1,2,...,L\}$,  ${\Ptheta}_{\Jc}^{\prime}$ is a vector composed by concatenating of every ${\Ptheta}_i^{\prime}$ that  $i\in\mathcal{J}$, i.e., ${\Ptheta}_{\mathcal{J}}^{\prime}=[{{\Ptheta}_{j_1}^{\prime}}^T,{{\Ptheta}_{j_2}^{\prime}}^T,...,{{\Ptheta}_{j_k}^{\prime}}^T]^T$. We would like to arrange a sequentially nested model family exploiting valuable parameters of the set $\Theta^{\prime}$.  We generate the candidate models $\{\mathcal{M}_j^{\prime}\}_{j\in\bar{\mathcal{I}}}$ for the first model of this sequentially nested model family, where for every index $j\in\bar{\mathcal{I}}$,    $\mathcal{M}_j^{\prime}=\Mc_0 \cup \{f_{{{\Ptheta}}_{j}^{\prime}}:{{{\Ptheta}}_{j}^{\prime}}\in \mathcal{B}_j \}$, where $\Ptheta_{j}^{\prime}$ is a parameter of the model $\mathcal{M}_j^{\prime}$.  Then, we assign model $\mathcal{M}_{\hat{j}}^{\prime}$  as the model with the least minimum empirical risk among all these candidate models. Indeed, $\hat{j}=\arg \min_{j\in\bar{\Ic}} R_{emp}(S_n,\hat{\Ptheta}_{j}^{\prime})$, where $\hat{\Ptheta}_{j}^{\prime}$ is the minimum empirical risk parameter in $\mathcal{M}_{j}^{\prime}$.  To verify the assigned model $\mathcal{M}_{\hat{j}}^{\prime}$,  we use the test \eqref{TESTEQ}  ($T_1=\mathbb{I}\{R_{emp}(S_n,\hat{\pmb{\theta}}_0)-R_{emp}(S_n,\hat{\Ptheta}_{\hat{j}}^{\prime})\geq \gamma_1(n,\delta,S_n)\}$). If $T_1=1$, $\mathcal{M}_{\hat{j}}^{\prime}$ will be approved as the first model, and if $T_1=0$, we decline $\mathcal{M}_{\hat{j}}^{\prime}$.  

If the assigned model is approved, we let $\mathcal{M}_1=\mathcal{M}_{\hat{j}}^{\prime}$ as the first model of the sorted sequentially nested model family.  Then,  its set of active parameter indices will be  $\Ic_1=\{\hat{j}\}$. Therefore, the candidate parameter indices for the next model will be updated by removing the ${\hat{j}}$ from the set of potential parameter indices $\bar{\Ic}$, i.e., $ \bar{\mathcal{I}}= \bar{\mathcal{I}}-\{\hat{j}\}$. Also, the parameter of the first model, $\Mc_1$, will be $\pmb{\theta}_1\equiv{{\Ptheta}}^{\prime}_{\Ic_1}$. 

In the next step, we compose the candidates of the second model in the sorted sequentially nested model family by extending the parameter space of the first approved model $\Mc_1$. We use the parameters corresponding to the updated potential indices $\bar{\mathcal{I}}$. Then, using the nesting process in Corollary \ref{SNgeneratingCOL}, the candidates of the second model in the sorted sequentially nested models, $\{\mathcal{M}_j^{\prime}\}_{j\in\bar{\mathcal{I}}}$, will be generated, where for every $j\in\bar{\mathcal{I}}$,  $\mathcal{M}_j^{\prime}=\Mc_{1}\cup \{f_{{{\Ptheta}}^{\prime}_{\mathcal{I}^{\prime}_j}}:{{{\Ptheta}}^{\prime}_{\mathcal{I}^{\prime}_j}}\in \prod_{r\in\mathcal{I}^{\prime}_j} \mathcal{B}_r \}$ and  ${\mathcal{I}_j^{\prime}}=\mathcal{I}_1\cup\{j\} $. Based on Corollary \ref{SNgeneratingCOL}, model $\Mc_1$ is nested in every candidate in the second step of our expansion. Similar to the former step, we find a model in the set $\{\mathcal{M}_j^{\prime}\}_{j\in\bar{\mathcal{I}}}$ with the least minimum empirical risk. Then, using the test \eqref{TESTEQ}, this model will be verified as the second model of the sorted sequentially nested models.  This procedure will be repeated until the test \eqref{TESTEQ} declines the verified model for the first time. 

If we call the last approved model by $\Mc_{\hat{K}}$, then we assume that $\Mc_{\hat{K}}$ is the most parsimonious model containing the global risk minimizer predictor. Briefly, the S-NER model selection method is presented in Algorithm \ref{NERALG}. In this algorithm, $\hat{K}$ and $\hat{\mathcal{I}}$ are the estimated model order and set of most parsimonious parameter indices, respectively. 

An important property of the S-NER method is verifying the expansion of the present model to the next model.  It provides verification to adopt the best candidate model for expansion from the present one. It is obvious that if the best candidate is declined in the verification procedure, then the other candidates can be ignored. This verification prevents the increase of the selected model complexity. 

 \begin{figure}
	\centering
	\adjincludegraphics[height=8.5cm,width=12cm,trim={{0 \width} {0.4 \height} {0.14\width} {0.14 \height}},clip]{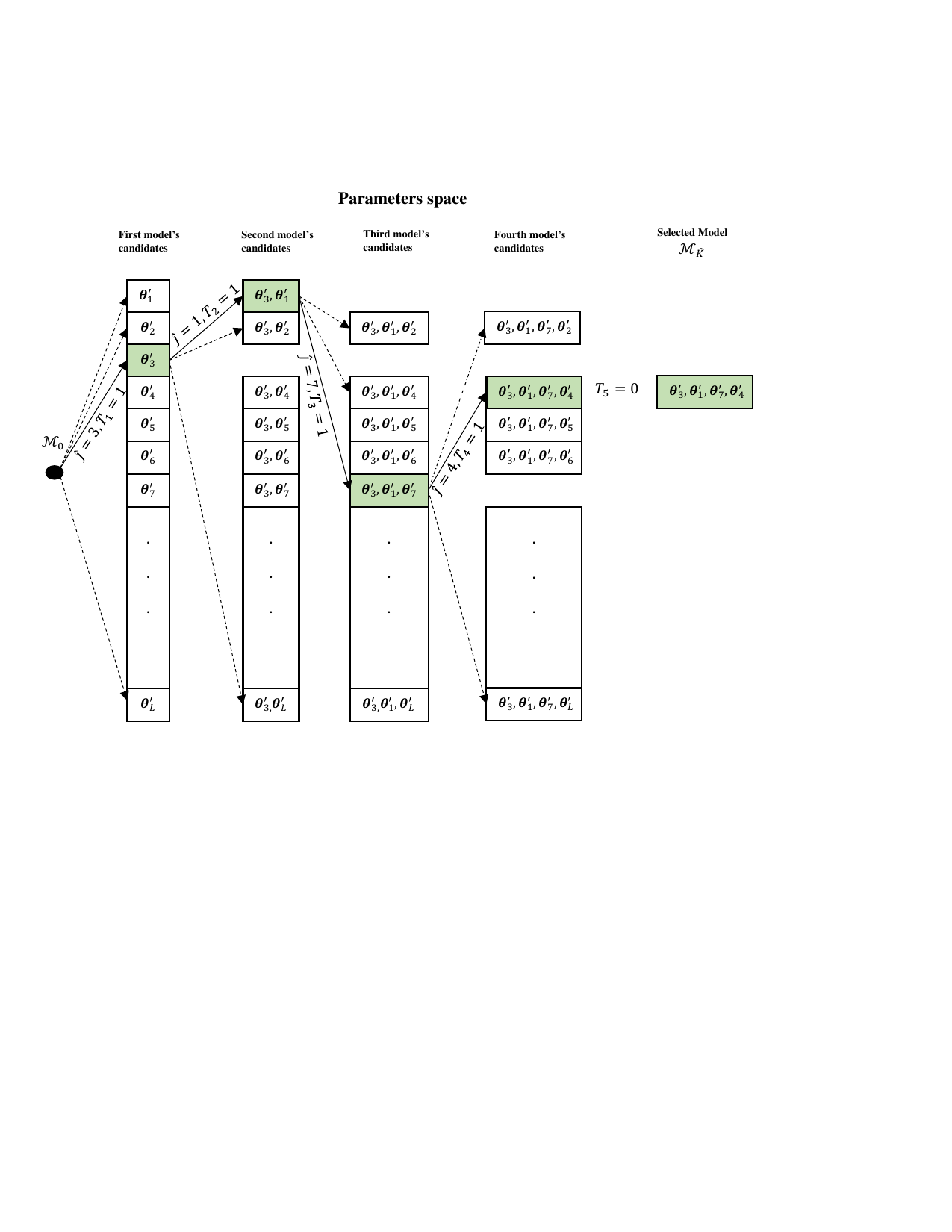}
	\caption{Example of parameter spaces in the S-NER model selection procedure. Dash lines refer to the candidates’ models, and continuous lines refer to the models with the least minimum empirical risk between candidates’ models. }\label{fig111}
\end{figure}

Fig. \ref{fig111} illustrates an example of the parameter space of models in a sorted sequentially nested model family in the S-NER model selection method. Based on this example, in the first iteration, the parameter $\Ptheta^{\prime}_3$ is determined as the most valuable parameter ($\hat{j}=3$),   $\mathcal{M}_3^{\prime}= \Mc_0 \cup \{f_{{{\Ptheta}}_{3}^{\prime}}:{{{\Ptheta}}_{3}^{\prime}}\in \mathcal{B}_3 \}$ is proposed as the first model, the test approved the proposed model ($T_1=1$), and $\Mc_1=\Mc_3^\prime$. In the second iteration, the parameter $\Ptheta^{\prime}_1$ is determined as the most valuable parameter ($\hat{j}=1$),  and  $\mathcal{M}_1^{\prime}= \Mc_1 \cup \{f_{[{\Ptheta}^{\prime}_3,\Ptheta^{\prime}_1]}:[{\Ptheta}^{\prime}_3,{\Ptheta}^{\prime}_1]\in  \mathcal{B}_3 \times \Bc_1 \}$  is proposed as the second model where ${\mathcal{I}^{\prime}_1}=\{ 3,1 \}$. This model is also approved by the test ($T_2=1$) and $\Mc_2=\mathcal{M}_1^{\prime}$. This procedure continues until the fifth iteration, where the test declines the proposed model ($T_5=0$), and the S-NER method chooses the last approved model, $\Mc_4$, as the selected model $\Mc_{\hat{K}}$. As seen, the procedure iterates only five iterations, and the selected model will be $\Mc_4$ while $L-5$ remaining parameters are eliminated automatically.

\begin{algorithm}[t!]
	\caption{S-NER model selection algorithm }\label{NERALG}
	\begin{algorithmic}
		\Require{$S_n=(\textbf{z}_1,\textbf{z}_2,...,\textbf{z}_n), \ \Mc_0=\{f_{\Ptheta_0}:\Ptheta_0\in\Hc_0\},$\\ $\quad \quad \quad \  \Theta^{\prime}=\{\Ptheta_1^{\prime}, \Ptheta_2^{\prime}, ...,\Ptheta_L^{\prime}\}$  }
		\Ensure  $k =0 , \ T_0=1, \ \bar{\mathcal{I}}=\{1,2,...,L\},\mathcal{I}_0=\emptyset $
		\While{$\{T_k=1\}\land \{k<L\}$}
  \State  $ \ k=k+1 $
        \For {$j\in \bar{\mathcal{I}}$}
        \State $\mathcal{I}^{\prime}_j=\mathcal{I}_{k-1}\cup \{j\}$
        \State $\mathcal{M}_j^{\prime} = \Mc_{k-1} \cup \{f_{{{\Ptheta}}^{\prime}_{\mathcal{I}^{\prime}_j}}:{{{\Ptheta}}^{\prime}_{\mathcal{I}^{\prime}_j}}\in \prod_{r\in\mathcal{I}^{\prime}_j} \mathcal{B}_{r}\}$
          \EndFor
        \State $\hat{j} =\arg \min_{j \in \bar{\mathcal{I}}} R_{emp}(S_n,\hat{{\Ptheta}}^{\prime}_{\mathcal{I}^{\prime}_j})$
        \State $T_k = \mathbb{I}\big\{R_{emp}(S_n,\pmb{\hat{\theta}}_{{k-1}})-R_{emp}(S_n,\hat{{\Ptheta}}^{\prime}_{\mathcal{I}^{\prime}_{\hat{j}}})\geq \gamma_k(n,\delta,S_n)\big\}$
        \If{$T_k=1$}
		\State $\mathcal{M}_k =\mathcal{M}_{\hat{j}}^{\prime}, \ {\mathcal{I}}_k={\mathcal{I}}_{_{k-1}}\cup\{\hat{j}\},  \ \pmb{\theta}_k\equiv{\Ptheta}^{\prime}_{\mathcal{I}_k}$
  \EndIf
          \State  $\bar{\mathcal{I}}=\bar{\mathcal{I}}-\{\hat{j}\}$
		\EndWhile
		\State $ \hat{K}=k, \mathcal{M}_{\hat{K}}=\mathcal{M}_{k}, \ \hat{\mathcal{I}}={\mathcal{I}}_{k}$
	\end{algorithmic}
\end{algorithm}	
Notice that, for $\hat{K}\leq L-1$, the S-NER algorithm needs $\hat{K}+1$ iterations, and for $\hat{K}= L$, it needs $L$ iterations. Then, for every $\hat{K}\in\{1,2,...,L\}$, the number of iterations in the S-NER algorithm is $\min(\hat{K}+1,L)$. However, all other model selection methods, like EFIC \cite{owrang2018model}, EBICR \cite{gohain2023robust},  MDL \cite{rissanen1978modeling}, and BIC \cite{schwarz1978estimating}, that use feature sorting algorithms need to calculate their criterion, e.g., information criterion, in $L$ iterations. 

 It will be shown in Subsection \ref{Section6.1}, for the first time in the literature on the linear regression problem, the S-NER method, using intelligently model arranging and without using any prior information, can outperform the accuracy of widespread feature sorting algorithms like OMP and LARS that aided with the prior knowledge of true model order. Note that in the linear regression problem, the accuracy of the aided OMP and the aided LARS algorithms are oracle and upper bounds for the accuracy of every model selection method that uses OMP or LARS as feature sorting \cite{gohain2023robust,owrang2018model,kallummil2018signal}.	

In Algorithm \ref{NERALG}, the target is to achieve the most parsimonious model containing a risk minimizer predictor.  In this regard, every model expansion approved by  Algorithm \ref{NERALG} must be useful, and every eliminated candidate model expansion must be useless. In other words, $\Mc_{\hat{K}}$ is the most parsimonious model, if for every $k\in\{1,2,...,\hat{K}\}$, through the model expansion from $\Mc_{k-1}$ into $\Mc_{k}$, the minimum risk will be decreased and in the $(\hat{K}+1)$-th iteration,  the minimum risk of all eliminated candidate models $\{\Mc\}_{j\in\bar{\Ic}}$ will be the same as the minimum risk of $\Mc_{\hat{K}}$. Let the probability of correctly arranging to select the most parsimonious model containing a risk minimizer be $\Pb\mathbb{C}_{S-NER}$.  In the following theorem, we compute a lower bound on  $\Pb\mathbb{C}_{S-NER} $. 
\begin{theorem}\label{PCMSTH}
Let the set of loss functions of the candidate models in every iteration of  Algorithm \ref{NERALG} be the Glivenko-Cantelli class of functions, and  $\hat{K}$ and $\mathcal{M}_{\hat{K}}$ are calculated based on Algorithm \ref{NERALG}. Therefore, for every  $0<\delta\leq 1/L$, there is a positive integer $N$, where for $n\geq N$, $\mathcal{M}_{\hat{K}}$ is the most parsimonious model containing a risk minimizer predictor with the probability of at least $1-L\delta$, i.e., 
\begin{equation} \label{PCDLBEQ11}
\Pb\mathbb{C}_{S-NER} \geq1-L\delta.
\end{equation}
\end{theorem}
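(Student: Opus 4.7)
The plan is to view each iteration of Algorithm \ref{NERALG} as a binary test and union-bound the probability that any of these tests yields the wrong decision. Let $K^{*}$ denote the order of the true most parsimonious model containing a risk minimizer along the sequence the algorithm builds. For the output $\Mc_{\hat{K}}$ to coincide with this model, every decision must be correct: $T_k = 1$ for $k = 1, \dots, K^{*}$ (each useful expansion is approved) and $T_{K^{*}+1} = 0$ (the first useless expansion is declined and the loop terminates), where the case $K^{*} = L$ only requires the first condition.

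Next I would bound each wrong-decision probability using Corollary \ref{NestedboundTH}. For each $k \leq K^{*}$, the expansion from $\Mc_{k-1}$ to the chosen $\Mc_{k}$ strictly decreases the minimum risk, so State 1 of Corollary \ref{NestedboundTH} produces an integer $N_k$ and the function $\gamma_k(n,\delta,S_n)$ such that $\Pb\{T_k = 0\} \leq \delta$ for all $n \geq N_k$. By the definition of $K^{*}$, every remaining candidate $\Mc_j^{\prime}$ at iteration $K^{*}+1$ has the same minimum risk as $\Mc_{K^{*}}$, so State 2 of the same corollary applied to the pair $(\Mc_{K^{*}}, \Mc_{\hat{j}}^{\prime})$ gives $\Pb\{T_{K^{*}+1} = 1\} \leq \delta$.

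Then a union bound over the at most $L$ iterations executed yields $\Pb\{\text{some } T_k \text{ is wrong}\} \leq (K^{*}+1)\delta \leq L\delta$, so $\Pb\mathbb{C}_{S-NER} \geq 1 - L\delta$. The hypothesis $\delta \leq 1/L$ is what keeps the bound informative, and $N = \max(N_1, \dots, N_{K^{*}+1})$ serves as the promised threshold beyond which the argument is valid.

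The main obstacle is that the pair of successive models at each iteration is itself data-dependent, since $\hat{j}$ is selected as the empirical-risk argmin over all current candidates. Justifying a pointwise application of the SEER bound to this random pair requires the Glivenko-Cantelli property to hold uniformly over the entire candidate loss class at each iteration, which is precisely the hypothesis of the theorem. This uniformity lifts the worst-case $o(1)$ concentration of Theorem \ref{o1learnboundCOL} to a bound valid for the adaptively chosen candidate and lets each per-iteration error probability be controlled by $\delta$, closing the union bound.
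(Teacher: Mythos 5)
Your proposal is correct and follows essentially the same route as the paper: both decompose the event of correct selection into per-iteration test decisions, control each wrong-decision probability by $\delta$ via the SEER PAC bounds (the paper invokes Theorem \ref{LboundTH} directly, you invoke its specialization in Corollary \ref{NestedboundTH}), and finish with a union bound giving $1-L\delta$ for $n\geq N=\max_k N_k$. The only cosmetic differences are bookkeeping — the paper compares all $L$ decisions $T_k$ against true values $T_k^*$ and uses total probability over $\{T_k^*=0\}$ versus $\{T_k^*=1\}$, while you stop the count at the first $K^*+1$ decisions — and your explicit flagging of the data-dependence of the chosen candidate pair, which the paper handles only implicitly through the same uniform Glivenko-Cantelli hypothesis.
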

\begin{proof}
	 In the S-NER algorithm, to achieve the most parsimonious model,  the minimum risk must be decreased through every step of model expansion. This algorithm has $\min{(\hat{K}+1, L)}$ iterations where, in every iteration, the test verifies the model expansion. As soon as the test rejects an expansion in the $(\hat{K}+1)$-th iteration, it automatically rejects all the remaining parameters. Indeed,  the S-NER algorithm makes $\hat{K}+1$ decisions $T_1, T_2,..., T_{\hat{K}+1}$ and automatically sets $T_{\hat{K}+1}=T_{\hat{K}+2}=...= T_{L}=0$. In this regard, for every $k\in\{1,2,...,L\}$, let $T_k^*\in \{0,1\}$ be the true value of $k$-th decision. Let $T_k^*= 1$ if and only if  $R(\pmb{\theta}^*_k)<R(\pmb{\theta}^*_{k-1})$, and $T_k^*= 0$ if and only if $R(\pmb{\theta}^*_k)=R(\pmb{\theta}^*_{k-1})$. Therefore, the probability of true model selection, $\mathbb{PC}_{S-NER}$, will be calculated as follows 
\begin{flalign}
    \mathbb{PC}_{S-NER}&=\mathbb{P}\{ T_1=T_1^*\land T_2=T_2^* \land...\land T_{L}=T_{L}^*  \} \nonumber \\ &= 1-\mathbb{P}\{ T_1\neq T_1^*\lor T_2 \neq T_2^* \lor ... \lor T_{L}\neq T_{L}^*  \}\nonumber \\ & \geq  1-\sum_{k=1}^{L} \mathbb{P}\{ T_k\neq T_k^* \}. \label{TEMP52eq11}
\end{flalign}
Now, we calculate the probability of fault decision in the $k$-th iteration, $\mathbb{P}\{ T_k\neq T_k^* \}$. In this regard,  based on Theorem \ref{LboundTH}, for every $k$ and $0<\delta\leq 1/L$, there are positive integers $N_{k}$ and $N_k^{\prime}$ such that for every $n\geq \max{(N_{k},N_{k}^{\prime})}$, the following hold
\begin{flalign}
    &\mathbb{P}\{ T_k\neq T_k^*|T_k^*=1 \}=\mathbb{P}\{ T_k=0|R(\pmb{\theta}^*_k)>R(\pmb{\theta}^*_{k-1}) \} \nonumber \\ 
    &= \mathbb{P}\{ \Delta R_{emp}(S_n,k)\leq \gamma_k(n,\delta,S_n)|R(\pmb{\theta}^*_k)>R(\pmb{\theta}^*_{k}) \} \leq \delta , \label{TEMP53eq}
\end{flalign}
\begin{flalign}
    &\mathbb{P}\{ T_k\neq T_k^*|T_k^*=0 \}=\mathbb{P}\{ T_k=1|R(\pmb{\theta}^*_k)=R(\pmb{\theta}^*_{k-1}) \} \nonumber \\ 
    &= \mathbb{P}\{ \Delta R_{emp}(S_n,k)\geq \gamma_k(n,\delta,S_n)|R(\pmb{\theta}^*_k)=R(\pmb{\theta}^*_{k}) \} \leq \delta. \label{TEMP54eq}
\end{flalign}
where \eqref{TEMP53eq} and \eqref{TEMP54eq} hold based on state 1 and state 2 of Theorem \ref{LboundTH}, respectively. Using \eqref{TEMP53eq} and \eqref{TEMP54eq}, for every $k$, $0<\delta\leq 1/L$, and $n\geq \max{(N_{k},N_{k}^{\prime})}$, the following hold
\begin{flalign}
    \mathbb{P}\{ T_k\neq T_k^* \}&=\mathbb{P}\{ T_k\neq T_k^*|T_k^*=0 \}p_k +\mathbb{P}\{ T_k\neq T_k^*|T_k^*=1 \}(1-p_k)\nonumber \\ &\leq \delta (1-p_k)+\delta p_k = \delta, \label{ErrEQ}
\end{flalign}
where $p_k\equiv  \mathbb{P}\{T_k^*=0\}= \mathbb{P}\{ R(\pmb{\theta}^*_k)=R(\pmb{\theta}^*_{k-1}) \}$ and  $\mathbb{P}\{T_k^*=1\}=\mathbb{P}\{ R(\pmb{\theta}^*_k)<R(\pmb{\theta}^*_{k-1}) \}=1-p_k$. Note that, since  $\mathcal{M}_{k-1}$ is nested in $\mathcal{M}_{k}$,   based on Lemma \ref{lem1}, $R(\pmb{\theta}^*_k)\leq  R(\pmb{\theta}^*_{k-1}) $.  Using  \eqref{ErrEQ} in \eqref{TEMP52eq11},  for every  $0<\delta\leq 1/L$, a positive integer $N=\max{(N_{1},N_{1}^{\prime},N_{2},N_{2}^{\prime},...,N_{L},N_{L}^{\prime})}$ exists such that for every $n\geq N$,  we have
\begin{flalign}
     \mathbb{PC}_{S-NER} \geq  1- L \delta.\label{PCDLBEQ}
\end{flalign}
\end{proof}
This theorem implies that for some small $\delta$, Algorithm \ref{NERALG}  can correctly select the most parsimonious model with a high probability of at least $1-L\delta$.

  Similar to the NER model order selection, for $L\geq \frac{1}{\delta}$, the value of lower bound $1- L\delta$ on the correct decision probability of the S-NER algorithm will be non-positive. So, the lower bound will be trivial and do not provide any useful information. As mentioned before, this mostly happens in high-dimensional scenarios, where, usually in this scenario, it is assumed that there is a maximum model order $k_{max}$, where $K<k_{max}\ll L$. In this case, we use $k_{max}$ instead of $L$, and for a small $k_{max} \ll \frac{1}{\delta}$, the lower bound $1-k_{max}\delta$ for the probability of a correct decision will be informative. In the following corollary, we show that the S-NER method is consistent.
\begin{col}[\textbf{Consistency of the S-NER Model Selection}]\label{consistencyCOL}
Let the conditions of Theorem \ref{PCMSTH} hold and the number of models, $L$, be a fixed positive. Then, the estimated model $\mathcal{M}_{\hat{K}}$  using Algorithm \ref{NERALG} is the most parsimonious model containing a risk minimizer in probability as $n\to \infty$ ($\lim_{n\to\infty} \mathbb{PC}_{S-NER}=1$).
\end{col}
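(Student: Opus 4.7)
The plan is to use Theorem \ref{PCMSTH} as a black box. That theorem already provides, for every $0<\delta\leq 1/L$ and all sample sizes $n$ larger than some threshold $N(\delta)$, the finite-sample bound $\mathbb{PC}_{S-NER}\geq 1-L\delta$. Since $L$ is held fixed in the hypothesis of the corollary, the right-hand side can be pushed arbitrarily close to $1$ simply by letting $\delta\to 0$, so the only task is to convert this family of finite-sample guarantees into an asymptotic statement. The argument closely mirrors the consistency proof of the NER method that follows Theorem \ref{PCMST}, with the factor $(L-K+1)$ there replaced by $L$ here.

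Concretely, I would fix an arbitrary $\varepsilon\in(0,1]$ and set $\delta_\varepsilon=\varepsilon/L$. Since $\delta_\varepsilon\in(0,1/L]$, Theorem \ref{PCMSTH} applies and yields a positive integer $N_\varepsilon$ such that, for every $n\geq N_\varepsilon$,
\begin{equation}
\mathbb{PC}_{S-NER}\geq 1-L\delta_\varepsilon = 1-\varepsilon,
\end{equation}
which is equivalent to $\mathbb{P}\{\mathcal{M}_{\hat{K}}\text{ is not the most parsimonious model containing a risk minimizer}\}\leq \varepsilon$. Because $\varepsilon$ was arbitrary, this shows $\limsup_{n\to\infty}\mathbb{P}\{\mathcal{M}_{\hat{K}}\text{ is not most parsimonious}\}\leq\varepsilon$ for every $\varepsilon>0$, so the limsup equals zero and therefore $\lim_{n\to\infty}\mathbb{PC}_{S-NER}=1$.

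There is no real obstacle: the corollary is essentially a repackaging of Theorem \ref{PCMSTH} into a convergence-in-probability statement. The one conceptual point worth flagging is the role of the assumption that $L$ is a fixed positive integer. Under this hypothesis the threshold $N_\varepsilon$ is the finite maximum $\max(N_1,N_1',\dots,N_L,N_L')$ of the $L$ sample-complexity thresholds arising from the $L$ invocations of Theorem \ref{LboundTH} inside the proof of Theorem \ref{PCMSTH}, and it depends only on $\varepsilon$ (through $\delta_\varepsilon$) and on $L$, not on $n$. If instead $L$ were allowed to grow with $n$, one would have to balance $L(n)\delta(n)\to 0$ against the growth of these underlying thresholds; the fixed-$L$ regime conveniently sidesteps that issue and makes the argument a one-line application of Theorem \ref{PCMSTH}.
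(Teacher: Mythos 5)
Your proposal is correct and follows essentially the same route as the paper's own proof: both arguments reparametrize the finite-sample guarantee of Theorem \ref{PCMSTH} (the paper sets $\delta = L\delta^{\prime}$, you set $\delta_\varepsilon = \varepsilon/L$, which is the same substitution) and then pass to the limit in $n$. Your additional remark about why fixed $L$ matters---keeping the threshold $\max(N_1,N_1^{\prime},\dots,N_L,N_L^{\prime})$ finite and independent of $n$---is a fair gloss on the paper's implicit use of that hypothesis, but it does not change the argument.
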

\begin{proof}
	Based on Theorem \ref{PCMSTH}, for every $0<\delta^{\prime}\leq 1/L$, there is a positive integer $N$ where for every   $n\geq N$, $ \mathbb{PC}_{S-NER} $ will be calculated as follows
   \begin{flalign}
    \mathbb{PC}_{S-NER}\geq 	 1-  L  \delta^{\prime}. \label{deltatempEQ111}
	\end{flalign}
Now, let $\delta=L\delta^{\prime}$.  Then, based on \eqref{deltatempEQ111}, for every $0<\delta\leq 1$, there is a positive integer $N$ where for every $n\geq N$,  $ \mathbb{PC}_{S-NER}\geq 1-\delta$. So, we have
         \begin{flalign}
  \lim_{n\to \infty}  \mathbb{PC}_{S-NER} 	=1. \label{deltatempEQ1}
	\end{flalign}
\end{proof}
This corollary implies the S-NER algorithm consistently selects the most parsimonious model containing risk minimizer.  

In this section, we have proposed the  S-NER method to select the most parsimonious model containing the risk minimizer predictor, and we show that the S-NER method is consistent. In the next section, we customize the S-NER model selection methods for the linear regression problem and present the SEER bound for this problem.
\section{Model selection in Linear Regression}\label{Sec505}
Linear regression is one of the most important models in machine learning. Determining a linear model based on the most parsimonious set of variables to estimate the model of data is a crucial problem. Numerous works are presented to fit the linear model on data \cite{tropp2006just,tibshirani1996regression,dai2009subspace}. In this section, we want to fit the best linear model on observation data.

Consider the set of observations $S_n=\{(\textbf{x}_1,y_1),(\textbf{x}_2,y_2),...,(\textbf{x}_n,y_n)\}$ where for every $i\in \{1,2,...,n\}$,  $\textbf{x}_i$ is a vector of $L$ independent features or co-variates, and the scalar $y_i$ is  the corresponding response. Our target is to choose the best linear model to fit the observation data $S_n$.  In this regard, we consider the linear model as follows
\begin{equation}
    y=\textbf{x}^T\Ptheta^{\prime}+\varepsilon, 
\end{equation}
where $\Ptheta^{\prime}=[\theta_1^{\prime},\theta_2^{\prime},...,\theta_L^{\prime}]$ is an unknown $L$-dimensional parameter vector, and  $\varepsilon$ is a zero mean Gaussian random variable with the variance of $\sigma^2$, i.e.,  $\Nc(0,\sigma^2)$.  

We want to find the unknown parameter $\Ptheta^{\prime}$ with the minimum risk $R(\Ptheta^{\prime})=\mathbb{E}\{l(\textbf{x}^T\Ptheta^{\prime},y)\}$. However, some features may not be valuable. Then, we attempt to determine $\Ptheta^{\prime}$ such that the linear model has the least complexity using just valuable features in the linear model. Based on these $L$ features, we have $2^L$ combinations for the linear model, and the evaluation of the $2^L$ linear models is exhaustive, especially for the high-dimensional scenarios where the number of features, $L$,  is enormous.

To tackle such a huge complexity, many model selection methods like EBIC \cite{chen2008extended}, EBICR \cite{gohain2023robust}, and EFIC \cite{owrang2018model} try to select the most valuable features. They use algorithms like OMP or LARS  \cite{weisberg2005applied,efron2004least} to sort the features, and then, based on criteria, they estimate the order of the model. So, the performance of these methods depends on the OMP or LARS features sorting algorithm, and the algorithms have no evaluations of the features after the feature sorting process. Any mistake in feature sorting causes the selection of redundant features or the removal of valuable features, which affects the model selection. 

We are confronted with the problem of finding a way to prevent error propagation from these feature sorting algorithms, such as OMP or LARS, in the model selection methods. 

In the next subsection, we use the S-NER algorithm for the linear regression problem to select the most valuable features that preserve less complexity as much as possible. 

\subsection{S-NER model selection in the Linear Regression}\label{Sec51}
Consider  the set of supervised observations $S_n=\{(\textbf{x}_1,y_1),(\textbf{x}_2,y_2),...,(\textbf{x}_n,y_n)\}$ where for every $i\in \{1,2,...,n\}$,  $\textbf{x}_i$ is a vector of $L$ independent features or co-variates, and the scalar $y_i$ is the corresponding response. Let $\Theta^{\prime}=\{{\theta}_1^{\prime},{\theta}^{\prime}_2,...,{\theta}^{\prime}_L\}$ be the set of all parameters corresponding to these $L$ features where for every $j\in\{1,2,...,L\}$,  ${{\theta}}^{\prime}_j \in \mathbb{R}$. Now, for every $l\in\{1,2,...,2^L-1\}$, we consider the linear model $\mathcal{M}_l^{\prime}=\{({\textbf{x}^{\Ic_l^{\prime}}})^T {\pmb{\theta}}_{\Ic_l^{\prime}}^{\prime}:{\pmb{\theta}}_{\Ic_l^{\prime}}^{\prime}\in \mathbb{R}^{k_l}\}$, where $\Ic_l^{\prime}$ is any non-empty subset of the set $\{1,2,...,L\}$, $k_l=card(\Ic_l^{\prime})$,  and ${\textbf{x}^{\Ic_l^{\prime}}}$ is a vector consisting of elements of $\textbf{x}$ that are indexed by $\Ic_l^{\prime}$. 

In this model class, we consider the parameters and the observations to be finite real variables. So, based on the weak law of large numbers \cite{loeve1977elementary}, for every $l\in \{1, 2, ..., L\}$ and every parameter $\Ptheta^{\prime}_{\Ic_l^{\prime}}$, the sample mean of the least square loss function  converges to its expected value  as $n\to \infty$. Therefore, based on Definition \ref{GCdef}, in the linear regression problem, for every $l$, the set of least square loss functions corresponding to each model is Glivenko-Cantelli class of functions. 

 In the following, we present the S-NER algorithm to use in the linear regression problem.  The S-NER generates a sorted sequentially nested linear model family using the most valuable parameters of $\Theta^{\prime}$ and rejects the useless parameters. In this regard, based on Algorithm \ref{NERALG}, we let  $\Mc_0=\{\textbf{x}^T\Ptheta_0: \Ptheta_0=\textbf{0}_{L\times 1}\}$ be the model without any prediction. Since this model does not consider any features for predicting the response, the set of active parameters of $\Mc_0$ is $\Ic_0=\emptyset$. Using the least square function, $l(\textbf{x}^T\Ptheta_0,y)=(\textbf{x}^T\Ptheta_0-y)^2$, as the loss function, the empirical risk of $\Ptheta_0$ will be $R_{emp}(S_n,{\Ptheta}_0)=\frac{1}{n}\sum_{i=1}^{n} (\textbf{x}_i^T\Ptheta_0-y_i)^2=\frac{\|\textbf{y}\|_2^2}{n}$ ($\|.\|$ is the Euclidean norm).  In this model, the parameter with minimum empirical risk is $\hat{\Ptheta}_0=\Ptheta_0=\textbf{0}_{L\times 1}$ and $R_{emp}(S_n,\hat{\Ptheta}_0)=R_{emp}(S_n,{\Ptheta}_0)=\frac{\|\textbf{y}\|_2^2}{n}$.

Based on the S-NER method, we expand the model and verify the validity of this model expansion using the test \eqref{TESTEQ}. In other words, this algorithm investigates whether the present and the expanded models can be considered successive models in a model sorting method. In the test \eqref{TESTEQ}, for every $k\in\{1,2,...,L\}$, the threshold $\gamma_k(n,\delta,S_n)$ is an $o(1)$ bound on the SEER. In the next subsection, we will calculate an $o(1)$ bound $\gamma_k^R(n,\delta,S_n)$ on the SEER exclusively for the linear regression problem. In this subsection, $\gamma_k^R(n,\delta,S_n)$ is used as the threshold of the test \eqref{TESTEQ}.
 
Now, to expand the model from $\Mc_0$ (or $\Ic_0=\emptyset$) to a new model with one feature, we have $L$ candidate models $\{\mathcal{M}_j^{\prime}\}_{j=1}^L$. This is the first model expansion to make the sorted sequentially nested models. Therefore, for every $j\in \{1,2,...,L\}$, $\mathcal{M}_j^{\prime}=\{[\textbf{x}]_j {{\theta}}^{\prime}_{j}:{\theta}^{\prime}_{j}\in \mathbb{R}\}$, and $[\textbf{x}]_j$ is the $j$-th element of vector $\textbf{x}$. Using the least square function, $l([\textbf{x}]_j\theta_j^{\prime},y)=([\textbf{x}]_j\theta_j^{\prime}-y)^2$, as the loss function, for every $j$, the empirical risk of $\mathcal{M}_j^{\prime}$ will be $R_{emp}(S_n,{\theta}_j^{\prime})=\frac{1}{n}\sum_{i=1}^{n} ([\textbf{x}_i]_j\theta_j^{\prime}-y_i)^2$. Then, the algorithm assigns model $\mathcal{M}_{\hat{j}}^{\prime}$ as the model with the least minimum empirical risk among all candidate models. Indeed $\hat{j}=\arg\min_{j\in\{1,2,...,L\}} R_{emp}(S_n,\hat{\theta}_j^{\prime})$, where  $\hat{\theta}_j^{\prime}$ is the empirical risk minimizer parameter in $\Mc_j^{\prime}$.  Based on Moore-Penrose solution $\hat{\theta}_j^{\prime}=({\textbf{X}^{\{j\}}})^{\dagger} \textbf{y}$ where $({\textbf{X}^{\{j\}}})^{\dagger}=\big( {\textbf{X}^{\{j\}}}({\textbf{X}^{\{j\}}})^T\big)^{-1}{\textbf{X}^{\{j\}}}$  and ${\textbf{X}^{\{j\}}}$ is the $j$-th row of matrix $\textbf{X}=[\textbf{x}_1,\textbf{x}_2,....\textbf{x}_n]$. Then, we verify the assigned model $\mathcal{M}_{\hat{j}}^{\prime}$  using the test \eqref{TESTEQ}  ($T_1=\mathbb{I}\{R_{emp}(S_n,\hat{\pmb{\theta}}_0)-R_{emp}(S_n,\hat{\theta}_{\hat{j}}^{\prime})\geq \gamma_1^R(n,\delta,S_n)\}$). If $T_1=1$, $\mathcal{M}_{\hat{j}}^{\prime}$ will be approved as the first model, and if $T_1=0$, we decline $\mathcal{M}_{\hat{j}}^{\prime}$.  
If the assigned model is approved, we let $\mathcal{M}_1=\mathcal{M}_{\hat{j}}^{\prime}$ as the first model of the sorted sequentially nested linear model family.  Then,  the set of active parameter indices of $\Mc_1$ is  $\Ic_1=\{\hat{j}\}$, and the parameter of the model $\Mc_1$ will be ${\theta}_1\equiv{{\theta}}^{\prime}_{\hat{j}}$. 

In the next step, we compose the candidates of the second model in the sorted sequentially nested linear model family by extending the parameter space of the first approved model $\Mc_1$. We use the parameters corresponding to the updated potential indices $\{1,2,...,L\}-\Ic_1$. Then, for every $j\in\{1,2,...,L\}-\Ic_1$,  the $j$-th candidate of the second model in the sorted sequentially nested linear models will be  $\mathcal{M}_j^{\prime}=\{(\textbf{x}^{\Ic_j^{\prime}})^T {{{\Ptheta}}^{\prime}_{\mathcal{I}^{\prime}_j}}:{{{\Ptheta}}^{\prime}_{\mathcal{I}^{\prime}_j}}\in \mathbb{R}^2 \}$ and  ${\mathcal{I}_j^{\prime}}=\mathcal{I}_1\cup\{j\}$.  Similar to the previous step, we find a model with the least minimum empirical risk among candidate models $\mathcal{M}_j^{\prime}, j\in \{1,2,...,L\}-\Ic_1$. Note that, for every $j\in\{1,2,...,L\}-\Ic_1$, using the least square loss function, the minimum empirical risk of the $j$-th model will be $R_{emp}(S_n,\hat{\Ptheta}_{\Ic_{j}^{\prime}}^{\prime})=\frac{\|\textbf{y}-({\textbf{X}^{\Ic_j^{\prime}}})^{T}{\hat{\pmb{\theta}}}_{\Ic_j^{\prime}}^{\prime}\|_2^2}{n}$.  Based on the Moore-Penrose solution, ${\hat{\pmb{\theta}}}_{\Ic_{j}^{\prime}}^{\prime}=({\textbf{X}^{\Ic_{j}^{\prime}}})^{\dagger} \textbf{y}$  where $({\textbf{X}^{\Ic_{j}^{\prime}}})^{\dagger}=\big( {\textbf{X}^{\Ic_{j}^{\prime}}}({\textbf{X}^{\Ic_{j}^{\prime}}})^T\big)^{-1}{\textbf{X}^{\Ic_{j}^{\prime}}}$ and ${\textbf{X}}^{\Ic_{j}^{\prime}}$ denotes the sub-matrix of $\textbf{X}$ form using the rows indexed by $\Ic_{j}^{\prime}$. 

  Then, we verify the assigned model $\mathcal{M}_{\hat{j}}^{\prime}$  using the test \eqref{TESTEQ}  ($T_2=\mathbb{I}\{R_{emp}(S_n,\hat{\pmb{\theta}}_1)-R_{emp}(S_n,\hat{\Ptheta}_{\Ic_{\hat{j}}^{\prime}}^{\prime})\geq \gamma_2^R(n,\delta,S_n)\}$). If $T_2=1$, the assigned model is approved, and we have $\mathcal{M}_2=\mathcal{M}_{\hat{j}}^{\prime}$    $(\mathcal{I}_2=\mathcal{I}_1\cup\{\hat{j}\} )$ as the second model of the sorted sequentially nested linear model family. The parameter of the model $\Mc_2$ will be ${\Ptheta}_2\equiv{{\Ptheta}}^{\prime}_{\Ic_2}$.  This procedure will be repeated until the test declines the assigned model for the first time. 

Based on Corollary  \ref{KASSUMPTION}, we have shown that the S-NER terminates at a step $\hat{K}$ where $0<\hat{K}\leq L$. We call the last approved model by $\Mc_{\hat{K}}$, then we assume that $\Mc_{\hat{K}}$ is the most parsimonious model containing the global risk minimizer predictor and $\hat{\mathcal{I}}=\mathcal{I}_{\hat{K}}$ is the S-NER  estimation of the most parsimonious set of features to predict the response.  In the following section, we present the SEER bound exclusively for linear regression problems.
\subsection{SEER PAC Bounds in Linear Regression}\label{Sec5}
Consider  the set of supervised observations $S_n=\{(\textbf{x}_1,y_1),(\textbf{x}_2,y_2),...,(\textbf{x}_n,y_n)\}$ where for every $i\in \{1,2,...,n\}$,  $\textbf{x}_i$ is a vector of $L$ independent features or co-variates, and the scalar $y_i$ is   the corresponding response. Assume that the observed data are generated based on a linear model as follows
\begin{flalign}\label{datagenEQ}
	\textbf{y}=\textbf{X}^T{\pmb{\theta}^*}+{\pmb{\varepsilon}},
\end{flalign}
where $\pmb{\varepsilon}$ is an i.i.d. zero mean Gaussian  noise vector $(\mathcal{N}(\textbf{0},\sigma^2 \textbf{I}_n))$ independent of features, $\textbf{y}=[y_1,y_2,..,y_n]^T$, and $\textbf{X}=[\textbf{x}_1,\textbf{x}_2,...,\textbf{x}_n]$ is a full rank matrix. $\pmb{\theta}^*$ is an L-dimensional sparse vector with support set $\mathcal{S}$, where   $[\pmb{\theta}^*]_j\neq 0, j\in \mathcal{S}$ and $[\pmb{\theta}^*]_{j} = 0,  j\notin \mathcal{S}$. Also, we assume the number of elements of $\mathcal{S}$,  $card(\mathcal{S})=K \ (1\leq K\leq L)$. 

Based on Corollary  \ref{KASSUMPTION}, in every sequentially nested model class, there is a model index $ K \in\{1,2,...,L\}$ such that model $\Mc_K$ contains a risk minimizer and there is not any risk minimizer in  $\Mc_{K-1}$. In the S-NER algorithm, $\hat{K}$  is the estimated order of the sorted models as the most parsimonious model containing risk minimizer. In the S-NER method, the number of models in the sorted sequentially nested models will be $\min{(\hat{K}+1,L)}$. We let $\bar{L}=\min{(\hat{K}+1,L)}$. Then, for every $k\in\{1,2,...,\bar{L}\}$,  the sorted sequentially nested linear model class will be $\{\Mc_k\}_{k=1}^{\bar{L}}$, where $\mathcal{M}_k=\{({\textbf{x}^{\Ic_k}})^T {\pmb{\theta}}_k:{\pmb{\theta}}_k\in \mathbb{R}^k\}$. In these models,  $\Ic_k$ is a unique subset of the set  $\{1,2,...,{L}\}$  with cardinality $k$ ($\Ic_k \subseteq \{1,2,...,{L}\}$),  and ${\textbf{x}^{\Ic_k}}$  is a vector consisting of elements of $\textbf{x}$ that are indexed by $\Ic_k$. In this sorted sequentially nested linear model class, for every $k\in\{2,3,...,\bar{L}\}$, $\Ic_{k-1}\subseteq \Ic_{k}$.  In addition, we let $\Mc_0=\{\textbf{x}^T{\Ptheta_0}:\Ptheta_0=\textbf{0}\}$ be a model without any prediction. Since this model does not use any feature in the linear model, we let  $\Ic_0=\emptyset$.

In this subsection, using the least square function, $ l({\textbf{x}^{\Ic_k}},y,\pmb{\theta}_k)=(y-({\textbf{x}^{\Ic_k}})^T \pmb{\theta}_k )^2$, as the loss function and based on \eqref{empEQ}, $R_{emp}(S_n,\pmb{\theta}_k)=\frac{\| \textbf{y}-({\textbf{X}^{\Ic_k}})^T {\pmb{\theta}}_k\|_2^2}{n}$, where ${\textbf{X}^{\Ic_k}}$ denotes the sub-matrix of $\textbf{X}$ formed using the rows indexed by $\Ic_k$ and $\|.\|$ is the Euclidean norm. Based on the Moore-Penrose solution, ${\hat{\pmb{\theta}}}_k=({\textbf{X}^{\Ic_k}})^{\dagger} \textbf{y}$ is the $k$-th model's empirical risk minimizer parameter, where $({\textbf{X}^{\Ic_k}})^{\dagger}=\big( {\textbf{X}^{\Ic_k}}({\textbf{X}^{\Ic_k}})^T\big)^{-1}{\textbf{X}^{\Ic_k}}$.  Hence, for every $k\in\{1,2,...,\bar{L}\}$, the minimum empirical risk of the $k$-th model, $R_{emp}(S_n,\hat{\pmb{\theta}}_k)$, will be calculated as follows
\begin{equation}\label{linRempEQ}
	R_{emp}(S_n,\hat{\pmb{\theta}}_k)=\frac{\| \textbf{y}-({\textbf{X}^{\Ic_k}})^T{\hat{\pmb{\theta}}}_k \|_2^2}{n}=\frac{\| (\textbf{I}_n-\textbf{P}_k)\textbf{y} \|_2^2}{n},
\end{equation}
where $\textbf{P}_k=({\textbf{X}^{\Ic_{k}}})^T({\textbf{X}^{\Ic_{k}}})^{\dagger}$. $\textbf{P}_k$ is the orthogonal projection matrix onto $span({\textbf{X}^{\Ic_k}})$, where $span({\textbf{X}^{\Ic_k}})$ is the row space of ${\textbf{X}^{\Ic_k}}$. Indeed, the row space of ${\textbf{X}^{\Ic_k}}$ is a set of all possible linear combinations of rows of ${\textbf{X}^{\Ic_k}}$. Also,  $\textbf{I}_n-\textbf{P}_k$ is the orthogonal projection onto  $span({\textbf{X}^{\Ic_k}})^{\perp}$, where $span({\textbf{X}^{\Ic_k}})^{\perp}$ is the orthogonal complement row space of ${\textbf{X}^{\Ic_k}}$.  Moreover,  using the least square function, the empirical risk of model $\Mc_0$ will be $R_{emp}(S_n,\hat{\Ptheta}_0)=R_{emp}(S_n,{\Ptheta}_0)={\|\textbf{y}\|_2^2}/{n}$. In this model, the parameter with minimum empirical risk is $\hat{\Ptheta}_0=\Ptheta_0=\textbf{0}_{L\times 1}$. Therefore, we let $\textbf{P}_0 = \textbf{0}_{n\times n}$, where $\textbf{0}_{n\times n}$ is an all-zero $n\times n$  matrix. Based on Appendix B of \cite{kallummil2018signal}, for every $k$, $(\textbf{I}_n-\textbf{P}_k)(\textbf{P}_k-\textbf{P}_{k-1})=\textbf{0}_{n\times n}$, since $\textbf{I}_n-\textbf{P}_k$ is the orthogonal projection matrix onto $span(\textbf{X}^{\Ic_k})^{\perp}$ and $\textbf{P}_k-\textbf{P}_{k-1}$ is the orthogonal projection matrix onto $span(\textbf{X}^{\Ic_k}) \cap span(\textbf{X}^{\Ic_{k-1}})^{\perp}$. As a consequence, for every $k$, $R_{emp}(S_n,\hat{\pmb{\theta}}_{k-1})$ is calculated as follows
 \begin{flalign} 
      R_{emp}(S_n,\hat{\pmb{\theta}}_{k-1})&= \frac{\| (\textbf{I}_n-\textbf{P}_{k-1})\textbf{y} \|_2^2}{n} \nonumber \\ &=\frac{\|(\textbf{I}_n - \textbf{P}_{k})\textbf{y} + (\textbf{P}_{k} - \textbf{P}_{k-1})\textbf{y}\|_2^2}{n}   \nonumber \\   &= \frac{\| (\textbf{I}_n-\textbf{P}_{k})\textbf{y} \|_2^2}{n}+\frac{\| (\textbf{P}_{k}-\textbf{P}_{k-1})\textbf{y} \|_2^2}{n}.
 \end{flalign}
Then, the SEER will be expressed as follows
	\begin{flalign}
		R_{emp}(S_n,\hat{\pmb{\theta}}_{k-1})-R_{emp}(S_n,\hat{\pmb{\theta}}_k)&=\frac{\| (\textbf{I}_n-\textbf{P}_{k})\textbf{y} \|_2^2}{n}+\frac{\| (\textbf{P}_{k}-\textbf{P}_{k-1})\textbf{y} \|_2^2}{n} -\frac{\| (\textbf{I}_n-\textbf{P}_{k})\textbf{y} \|_2^2}{n}\nonumber\\&=\frac{\| (\textbf{P}_{k}-\textbf{P}_{k-1})\textbf{y} \|_2^2}{n}. \label{linREMPEQ}
	\end{flalign}
Now, for every $k$, let SEER be equal with $\frac{\sigma^2}{n} U_k$, where $U_k \equiv \frac{\| (\textbf{P}_{k}-\textbf{P}_{k-1})\textbf{y} \|_2^2}{\sigma^2}$, i.e., $R_{emp}(S_n,\hat{\pmb{\theta}}_{k-1})-R_{emp}(S_n,\hat{\pmb{\theta}}_k)=\frac{\sigma^2}{n} U_k$. According to Appendix A in \cite{gohain2023robust},  $U_k$ has non-central chi-2 distribution $\chi^2_1(\zeta_{U_k})$ where $\zeta_{U_k}=\frac{\| (\textbf{P}_{k}-\textbf{P}_{k-1})\mathbb{\textbf{X}^T \pmb{\theta}^*} \|_2^2}{\sigma^2}$. If $\zeta_{U_k}=0$, $U_k$ has a central chi-2 distribution, $\chi^2_1$. Then, in the following theorem,  we investigate the statistical behavior of the SEER in linear regression. 
\begin{theorem}\label{LinearLowerboundTH}
	Let $F_{U_k}(.,\zeta_{{U_k}})$ be the commutative distribution function (CDF) of a non-central $\chi^2_1(\zeta{{U_k}})$ random variable with non-centrality parameter $ \zeta_{U_k}=\frac{\| (\textbf{P}_{k}-\textbf{P}_{k-1})\mathbb{\textbf{X}^T \pmb{\theta}^*} \|_2^2}{\sigma^2}$, $F_{1}(.)$ be the  CDF function of a central $\chi^2_1$ random variable, and $F_{1}^{-1}(.)$ be the inverse CDF function of a central $\chi^2_1$ random variable. Then,  for every $k\in\{1,2,...,\bar{L}\}$, 
 \begin{itemize}
 \item{if $ \mathcal{S} \subseteq \Ic_{k-1}$, for every $0<\delta\leq 1$,  $n\geq1$, and constant $c>0$, we have
	\begin{flalign}
		\mathbb{P}\Big\{R_{emp}(S_n,\hat{\pmb{\theta}}_{k-1})-R_{emp}(S_n,\hat{\pmb{\theta}}_k)\leq c\frac{\sigma^2}{n} F_1^{-1}(1-\delta) \Big\}=F_1(cF_1^{-1}(1-\delta)),  \label{LinRegUpboundEQ}
	\end{flalign}}
 \item {and if $\Ic_{k-1} \subset \mathcal{S} $, for every $0<\delta\leq 1$, $n\geq 1$, and constant $c>0$, the following holds
	\begin{flalign}
		\mathbb{P}\Big\{R_{emp}(S_n,\hat{\pmb{\theta}}_{k-1})-R_{emp}(S_n,\hat{\pmb{\theta}}_k)\geq c\frac{\sigma^2}{n} F_1^{-1}(1-\delta) \Big\}=  1-F_{U_k}(cF_1^{-1}(1-\delta),\zeta_{U_k}).\label{LinRegLowboundEQ}
	\end{flalign}}
 \end{itemize}
\end{theorem}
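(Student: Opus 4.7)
The plan is to reduce both claims to straightforward CDF evaluations of the variable $U_k = \|(\textbf{P}_k - \textbf{P}_{k-1})\textbf{y}\|_2^2/\sigma^2$. The paragraph preceding the theorem has already identified $U_k \sim \chi^2_1(\zeta_{U_k})$ with $\zeta_{U_k} = \|(\textbf{P}_k - \textbf{P}_{k-1})\textbf{X}^T\pmb{\theta}^*\|_2^2/\sigma^2$, and, via \eqref{linREMPEQ}, that the SEER equals $\frac{\sigma^2}{n} U_k$. Consequently both probabilities in \eqref{LinRegUpboundEQ}--\eqref{LinRegLowboundEQ} are CDF evaluations of $U_k$ at the threshold $c F_1^{-1}(1-\delta)$, and the only real work is to decide whether $\zeta_{U_k}$ vanishes.

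For \eqref{LinRegUpboundEQ}, where $\mathcal{S} \subseteq \Ic_{k-1}$, my key step is a short projection argument. Because $[\pmb{\theta}^*]_j = 0$ for $j \notin \mathcal{S}$, the vector $\textbf{X}^T\pmb{\theta}^* = \sum_{j \in \mathcal{S}} [\pmb{\theta}^*]_j\, (\textbf{X}^T)_{:,j}$ is a linear combination of only the rows of $\textbf{X}$ indexed by $\mathcal{S}$, and since $\mathcal{S} \subseteq \Ic_{k-1} \subseteq \Ic_k$, it lies in $\mathrm{span}(\textbf{X}^{\Ic_{k-1}}) \subseteq \mathrm{span}(\textbf{X}^{\Ic_k})$. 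Both $\textbf{P}_{k-1}$ and $\textbf{P}_k$ therefore fix $\textbf{X}^T\pmb{\theta}^*$, giving $(\textbf{P}_k - \textbf{P}_{k-1})\textbf{X}^T\pmb{\theta}^* = \textbf{0}$, hence $\zeta_{U_k} = 0$ and $U_k \sim \chi^2_1$. The event in \eqref{LinRegUpboundEQ} rewrites as $\{U_k \leq c F_1^{-1}(1-\delta)\}$, whose probability equals $F_1(c F_1^{-1}(1-\delta))$ by definition of the central $\chi^2_1$ CDF.

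For \eqref{LinRegLowboundEQ}, where $\Ic_{k-1} \subset \mathcal{S}$, no further reduction of $\zeta_{U_k}$ is needed; the non-central $\chi^2_1(\zeta_{U_k})$ distribution of $U_k$ stands as given. The event becomes $\{U_k \geq c F_1^{-1}(1-\delta)\}$ and its probability equals $1 - F_{U_k}(c F_1^{-1}(1-\delta), \zeta_{U_k})$ directly from the definition of the non-central CDF. The only nontrivial step in the whole argument is the projection identification in case 1; everything else is bookkeeping. In particular, no condition on $n$ enters, because the identities are purely distributional and hold for every $n \geq 1$ and every constant $c > 0$, exactly as the statement requires.
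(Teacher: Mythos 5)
Your proposal is correct and follows essentially the same route as the paper: identify the SEER with $\frac{\sigma^2}{n}U_k$, use the stated non-central $\chi^2_1(\zeta_{U_k})$ law of $U_k$, show $\zeta_{U_k}=0$ via the projection argument when $\mathcal{S}\subseteq\mathcal{I}_{k-1}$, and read off both probabilities as CDF evaluations at $cF_1^{-1}(1-\delta)$. The only cosmetic difference is that in case 1 the paper substitutes $\textbf{y}=\textbf{X}^T\pmb{\theta}^*+\pmb{\varepsilon}$ and invokes the central $\chi^2_1$ law of $\|(\textbf{P}_k-\textbf{P}_{k-1})\pmb{\varepsilon}\|_2^2/\sigma^2$ directly, whereas you specialize the non-central law to $\zeta_{U_k}=0$; these are equivalent.
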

\begin{proof}
See the proof in Appendix \ref{APPB}.
\end{proof}
In this paper, we will use $c\frac{\sigma^2}{n}F_1^{-1}(1-\delta)$ as the threshold $\gamma_k^R(n,\delta,S_n)$ of test \eqref{TESTEQ} in the S-NER model selection methods for the linear regression problem. Note that, for every $0<\delta\leq 1$and $\varepsilon>0$, there is a positive integer $N$, where for every $n\geq N$, $c\frac{\sigma^2}{n}F_1^{-1}(1-\delta)\leq \varepsilon$. Therefore, the function $c\frac{\sigma^2}{n}F_1^{-1}(1-\delta)$ has the order of $o(1)$, and can be used as a bound for our proposed model selection methods.

In the PAC bounds \eqref{LinRegUpboundEQ} and \eqref{LinRegLowboundEQ}, $\delta$ is a free parameter. Usually, $\delta$ can be chosen based on the conditions in our problem. In the linear regression problem, based on the number of observations and noise variance, we choose $\delta(n)=c_1(\sigma^2/n)$, where $c_1>0$ is a constant to adjust the bound and its probability of occurrence. 

In addition, consider the sparse Riesz condition on the feature matrix $\textbf{X}$ \cite{zhang2008sparsity,chen2008extended,gohain2023robust}.  It states that for every subset $\Ic$ from $\{1,2,...,{L}\} $ that $card(\Ic)\leq \bar{L}$, the following holds
\begin{equation}\label{RieszASS}
    \lim_{n\to\infty} \frac{(\textbf{x}^{\Ic})^T \textbf{x}^{\Ic}}{n}=\textbf{M}_{\Ic},
\end{equation}
where $\textbf{M}_{\Ic}$ is a positive definite matrix. In the following theorem, we will show that for the above choice of $\delta(n)$ and based on sparse Riesz condition \eqref{RieszASS}, S-NER for this bound is reliable when  $n\to \infty$. In other words, S-NER is a consistent algorithm for regression problems.
\begin{theorem} \label{consistency} 
Let  $F_{1}^{-1}(.)$ be the inverse CDF function of a central $\chi^2_1$ random variable and assume that sparse Riesz condition \eqref{RieszASS} holds. Also, for a constant  $c_1>0$, let $\delta(n)=c_1(\sigma^2/n)$  for every constants $c,c_1>0$ and every $k\in\{1,2,...,L\}$,  
 \begin{enumerate}
 \item if  $ \mathcal{S} \subseteq \Ic_{k-1}$,  we have
\begin{flalign}
 \lim_{n\to \infty} \mathbb{P}\Big\{R_{emp}(S_n,\hat{\pmb{\theta}}_{k-1})-R_{emp}(S_n,\hat{\pmb{\theta}}_k) \leq \frac{\sigma^2}{n}cF_1^{-1}(1-c_1(\sigma^2/n))\Big\}=1,\label{limlowerbounddeltalinEQ}
 \end{flalign}
 \item and if  $\Ic_{k-1} \subset \mathcal{S} $, we have
 \begin{flalign}
		\lim_{n\to \infty} \mathbb{P}\Big\{R_{emp}(S_n,\hat{\pmb{\theta}}_{k-1})-R_{emp}(S_n,\hat{\pmb{\theta}}_k) \geq \frac{\sigma^2}{n}cF_1^{-1}(1-c_1(\sigma^2/n))\Big\}=1.\label{limupperbounddeltalinEQ}
	\end{flalign}
  \end{enumerate}
\end{theorem}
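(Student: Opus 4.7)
\textbf{Proof plan for Theorem \ref{consistency}.} The plan is to start directly from the exact probability identities already established in Theorem \ref{LinearLowerboundTH}, substitute $\delta=\delta(n)=c_1\sigma^2/n$, and show that the resulting probabilities tend to $1$ as $n\to\infty$. For part 1 this amounts to showing $F_1\bigl(cF_1^{-1}(1-c_1\sigma^2/n)\bigr)\to 1$, and for part 2 to showing $F_{U_k}\bigl(cF_1^{-1}(1-c_1\sigma^2/n),\zeta_{U_k}\bigr)\to 0$. Throughout, the two working ingredients are the tail asymptotics of the central $\chi^2_1$ quantile and a lower bound on the noncentrality parameter $\zeta_{U_k}$ derived from the sparse Riesz condition \eqref{RieszASS}.

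Part 1 is elementary. Since $c_1\sigma^2/n\downarrow 0$, continuity of $F_1$ together with $\lim_{t\to\infty}F_1(t)=1$ forces $F_1^{-1}(1-c_1\sigma^2/n)\to\infty$. Multiplying by the positive constant $c$ preserves divergence, and applying $F_1$ once more delivers the limit $1$. The statement is then a direct consequence of the identity \eqref{LinRegUpboundEQ} (note that when $\mathcal{S}\subseteq\mathcal{I}_{k-1}$ the vector $\mathbf{X}^T\pmb{\theta}^*$ already lies in $\operatorname{span}(\mathbf{X}^{\mathcal{I}_{k-1}})^T$, which is annihilated by $\mathbf{P}_k-\mathbf{P}_{k-1}$, so $\zeta_{U_k}=0$ and $U_k$ is genuinely central).

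Part 2 is a race between the threshold and the noncentrality. The standard Mills-ratio asymptotic $1-F_1(t)\sim\sqrt{2/(\pi t)}\,e^{-t/2}$ inverts to $F_1^{-1}(1-\delta)\sim 2\log(1/\delta)$, so the threshold $cF_1^{-1}(1-c_1\sigma^2/n)$ is $O(\log n)$. For $\zeta_{U_k}$, setting $\mathcal{J}=\mathcal{S}\setminus\mathcal{I}_{k-1}\neq\emptyset$ and decomposing $\mathbf{X}^T\pmb{\theta}^*=(\mathbf{X}^{\mathcal{I}_{k-1}})^T\pmb{\theta}^*_{\mathcal{I}_{k-1}}+(\mathbf{X}^{\mathcal{J}})^T\pmb{\theta}^*_{\mathcal{J}}$, the first term is annihilated by $\mathbf{P}_k-\mathbf{P}_{k-1}$, leaving $\zeta_{U_k}=\|(\mathbf{P}_k-\mathbf{P}_{k-1})(\mathbf{X}^{\mathcal{J}})^T\pmb{\theta}^*_{\mathcal{J}}\|_2^2/\sigma^2$. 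A Schur-complement calculation against $\mathbf{X}^{\mathcal{I}_{k-1}\cup\mathcal{J}}(\mathbf{X}^{\mathcal{I}_{k-1}\cup\mathcal{J}})^T/n\to\mathbf{M}_{\mathcal{I}_{k-1}\cup\mathcal{J}}\succ 0$ from \eqref{RieszASS} then shows $\zeta_{U_k}=\Theta(n)$. Since $U_k\sim\chi^2_1(\zeta_{U_k})$ has mean $1+\zeta_{U_k}$ and variance $2+4\zeta_{U_k}$, Chebyshev's inequality gives $F_{U_k}(O(\log n),\zeta_{U_k})\leq (2+4\zeta_{U_k})/(\zeta_{U_k}+1-O(\log n))^2=O(1/n)\to 0$, which combined with \eqref{LinRegLowboundEQ} completes the limit.

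The main obstacle is making the bound $\zeta_{U_k}=\Theta(n)$ fully rigorous when the added index $\hat{j}_k$ is chosen adaptively by S-NER rather than fixed a priori: in principle the algorithm could select an index whose residual $(\mathbf{I}-\mathbf{P}_{k-1})(\mathbf{x}^{\hat{j}_k})^T$ is nearly orthogonal to $(\mathbf{X}^{\mathcal{J}})^T\pmb{\theta}^*_{\mathcal{J}}$, producing a small noncentrality. Two routes are available. The first is a worst-case bound: finiteness of the index set $\{1,\dots,L\}$ reduces the question to a uniform infimum, and positive definiteness of $\mathbf{M}_{\mathcal{I}_{k-1}\cup\mathcal{J}}$ via a Schur complement provides a strictly positive per-step lower bound that still scales linearly in $n$. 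The second is an event-based argument: with probability tending to one the ERM choice $\hat{j}_k$ falls in $\mathcal{J}$, because the expected SEER contribution from a true feature is $\Theta(1)$ while that from a pure noise direction is only $O(\sigma^2/n)$; conditioning on this event restores $\zeta_{U_k}=\Theta(n)$ and the Chebyshev step above goes through verbatim.
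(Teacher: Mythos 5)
Your proposal is correct in substance and follows the same skeleton as the paper's proof: both plug $\delta(n)=c_1\sigma^2/n$ into the exact identities \eqref{LinRegUpboundEQ} and \eqref{LinRegLowboundEQ} of Theorem \ref{LinearLowerboundTH}, prove part 1 by continuity plus divergence of the threshold, and prove part 2 as a race between a threshold of order $\log n$ and a noncentrality $\zeta_{U_k}$ of order $n$. The differences are in the instruments. For the threshold, the paper converts to Gaussian CDFs via $F_1(u)=2\Phi(\sqrt{u})-1$ and bounds the quantile with the Chernoff bound $Q(t)\le e^{-t^2/2}$, obtaining $\sqrt{c}\,\Phi^{-1}\big(1-(c_1/2)\sigma^2/n\big)\le\sqrt{2c\ln(n/\sigma^2)+2c\ln(2/c_1)}$, while you invert the Mills-ratio asymptotic; same $O(\log n)$ conclusion. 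For the tail of the noncentral $\chi^2_1(\zeta_{U_k})$, the paper argues the argument of $\Phi$ tends to $-\infty$ because $\sqrt{\zeta_{U_k}}=\Theta(\sqrt{n})$ dominates $O(\sqrt{\log n})$, whereas you apply Chebyshev with mean $1+\zeta_{U_k}$ and variance $2+4\zeta_{U_k}$; your rate $O(1/n)$ is weaker than the Gaussian-tail decay but entirely sufficient for the limit. For the noncentrality itself, your explicit decomposition $\textbf{X}^T\Ptheta^*=(\textbf{X}^{\Ic_{k-1}})^T\Ptheta^*_{\Ic_{k-1}}+(\textbf{X}^{\Jc})^T\Ptheta^*_{\Jc}$ with a Schur complement against \eqref{RieszASS} is actually cleaner than the paper's treatment, which cites Lemma 4 of \cite{gohain2023robust}, concludes $\zeta_{U_k}=\mathcal{O}(n/\sigma^2)$, and then uses this upper-bound notation where the argument really needs the lower bound $\zeta_{U_k}=\Omega(n)$. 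Finally, the adaptive-selection obstacle you raise is genuine but lies outside the paper's proof as well: the paper silently assumes the step from $\Ic_{k-1}$ to $\Ic_{k}$ captures a non-vanishing fraction of signal energy. Be aware, though, that your ``route one'' does not repair this --- the worst case over the finite index set includes adding a pure noise feature $j\notin\Sc$, for which the residualized direction is asymptotically orthogonal to the signal and $\zeta_{U_k}$ stays bounded in probability, so no uniform $\Theta(n)$ infimum exists; only your event-based ``route two'' (showing the ERM pick lands in $\Sc\setminus\Ic_{k-1}$ with probability tending to one) genuinely closes the gap. Proving the theorem under the implicit restriction that the added index is a true feature, as your main argument does, is exactly the level of rigor of the paper's own proof.
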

\begin{proof}
See the proof in Appendix \ref{APPB}.
\end{proof}
We use the SEER bound in Theorem \ref{consistency}, i.e., $c\frac{\sigma^2}{n}F_1^{-1}(1-c_1({\sigma}^2 /n))$, as a threshold,  and  for every $k$, the S-NER model selection test for the linear regression problem will be as follows
\begin{flalign}\label{LRtestEQ}
    T_k =\mathbb{I}\{\Delta R_{emp}(S_n,k)\geq c\frac{\sigma^2}{n}F_1^{-1}(1-c_1({\sigma}^2 /n)) \},
\end{flalign}
where $\mathbb{I}$ is the indicator function. 

In the next section, we use the S-NER model selection methods for the regression and classification applications. 
\section{Applications} \label{SEC6}
In this section, we present the applications of the proposed model selection methods in linear regression and classification problems. In Subsection \ref{Section6.1}, we use the S-NER model selection method to find the most parsimonious set of features for predicting the response based on the linear regression model.  Using synthetic data, we compare our method with the state-of-the-art methods in this problem. Also, in Subsection \ref{Section6.2}, we use the NER model order selection method for the feature selection in the classification task of the UCR  dataset.
\subsection{Linear Regression Model Selection Using Synthetic Data}\label{Section6.1}
In this subsection, first, we generate synthetic data $S_n$,  and then we estimate the best regression model using the S-NER model selection method.  Consider $S_n=\{(\textbf{x}_1,y_1),(\textbf{x}_2,y_2),..,(\textbf{x}_n,y_n)\}$ generated based on the linear regression model \eqref{datagenEQ} in the high-dimensional scheme where the number of features is greater than the number of observations, i.e., $L\geq n$. In this subsection, for some $d\geq1$, we let $L=\lceil n^d \rceil$. In  \eqref{datagenEQ},  $\pmb{\theta}^*$ is the true model parameter, where it is a sparse vector with the support set  $\mathcal{S}$. In the model selection procedure, we try to estimate the support set $\mathcal{S}$. In this regard, we assume that $K=card(\mathcal{S})$ is not greater than a known number $k_{max}$ where $K\leq k_{max} \ll  L$. 

For every $ j\in\{1,2,..., L\}$ and $\  i\in \{1,2,...,n\}$, we generate $[\textbf{X}]_{ji}$ using a Normal distribution ($\mathcal{N}(0,1)$). Also, for $j\notin \mathcal{S}$, let $[{\pmb{\theta}}^*]_j=0$, and for $j\in \mathcal{S}$, let $[{\pmb{\theta}}^*]_j$ be a random variable in $\{1,-1\}$ where $\mathbb{P}\{[{\pmb{\theta}}^*]_j=1\}=\mathbb{P}\{[{\pmb{\theta}}^*]_j=-1\}=1/2$.  Then, the scalar responses $y_1,y_2,...,y_n$ are obtained using the linear regression model \eqref{datagenEQ}. Now, for every $i$, let $y_i^{\prime}=\frac{y_i}{\hat{\sigma}_{\textbf{y}}}$ be the normalized response, where $\hat{\sigma}_{\textbf{y}}=\sqrt{({\sum_{i=1}^{n} (y_i-\hat{\mu}_{\textbf{y}})^2}) /(n-1) }$ and $\hat{\mu}_{\textbf{y}}=({1}/{n}) \sum_{i=1}^{n} y_i$. Then, the normalized set of observations will be $S_n^{\prime}=\{(\textbf{x}_1,y_1^{\prime}),(\textbf{x}_2,y_2^{\prime}),..,(\textbf{x}_n,y_n^{\prime})\}$. Note that $\hat{\sigma}_{\textbf{y}}$ is a known scalar because of our knowledge regarding $\textbf{y}$. The normalization helps the S-NER method be invariant to data scaling. We use $S^{\prime}_n$ in the S-NER  model selection method for the linear regression problem presented in Subsection \ref{Sec51} to estimate the support set $\Sc$ (Algorithm \ref{NERALG}).  
    
As mentioned in Subsection \ref{Sec51},  in the S-NER algorithm, $\hat{K}$  is the estimated order of the model as the most parsimonious model containing risk minimizer. In the S-NER method, the number of models in the sorted sequentially nested models will be $\min{(\hat{K}+1,L)}$. We let $\bar{L}=\min{(\hat{K}+1,L)}$. Therefore, this algorithm can expand the models $\bar{L}$ times  (equivalently expands the set of active features). For every expansion procedure, this algorithm assigns a model from a set of candidate models and verifies it as an expanded model using the test \eqref{TESTEQ}. Based on Subsection \ref{Sec51}, for every $k\in\{1,2,...,\bar{L}\}$, the test $T_k$ is used to verify the assigned model $\Mc_{\hat{j}}^{\prime}$ as an expansion of  the $(k-1)$-th approved model $\Mc_{k-1}$ of the previous step. The  test $T_k$ in algorithm \ref{NERALG} for the linear regression problem is as follows
\small{
\begin{equation}\label{TESTAPP}
    T_k =\mathbb{I}\Big\{ R_{emp}(S_n,\hat{\Ptheta}_{k-1})-R_{emp}(S_n,\hat{\Ptheta}^{\prime}_{\hat{j}}) \geq c\frac{\sigma^2}{n}F_1^{-1}(1-c_1(\frac{{\sigma}^2}{n})) \Big\},
\end{equation}
}
\normalsize
where $R_{emp}(S_n,\hat{\Ptheta}_{k-1})$ is the minimum empirical risk of model $\Mc_{k-1}$  and $R_{emp}(S_n,\hat{\Ptheta}^{\prime}_{\hat{j}})$ is the minimum empirical risk of assigned model $\Mc_{\hat{j}}^{\prime}$ with the least minimum empirical risk among candidate models of the $k$-th model in the sorted sequentially nested model family.
However, in this problem, the noise variance $\sigma^2$ is not available. Therefore, for every $k\in\{1,2,...,\bar{L}\}$, we estimate the noise variance using the $k$-th assigned model residual $\textbf{r}_k=\textbf{y}-\hat{\textbf{y}}_k$, 
\begin{equation}\label{estsigma}
	\hat{\sigma}^2_k = \frac{\|\textbf{r}_k-\hat{\mu}_{\textbf{r}_k}\|_2^2}{n-1},
\end{equation} 
where  $\hat{\textbf{y}}_k=({\textbf{X}^{\mathcal{I}_{\hat{j}}^{\prime}}})^T ({\textbf{X}^{\mathcal{I}_{\hat{j}}^{\prime}}})^{\dagger}\textbf{y}$ and ${\mathcal{I}_{\hat{j}}^{\prime}}$ is the set of parameter indices of the assigned model $\Mc_{\hat{j}}^{\prime}$ in the $k$-th iteration of the algorithm. Also,  $\hat{\mu}_{\textbf{r}_k}=\frac{1}{n}\sum_{i=1}^{n} [\textbf{r}_k]_i$ is the sample mean of residuals. 

In Algorithm  \ref{NERALG},  since we use model sorting, for every $k\in\{1,2,...,\bar{L}\}$, we have prior knowledge that the parameter of the $(k-1)$-th model is more valuable than the added parameter into the $k$-th model. Based on this prior information from the sorting procedure, we must tune the constants of the threshold in the test of Algorithm \ref{NERALG} such that its threshold is stricter for the larger number of $k$. Indeed, it must be harder to accept the less valuable parameters since we sort the parameters from most valuable to least valuable through the S-NER model selection. In this regard, we let $c=k$ and $c_1 =c^{\prime} k$ in  \eqref{TESTAPP}, where $c^{\prime}$ is a parameter that will be adjusted using cross-validation. Based on this choice of  $c$ and $c_1$, as the model number $k$ grows, the threshold will be stricter as it becomes larger. Therefore, for every $k\in\{1,2,...,\bar{L}\}$, using $c=k$, $c_1 =c^{\prime} k$, and $\sigma^2={\hat{\sigma}}^2_k$ in the test of Algorithm \ref{NERALG}, \eqref{TESTAPP} will be rewritten as follows
\small{
\begin{flalign}
    {T}_k=\mathbb{I}\Big\{ R_{emp}(S_n,\hat{\Ptheta}_{k-1})&-R_{emp}(S_n,\hat{\Ptheta}^{\prime}_{\hat{j}})\geq c^{\prime} k\frac{\hat{\sigma}^2_k}{n}F_1^{-1}(1-k(\frac{\hat{\sigma}^2_k}{n}))\Big\}.
\end{flalign}}
\normalsize
As mentioned above, $c^{\prime}$ will be adjusted using cross-validation.  However, we observe that the estimated parameter vector using the cross-validation procedure in the S-NER has an over-fitting problem, i.e., it considers more features in the linear model as redundant features. Therefore, in our algorithm, to overcome this over-fitting problem, we eliminate the features whose absolute weights are less than a threshold.  The S-NER  model selection method considers the set of remaining indices of the parameters as the estimation of the support set, i.e., $\hat{\mathcal{S}}$.

We compare the S-NER model selection method with the state-of-the-art model selection methods. In the following, for the first time in the model selection literature, we will show that the S-NER model selection method, without any prior information, can outperform the accuracy of OMP and  LARS feature sorting algorithms aided by the support set cardinality $K$. The accuracy of OMP and LARS algorithms aided with the prior knowledge of the support set cardinality $K$ are upper bounds for the accuracy of the state-of-the-art model selection methods that use these feature sorting algorithms. 

In most model selection methods, a penalty is considered for the minimum empirical risk of every model to control the model complexity. However, in the high-dimensional linear regression problem, the number of sets of features with the maximum $k_{max}$ member will be $\sum_{k=1}^{k_{max}} {L\choose{k}}$. We can compose a linear model to fit the data corresponding to each set.  By increasing the number of features, $L$, the number of candidate models increases exponentially. Hence, calculating the empirical risk for all these candidates and performing model selection is highly complicated. Therefore, in the model selection methods, feature sorting algorithms are used to sort the most $k_{max}$ correlated features with the responses, $\textbf{y}$ \cite{owrang2018model,gohain2023robust,kallummil2018signal}.  OMP \cite{weisberg2005applied} and LARS \cite{efron2004least} are two important examples of feature sorting algorithms used widely in the literature. In the model selection methods, the linear models are composed based on OMP or LARS sorted features, and then the order of the model is estimated using penalized model selection methods where the penalty functions are calculated based on model complexity and data. Different model selection methods proposed different penalties mostly based on an information criterion.  In this paper, we compare the performance of our proposed method, S-NER, with EFIC \cite{owrang2018model} and EBICR \cite{gohain2023robust} model selection methods. To do this, we compare the probability of true detection of the support set, $\mathbb{P}\{\hat{\mathcal{S}}=\mathcal{S}\}$ (probability of true detection in brief), of the methods. Also, we compute the probability of true detection for feature sorting algorithms (OMP and LARS) with the prior knowledge of support set cardinality $K$. We consider the indices of the first $K$ sorted features based on  OMP or LARS as the estimation of the support set $\Sc$. In other words,  these sorting algorithms are aided with the cardinality $K$  that we call aided OMP and aided LARS, respectively. The true detection probability of aided OMP and aided LARS are upper bounds for the probability of true detection of the model selection methods like EFIC and EBICR, which use OMP and LARS feature sorting algorithms, respectively. 

In addition, to calculate the accuracy of model sorting in the S-NER model selection method, we present the accuracy of S-NER that aided with the cardinality of support set $\Sc$ called aided S-NER. In this regard, we let $\gamma_k^R(n,\delta,S_n)=0$ and go through Algorithm \ref{NERALG}  for only $K$ iterations. For these conditions in Algorithm \ref{NERALG}, $\hat{\mathcal{S}}=\hat{\mathcal{I}}$ is the estimated support set using the S-NER method aided with the support set cardinality $K$.

\begin{figure}
	\centering
	\includegraphics[width=12cm, height=7.5cm]{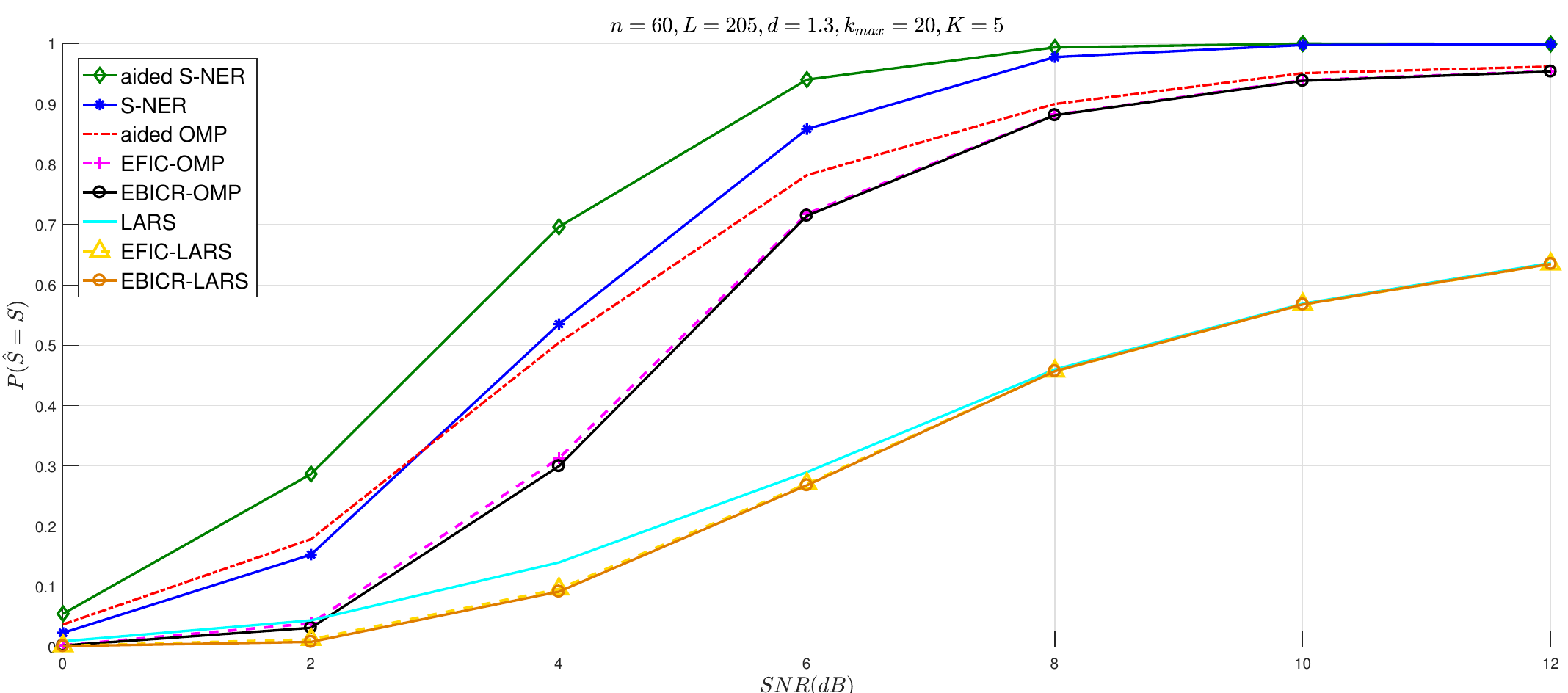}
	\caption{Comparison of true detection probability of the aided S-NER, S-NER  model selection method, aided OMP, aided LARS, and EFIC and EBICR methods using OMP and LARS as the feature sorting algorithm for different SNR. In this simulation, the number of observations  $n=60$, the number of features  $L=205$, and the order of the true model is $K=5$.}\label{fig3}
\end{figure}
\begin{figure}
	\centering
	\includegraphics[width=12cm, height=7.5cm]{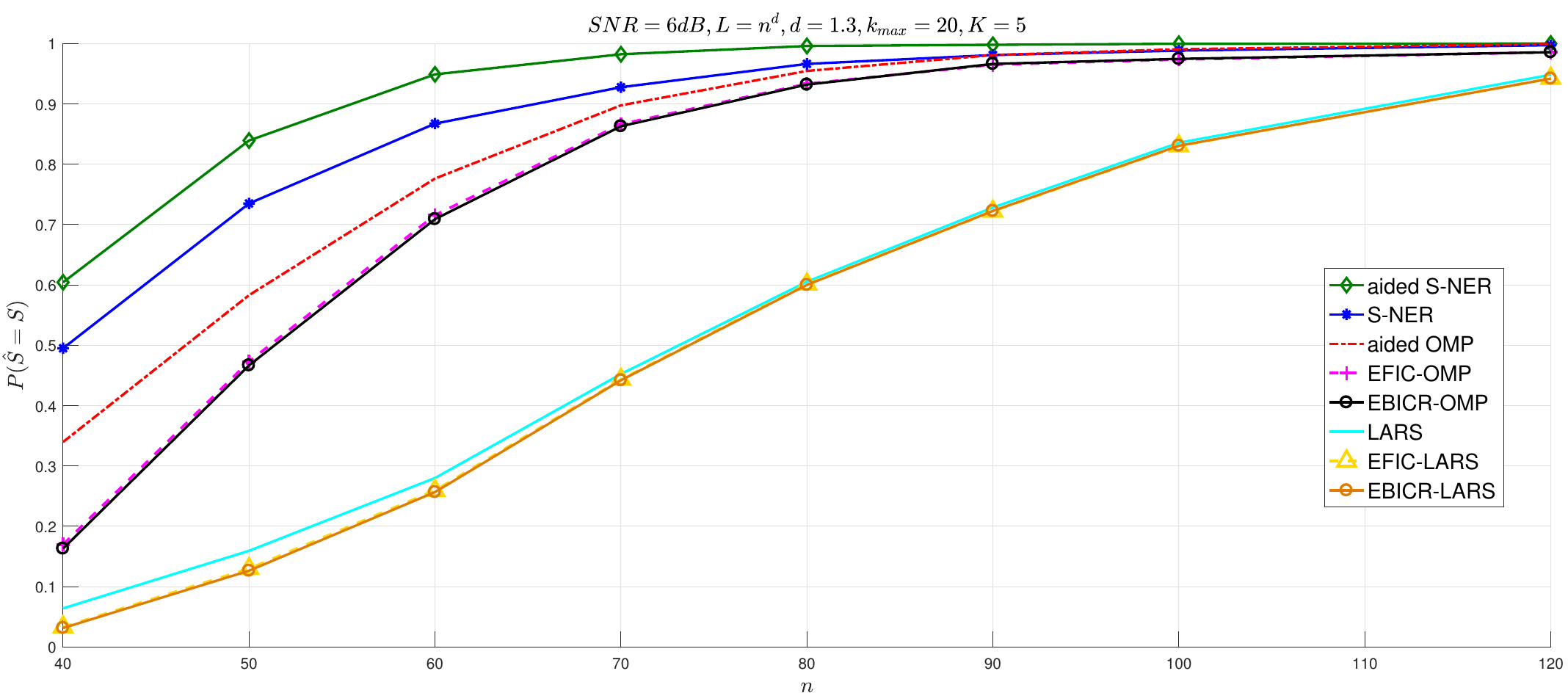}
	\caption{Comparison of true detection Probability of the aided S-NER, the S-NER, aided OMP, aided LARS, and EFIC and EBICR methods using OMP and LARS as the feature sorting algorithm for the different number of observations $n$. In this simulation, SNR is $6$ dB, the number of features $L=\lceil n^{1.3}\rceil$, and the order of the true model $K=5$. }\label{fig4}
\end{figure}

Fig.\ref{fig3} presents the probability of true detection of the aided S-NER, S-NER model selection methods, aided OMP, aided LARS, EFIC using OMP and LARS feature sorting, and EBICR using OMP and LARS feature sorting methods for different signal-to-noise ratios (SNR). The SNR in the dB scale is calculated by $SNR(dB)=10\log_{10}{\Big( \frac{{\|\textbf{X}\pmb{\theta}^*\|}_2^2}{n\sigma^2} \Big)}$.  In these simulations, the number of observations is $n=60$, the number of features is 205 ($d=1.3$), the cardinality of the true support set is $5$, and $k_{max}=20$. As illustrated in this figure, the S-NER model selection method has better performance than EBICR and EFIC as the state-of-the-art methods. Also, for the first time in the literature, the S-NER model selection method, without any prior information, can outperform the accuracy of aided OMP and aided LARS. For example, in $SNR=6\ dB$, the S-NER  model selection method outperforms the true detection probability of EFIC and aided OMP with at least $14$ and $7$ percent, respectively. 

Note that the probability of true detection of aided OMP and aided LARS are the upper bounds for the performance of the other model order selection methods that use OMP and LARS as feature sorting.  This outperforming of S-NER ensures this method is capable of outperforming all of the model order selection methods that use OMP and LARS as feature sorting.

In Fig.\ref{fig4}, the probability of true detection of different model selection methods is illustrated for different observation numbers $n$. In this simulation, $SNR=6$, $d=1.3$, the cardinality of the true support set is $5$, and $k_{max}=20$.  As illustrated, the S-NER  model selection method has better accuracy than other model selection methods and outperforms the accuracy of aided OMP and aided LARS. In these simulations, for $n=40$, the true detection probability of S-NER is at least $25$ and $9$ percent better than EFIC using OMP-sorted features and aided OMP methods, respectively.   

We investigated the S-NER model selection in the low-dimensional scenario as well.  However, in this section, due to the importance of the high-dimensional scenario, we just report the simulation results in the high-dimensional scenarios.
In the next section, we use the NER order model selection method to select the valuable features of mini-ROCKET features in the UCR dataset classification.
\subsection{Feature Selection in the Classification of UCR Dataset}\label{Section6.2}
In this subsection, we use the NER model order selection as a feature selection method for classifying the UCR datasets using mini-ROCKET features \cite{dempster2021minirocket}. The UCR dataset consists of 109 time-series classification datasets\cite{dau2019ucr}. Each dataset consists of the supervised training and test sample data. The mini-ROCKET extracts $9996$ features from each sample data (time series). In the mini-ROCKET feature extraction method, 9996 almost deterministic convolutional kernels are applied to each data. Then, the proportion of positive values (PPV) of the output of each convolution operation is extracted as a scalar feature \cite{dempster2021minirocket}.  These generated kernels are the same for the whole data of every dataset. We consider $\textbf{x} \in [0,1]^L$ as an $L$-dimensional mini-ROCKET feature vector where $L=9996$, and scalar $y$ is its class label. We use the features extracted from training data and corresponding labels as the training data, $S_n=\{(\textbf{x}_1,y_1),(\textbf{x}_2,y_2),...,(\textbf{x}_n,y_n)\}$, for the training phase of the classification task. In the training phase, first, we use the NER model order selection method to select the features, and then we train the classifier based on the selected features. In \cite{dempster2021minirocket}, a multi-class ridge classifier is used to classify mini-ROCKET features. Also, we use the same classifier to investigate the effect of the proposed feature selection method on the classification task. 

The classification task based on the NER feature selection consists of three steps. First, we use a feature sorting algorithm to sort the 9996 mini-ROCKET features, then exploit the NER model order selection method to select the most valuable features. Finally, the classifier is trained and evaluated based on the selected features. In the following, we present the feature sorting algorithm.
\subsubsection{Feature Sorting}  We exploit the weights of the multi-class ridge classifier for the feature selection task \cite{omidi2023reducing}. As mentioned, a multi-class ridge classifier will be trained  in the training phase. Using the weights of the multi-class ridge classifier for feature sorting will sort useful and valuable features corresponding to the training phase. Therefore, we train a $J$-class ridge classifier and use its weights for the feature sorting where $J$ is the number of classes in data. A $J$-class ridge classifier consists of $J$ one-versus-all binary classifiers. In this classifier, a ridge regression model will be trained. For this model, the scalar label $y\in \{1,2,...,J\}$ is transformed to a  $J$-dimensional one-hot vector $\textbf{y}^{\prime}$ where the $y$-th element of $\textbf{y}^{\prime}$ is one and the rest is zero. Then, matrix $\textbf{Y}=[\textbf{y}^{\prime}_1,\textbf{y}^{\prime}_2,...,\textbf{y}^{\prime}_n]^T$ is the $(n\times J)$-dimensional output matrix, and $\textbf{X}=[\textbf{x}_1,\textbf{x}_2,...,\textbf{x}_n]$ is the $(L\times n)$-dimensional input features matrix for the ridge regression model. Therefore, the ridge regression weights are calculated as follows
\begin{flalign}
    \hat{\textbf{W}}=(\textbf{X}\textbf{X}^T+\tau \textbf{I}_{L})^{-1}\textbf{X}\textbf{Y},
 \end{flalign}
 where  $\tau $ is the regularization coefficient that will be adjusted using cross-validation. $\hat{\textbf{W}}=[\hat{\textbf{w}}^{\{1\}},\hat{\textbf{w}}^{\{2\}},...,\hat{\textbf{w}}^{\{J\}}]$ is a  $(L\times J)$-dimensional matrix where for every $1\leq j\leq J $, $\hat{\textbf{w}}^{\{j\}}$,  the $j$-th column of $\hat{\textbf{W}}$, is the weight vector of the $j$-th class-versus-all binary ridge classifier. Now, we use the trained classifier's weights, $\hat{\textbf{W}}$,  to sort the mini-ROCKET features \cite{omidi2023reducing}. We sort these features based on the absolute values of the weights such that a feature with a bigger absolute weight will be considered to be more important. For every $j\in\{1,2,...,J\}$, let the indices of the sorted absolute weight corresponding to the $j$-th one-versus-all binary classifier be $\textbf{u}^j=[u_1^j,u_2^j,...,u_{L}^j]$, where $u_1^j,u_2^j,...,u_{L}^j\in \{1,2,...,L\}$, $u_1^j\neq u_2^j\neq...\neq u_{L}^j$, and 
\begin{flalign}
   |[\hat{\textbf{w}}^{\{j\}}]_{u_1^j}|\geq |[\hat{\textbf{w}}^{\{j\}}]_{u_2^j}|\geq...\geq |[\hat{\textbf{w}}^{\{j\}}]_{u_{L}^j}|.
\end{flalign}
 Now, let the sorted indices vector $\textbf{c}_{ridge}=[i_1,i_2,...,i_{L}]$  be  a version of  vector $\bar{\textbf{u}}=[u_1^1,...,u_1^j,u_2^1,...,u_2^j,...,u_{L}^1,...,u_{L}^j]$ without any duplicate member (from every $J$ duplicate member of $\bar{\textbf{u}}$, the first position is preserved in $\textbf{c}_{ridge}$, and the remains are deleted). 
 In this sorted indices vector $\textbf{c}_{ridge}$,  every one-versus-all binary classification of the ridge multi-class classifier has a productive role in sorting. We use the indices of $\textbf{c}_{ridge}$  to arrange a sequentially nested linear regression model class. 
 \subsubsection{The Group NER Model Order Selection}
At this step, the number of features is large ($L=9996$). Therefore, for every model order selection method, we must generate 9996 models, which is a large number of models, and the computations will be exhaustive. To reduce the number of models and consequently reduce the computations, we extend the models by a group of parameters instead of extending the models one parameter by one parameter. Then, we use the NER model order selection method to estimate the best model. We call this method the group NER model order selection. Using this method speeds up the model selection process rather than the ordinary NER model order selection method.

Let the set of all parameters be $\Theta=\{\theta_1^{\prime},\theta_2^{\prime},...,\theta_L^{\prime}\}$.  We only consider the most $k_{max}=2500$ valuable features sorted by $\textbf{c}_{ridge}$  in the model selection, and we generate  $ k_0= 250$   sequentially nested models using groups of 10 parameters. In this regard, using the ridge-sorted indices vector  $\textbf{c}_{ridge}=[i_1,i_2,...,i_{L}]$, we organize the sequentially nested model class, $\{\mathcal{M}_k\}_{k=1}^{250}$. In this model class, for every $k\in\{1,2,...,250\}$, $\mathcal{M}_k=\{(\textbf{x}^{\mathcal{I}_k})^T \pmb{\theta}_{k}: \pmb{\theta}_{k}\in \mathbb{R}^{10k} \}$ where $\pmb{\theta}_{k}=\pmb{\theta}^\prime_{\mathcal{I}_k}$ and $\mathcal{I}_k=\{i_{1},i_{2},...,i_{10k}\}$.  We chose $10$ for grouping the parameters since we observed that it reduces the complexity without any significant loss in accuracy. In this problem, we use the ridge loss function presented as follows
 \begin{flalign}
     l(\textbf{x}^{\mathcal{I}_k},y,\pmb{\theta}_{k})=(y-(\textbf{x}^{\mathcal{I}_k})^T\pmb{\theta}_{k})^2+\tau \|\pmb{\theta}_{k}\|_2^2.
 \end{flalign}
  For this loss function, based on \eqref{minempriskparEQ}, the minimum empirical risk in the $k$-th model will be calculated by 
  \begin{flalign}
  R_{emp}(S_n,\hat{\Ptheta}_k)=\frac{\|\textbf{y}-(\textbf{X}^{\mathcal{I}_k})^T \hat{\Ptheta}_k\|_2^2}{n}+\tau \|\hat{\Ptheta}_{k}\|_2^2,
  \end{flalign}
  where $\textbf{y}=[y_1,y_2,...,y_n]^T$, $\textbf{X}^{\Ic_k}$ is a sub-matrix consisting of rows of $\textbf{X}$ indexed by $\Ic_k$, and $\hat{\pmb{\theta}}_{k}=(\textbf{X}^{\mathcal{I}_k}(\textbf{X}^{\mathcal{I}_k})^T+\tau \textbf{I}_{10k})^{-1}\textbf{X}^{\mathcal{I}_k} \textbf{y}$.    
  
  Notice that the bound $c\frac{{\hat{\sigma}}^2_k}{n} F_1^{-1}(1-c_1(\frac{{\hat{\sigma}}^2_k}{n})) $ in Theorem \ref{consistency} is calculated based on the least square loss function and for the SEER of two successive linear regression models that differ only by one parameter.  In this classification task, we use the ridge loss function, and two successive models differ by ten parameters. So, to modify this threshold for these conditions, we let parameters $c_1=1$ and use the cross-validation to adjust the parameter $c$ and consequently tune the threshold for the condition of this problem.  For every $k\in\{1,2,...250\}$, we calculate the minimum empirical risks of all models and obtain the SEERs. Then, for every $c\in  \{10^{-1},10^{-0.75},...,10^{+3}\}$, we let $c\frac{{\hat{\sigma}}^2_k}{n} F_1^{-1}(1-(\frac{{\hat{\sigma}}^2_k}{n})) $ as the threshold in the test \eqref{TESTEQ}, i.e. for every $k\in\{2,3,...250\}$,
  \begin{flalign}
  T_k^c=\mathbb{I}\Big\{\Delta R_{emp}(S_n,k)\geq c\frac{{\hat{\sigma}}^2_k}{n} F_1^{-1}(1-(\frac{{\hat{\sigma}}^2_k}{n}))\Big\},
  \end{flalign}
  where ${\hat{\sigma}}^2_k$ is the estimation of the noise variance using the $k$-th model residual presented in \eqref{estsigma}.  Then, using  the NER model order selection in \eqref{KHATEQ}, we select a model as follows
  \begin{flalign}
      \hat{K}_c=\{k\in\{1,2,...250\}:T_k^c=1\},
  \end{flalign}
  where $\Mc_{\hat{K}_c}$ is the selected model based on threshold $c\frac{{\hat{\sigma}}^2_k}{n} F_1^{-1}(1-(\frac{{\hat{\sigma}}^2_k}{n})) $ and its corresponding features indices is $\Ic_{\hat{K}_c}$. Note that $T_1=1$ since, based on Corollary \ref{KASSUMPTION}, there is always a model index $K\geq 1$ that $\Mc_K$ contains the minimum risk.  Using cross-validation, we choose $\Ic_{{\hat{K}}_{\hat{c}}}$ with the minimum classification error based on validation data, where the training and evaluation of the classifier are presented in the next step. 
  As mentioned in Subsection \ref{Section6.1}, the estimated parameters confront over-fitting that makes some redundant features. Similar to the previous subsection, to overcome this over-fitting, we eliminate features whose absolute values of their weights are smaller than a threshold. Therefore, we consider the set of remaining indices, $\Ic_{\hat{K}}$, as indices of the most valuable features by NER model order selection for the classification task.  In the following, we use these selected features for the training and evaluation of the classifier.	
 \subsubsection{Training and Test Phases}
 In the training phase, we train a multi-class ridge classifier only with the selected features indexed by $\Ic_{\hat{K}}$. Then, the training dataset will be transformed to $S^{\Ic_{\hat{K}}}_n=\{ (\textbf{x}^{\Ic_{\hat{K}}}_1,y_1),(\textbf{x}^{\Ic_{\hat{K}}}_2,y_2),...,(\textbf{x}^{\Ic_{\hat{K}}}_n,y_n) \}$, and the ridge regression weights  in the training classifier will be calculated as follows
 \begin{flalign}
 \hat{\textbf{W}}_{{\hat{K}}}=({\textbf{X}^{\Ic_{\hat{K}}}}({\textbf{X}^{\Ic_{\hat{K}}}})^T+\tau \textbf{I}_{10\hat{K}})^{-1}{{{\textbf{X}^{\Ic_{\hat{K}}}}}}\textbf{Y},
 \end{flalign}
Now, we use the trained ridge regression weight  $ \hat{\textbf{W}}_{{\hat{K}}}$ to predict the label of the test samples. For a test sample $\textbf{x}$, we predict its label as follows
\begin{flalign}
    \hat{y}=\argmax_{i\in \{1,2,...,J\}}{{y}^{\prime}_i},
\end{flalign}
where ${y}^{\prime}_i$ is the $i$-th element of the $J$-dimensional vector $\textbf{y}^{\prime}=\big((\textbf{x}^{{\Ic_{\hat{K}}}})^T \hat{\textbf{W}}_{{\hat{K}}}\big)^T$. We evaluate the classifier using the NER-selected features using the test data prepared in the UCR dataset archive \cite{dau2019ucr}. We perform these simulations ten times for every 109 datasets. In addition, we compare the classification accuracy of the NER feature selection method with AIC, EFIC, BIC, and EBICR methods that use the OMP algorithm as the sorting features algorithm. We let $k_{max}=2500$ in all these methods. We report the mean of accuracy and scaled kernel numbers on these ten trials of all 109 datasets. 

In the OMP or LARS algorithms, features are sorted based on the correlation between the residual $\textbf{y}-\hat{\textbf{y}}_k$ and features \cite{efron2004least,weisberg2005applied}, where $\hat{\textbf{y}}_k$ is the estimated response based on the first $k$ sorted features. In these algorithms,  $\hat{\textbf{y}}_k$  is calculated using the least square linear regression model without any regularization. Therefore,  for $k\geq n$, the residual in the $k$-th  iteration of the OMP algorithm will be zero. Also, for some $k$ near $n$,  the residual in the $k$-th iteration of the LARS algorithm will be zero. Since these algorithms do not use any regularization, they can only sort about the $n$ most valuable features.  In this regard, consider a high-dimensional scenario where $L$ and $k_{max}$ are greater than the training observations number $n$. This scenario occurs in the feature selection of most UCR datasets where $n\leq k_{max}\leq L$. For example, the training sample numbers of some UCR datasets are $20$ and $30$ (Beetlefly or Beef datasets), which are much fewer than $k_{max}=2500$ \cite{dau2019ucr}. Therefore, in the AIC, EFIC, BIC, and EBICR methods that use the OMP or LARS algorithms to sort the features, $k_{max}$ will be limited to a maximum number of features that these algorithms can sort and is a number about $n$. However, the group NER model order selection does not have such limitations on $k_{max}$ since we do not use greedy algorithms like OMP or LARS, and use the ridge loss as a regularized loss.
 
In fig. \ref{fig100}, we compare the accuracy of the classification task using the mini-ROCKET feature extraction with the NER, AIC, EFIC, BIC, and EBICR feature selection algorithms averaged out on ten trials of 109 UCR datasets. Also, the kernel number of each method used in the training and test phase is scaled to 9996 and reported in this figure. As illustrated, the NER method without a significant loss in accuracy (less than 2 percent)  reduces the number of kernels by more than 89 percent. However, in the best case, using the AIC feature selection causes at least a 13 percent accuracy loss. 
    \begin{figure}
	\centering
	\includegraphics[width=11cm, height=6.5cm]{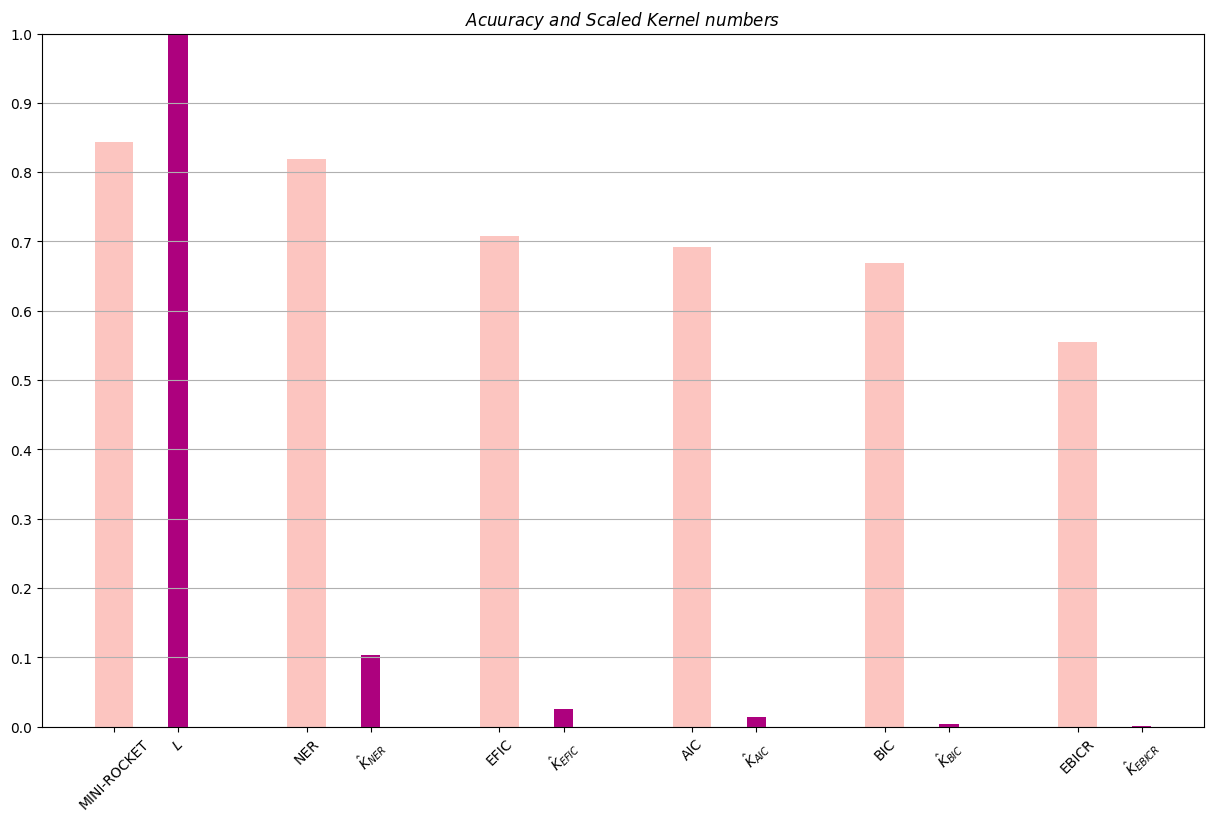}
	\caption{Accuracy and scaled kernels number to 9996 of mini-ROCKET versus NER, EFIC,  AIC, BIC, and EBICR feature selection in the UCR dataset. }\label{fig100}
\end{figure}
\section{Conclusion}\label{Sec7}
In this paper, first, we define a precise definition of nested models and present the properties of minimum risks and minimum empirical risk in a nested model class. We introduce an important criterion, SEER, and based on the nested models' properties, we prove the existence of $o(1)$ bounds on the SEER.  We use these bounds to predict the order of the model in a sequentially nested model class called the NER model order selection method. Also, we propose the S-NER model selection method to sort the valuable parameter space simultaneously, eliminate useless parameter spaces, and obtain the most parsimonious model containing the global risk minimizer predictor. We show that our proposed methods are consistent. The SEER bound is presented exclusively for the linear regression problem, and we use the NER method for model selection linear regression applications. We show that the NER algorithm has a better performance than the state-of-the-art algorithms in the literature for regression problems using synthetic data. It also decreases the complexity of the classification of UCR datasets. 
	\ifCLASSOPTIONcaptionsoff
	\newpage
	\fi
	
	\bibliographystyle{./IEEEtran}
	\bibliography{./Model_Selection_Trough_Model_Sorting}

\begin{thebibliography}{10}
\providecommand{\url}[1]{#1}
\csname url@samestyle\endcsname
\providecommand{\newblock}{\relax}
\providecommand{\bibinfo}[2]{#2}
\providecommand{\BIBentrySTDinterwordspacing}{\spaceskip=0pt\relax}
\providecommand{\BIBentryALTinterwordstretchfactor}{4}
\providecommand{\BIBentryALTinterwordspacing}{\spaceskip=\fontdimen2\font plus
\BIBentryALTinterwordstretchfactor\fontdimen3\font minus
  \fontdimen4\font\relax}
\providecommand{\BIBforeignlanguage}[2]{{%
\expandafter\ifx\csname l@#1\endcsname\relax
\typeout{** WARNING: IEEEtran.bst: No hyphenation pattern has been}%
\typeout{** loaded for the language `#1'. Using the pattern for}%
\typeout{** the default language instead.}%
\else
\language=\csname l@#1\endcsname
\fi
#2}}
\providecommand{\BIBdecl}{\relax}
\BIBdecl

\bibitem{ding2018model}
J.~Ding, V.~Tarokh, and Y.~Yang, ``Model selection techniques: An overview,''
  \emph{IEEE Signal Processing Magazine}, vol.~35, no.~6, pp. 16--34, 2018.

\bibitem{hastie2009elements}
T.~Hastie, R.~Tibshirani, J.~H. Friedman, and J.~H. Friedman, \emph{The
  Elements of Statistical Learning: Data Mining, Inference, and
  Prediction}.\hskip 1em plus 0.5em minus 0.4em\relax Springer, 2009, vol.~2.

\bibitem{stoica2004model}
P.~Stoica and Y.~Selen, ``Model-order selection: {A} review of information
  criterion rules,'' \emph{IEEE Signal Processing Magazine}, vol.~21, no.~4,
  pp. 36--47, 2004.

\bibitem{mohr2023fast}
F.~Mohr and J.~N. van Rijn, ``Fast and informative model selection using
  learning curve cross-validation,'' \emph{IEEE Transactions on Pattern
  Analysis and Machine Intelligence}, vol.~45, no.~8, pp. 9669--9680, 2023.

\bibitem{kim1999has}
C.~J. Kim and C.~R. Nelson, ``Has the {US} economy become more stable? a
  {B}ayesian approach based on a {M}arkov-switching model of the business
  cycle,'' \emph{Review of Economics and Statistics}, vol.~81, no.~4, pp.
  608--616, 1999.

\bibitem{greenland1989modeling}
S.~Greenland, ``Modeling and variable selection in epidemiologic analysis.''
  \emph{American Journal of Public Health}, vol.~79, no.~3, pp. 340--349, 1989.

\bibitem{johnson2004model}
J.~B. Johnson and K.~S. Omland, ``Model selection in ecology and evolution,''
  \emph{Trends in Ecology \& Evolution}, vol.~19, no.~2, pp. 101--108, 2004.

\bibitem{owrang2018model}
A.~Owrang and M.~Jansson, ``A model selection criterion for high-dimensional
  linear regression,'' \emph{IEEE Transactions on Signal Processing}, vol.~66,
  no.~13, pp. 3436--3446, 2018.

\bibitem{fonti2017feature}
V.~Fonti and E.~Belitser, ``Feature selection using {lasso},'' \emph{VU
  Amsterdam research paper in business analytics}, vol.~30, pp. 1--25, 2017.

\bibitem{chen2023estimation}
X.~Chen, Z.~Feng, and H.~Peng, ``Estimation and order selection for
  multivariate exponential power mixture models,'' \emph{Journal of
  Multivariate Analysis}, vol. 195, p. 105140, 2023.

\bibitem{llorente2023marginal}
F.~Llorente, L.~Martino, D.~Delgado, and J.~Lopez-Santiago, ``Marginal
  likelihood computation for model selection and hypothesis testing: An
  extensive review,'' \emph{SIAM Review}, vol.~65, no.~1, pp. 3--58, 2023.

\bibitem{rivals2000statistical}
I.~Rivals and L.~Personnaz, ``A statistical procedure for determining the
  optimal number of hidden neurons of a neural model,'' in \emph{Second
  International Symposium on Neural Computation (NC’2000)}, 2000, pp. 23--26.

\bibitem{asadi2018signal}
H.~Asadi and B.~Seyfe, ``Signal enumeration in {G}aussian and non-{G}aussian
  noise using entropy estimation of eigenvalues,'' \emph{Digital Signal
  Processing}, vol.~78, pp. 163--174, 2018.

\bibitem{yi2022source}
H.~Yi, ``Source enumeration via {RMT} estimator with adaptive decision
  criterion based on linear shrinkage estimation of noise eigenvalues using
  relatively few samples,'' \emph{IET Signal Processing}, vol.~16, no.~1, pp.
  26--44, 2022.

\bibitem{gohain2023robust}
P.~B. Gohain and M.~Jansson, ``Robust information criterion for model selection
  in sparse high-dimensional linear regression models,'' \emph{IEEE
  Transactions on Signal Processing}, vol.~71, pp. 2251--2266, 2023.

\bibitem{rissanen1978modeling}
J.~Rissanen, ``Modeling by shortest data description,'' \emph{Automatica},
  vol.~14, no.~5, pp. 465--471, 1978.

\bibitem{vapnik1998statistical}
V.~Vapnik, \emph{Statistical Learning Theory}.\hskip 1em plus 0.5em minus
  0.4em\relax Wiley-Interscience, 1998.

\bibitem{akaike1974new}
H.~Akaike, ``A new look at the statistical model identification,'' \emph{IEEE
  Transactions on Automatic Control}, vol.~19, no.~6, pp. 716--723, 1974.

\bibitem{schwarz1978estimating}
G.~Schwarz, ``Estimating the dimension of a model,'' \emph{The Annals of
  Statistics}, vol.~6, no.~2, pp. 461--464, 1978.

\bibitem{kallummil2018signal}
S.~Kallummil and S.~Kalyani, ``Signal and noise statistics oblivious orthogonal
  matching pursuit,'' in \emph{International Conference on Machine
  Learning}.\hskip 1em plus 0.5em minus 0.4em\relax PMLR, 2018, pp. 2429--2438.

\bibitem{weisberg2005applied}
S.~Weisberg, \emph{Applied Linear Regression}.\hskip 1em plus 0.5em minus
  0.4em\relax John Wiley \& Sons, 2005.

\bibitem{efron2004least}
B.~Efron, T.~Hastie, I.~Johnstone, and R.~Tibshirani, ``Least angle
  regression,'' \emph{The Annals of Statistics}, vol.~32, no.~2, p. 407–499,
  2004.

\bibitem{vapnik1999nature}
V.~Vapnik, \emph{The Nature of Statistical Learning Theory}.\hskip 1em plus
  0.5em minus 0.4em\relax Springer Science \& Business Media, 1999.

\bibitem{mohri2018foundations}
M.~Mohri, A.~Rostamizadeh, and A.~Talwalkar, \emph{Foundations of Machine
  Learning}.\hskip 1em plus 0.5em minus 0.4em\relax MIT Press, 2018.

\bibitem{pesaran1978testing}
M.~H. Pesaran and A.~S. Deaton, ``Testing non-nested nonlinear regression
  models,'' \emph{Econometrica: Journal of the Econometric Society}, vol.~46,
  no.~3, pp. 677--694, 1978.

\bibitem{marcellino2008model}
M.~Marcellino and B.~Rossi, ``Model selection for nested and overlapping
  nonlinear, dynamic and possibly mis-specified models,'' \emph{Oxford Bulletin
  of Economics and Statistics}, vol.~70, pp. 867--893, 2008.

\bibitem{pei2022local}
Z.~Pei, D.~S. Lee, D.~Card, and A.~Weber, ``Local polynomial order in
  regression discontinuity designs,'' \emph{Journal of Business \& Economic
  Statistics}, vol.~40, no.~3, pp. 1259--1267, 2022.

\bibitem{moon2021ar}
J.~Moon, M.~B. Hossain, and K.~H. Chon, ``{A}{R} and {A}{R}{M}{A} model order
  selection for time-series modeling with imagenet classification,''
  \emph{Signal Processing}, vol. 183, p. 108026, 2021.

\bibitem{li2019matching}
F.~Li, C.~M. Triggs, B.~Dumitrescu, and C.~D. Giurc{\u{a}}neanu, ``The matching
  pursuit algorithm revisited: A variant for big data and new stopping rules,''
  \emph{Signal Processing}, vol. 155, pp. 170--181, 2019.

\bibitem{lu2012generalized}
Z.~Lu and A.~M. Zoubir, ``Generalized {B}ayesian information criterion for
  source enumeration in array processing,'' \emph{IEEE Transactions on Signal
  Processing}, vol.~61, no.~6, pp. 1470--1480, 2012.

\bibitem{barthelme2021machine}
A.~Barthelme and W.~Utschick, ``A machine learning approach to {DoA} estimation
  and model order selection for antenna arrays with subarray sampling,''
  \emph{IEEE Transactions on Signal Processing}, vol.~69, pp. 3075--3087, 2021.

\bibitem{wang1998blind}
X.~Wang and H.~V. Poor, ``Blind multiuser detection: A subspace approach,''
  \emph{IEEE Transactions on Information Theory}, vol.~44, no.~2, pp. 677--690,
  1998.

\bibitem{chapelle1999model}
O.~Chapelle and V.~Vapnik, ``Model selection for support vector machines,''
  \emph{Advances in Neural Information Processing Systems}, vol.~12, pp.
  230--236, 1999.

\bibitem{ephraim1995signal}
Y.~Ephraim and H.~L. Van~Trees, ``A signal subspace approach for speech
  enhancement,'' \emph{IEEE Transactions on Speech and Audio Processing},
  vol.~3, no.~4, pp. 251--266, 1995.

\bibitem{alquier2021user}
P.~Alquier, ``User-friendly introduction to {P}{A}{C}-{B}ayes bounds,''
  \emph{arXiv preprint arXiv:2110.11216}, 2021.

\bibitem{shawe1998structural}
J.~Shawe-Taylor, P.~L. Bartlett, R.~C. Williamson, and M.~Anthony, ``Structural
  risk minimization over data-dependent hierarchies,'' \emph{IEEE Transactions
  on Information Theory}, vol.~44, no.~5, pp. 1926--1940, 1998.

\bibitem{koltchinskii2001rademacher}
V.~Koltchinskii, ``Rademacher penalties and structural risk minimization,''
  \emph{IEEE Transactions on Information Theory}, vol.~47, no.~5, pp.
  1902--1914, 2001.

\bibitem{hajiani2023oracle}
M.~A. Hajiani and B.~Seyfe, ``From oracle generalization bound toward empirical
  inequality,'' \emph{Information Sciences}, p. 119131, 2023.

\bibitem{dempster2020rocket}
A.~Dempster, F.~Petitjean, and G.~I. Webb, ``{ROCKET}: {E}xceptionally fast and
  accurate time series classification using random convolutional kernels,''
  \emph{Data Mining and Knowledge Discovery}, vol.~34, no.~5, pp. 1454--1495,
  2020.

\bibitem{dau2019ucr}
H.~A. Dau, A.~Bagnall, K.~Kamgar, C.~C.~M. Yeh, Y.~Zhu, S.~Gharghabi, C.~A.
  Ratanamahatana, and E.~Keogh, ``The {UCR} time series archive,''
  \emph{IEEE/CAA Journal of Automatica Sinica}, vol.~6, no.~6, pp. 1293--1305,
  2019.

\bibitem{sipser1996introduction}
M.~Sipser, ``Introduction to the theory of computation,'' \emph{ACM Sigact
  News}, vol.~27, no.~1, pp. 27--29, 1996.

\bibitem{wainwright2019high}
M.~J. Wainwright, \emph{High-Dimensional Statistics: A Non-asymptotic
  Viewpoint}.\hskip 1em plus 0.5em minus 0.4em\relax Cambridge University
  Press, 2019, vol.~48.

\bibitem{tropp2006just}
J.~A. Tropp, ``Just relax: Convex programming methods for identifying sparse
  signals in noise,'' \emph{IEEE Transactions on Information Theory}, vol.~52,
  no.~3, pp. 1030--1051, 2006.

\bibitem{tibshirani1996regression}
R.~Tibshirani, ``Regression shrinkage and selection via the {LASSO},''
  \emph{Journal of the Royal Statistical Society Series B: Statistical
  Methodology}, vol.~58, no.~1, pp. 267--288, 1996.

\bibitem{dai2009subspace}
W.~Dai and O.~Milenkovic, ``Subspace pursuit for compressive sensing signal
  reconstruction,'' \emph{IEEE transactions on Information Theory}, vol.~55,
  no.~5, pp. 2230--2249, 2009.

\bibitem{chen2008extended}
J.~Chen and Z.~Chen, ``Extended bayesian information criteria for model
  selection with large model spaces,'' \emph{Biometrika}, vol.~95, no.~3, pp.
  759--771, 2008.

\bibitem{loeve1977elementary}
M.~Lo{\`e}ve, \emph{Elementary Probability Theory}.\hskip 1em plus 0.5em minus
  0.4em\relax Springer, 1977.

\bibitem{zhang2008sparsity}
C.-H. Zhang and J.~Huang, ``The sparsity and bias of the {LASSO} selection in
  high-dimensional linear regression,'' \emph{The Annals of Statistics},
  vol.~36, no.~4, p. 1567–1594, 2008.

\bibitem{dempster2021minirocket}
A.~Dempster, D.~F. Schmidt, and G.~I. Webb, ``Minirocket: A very fast (almost)
  deterministic transform for time series classification,'' in
  \emph{Proceedings of the 27th ACM SIGKDD Conference on Knowledge Discovery \&
  Data Mining}, 2021, pp. 248--257.

\bibitem{omidi2023reducing}
M.~Omidi, B.~Seyfe, and S.~Valaee, ``Reducing the computational complexity of
  learning with random convolutional features,'' in \emph{IEEE International
  Conference on Acoustics, Speech and Signal Processing (ICASSP 2023)}, 2023,
  pp. 1--5.

\bibitem{proakis2008digital}
J.~G. Proakis and M.~Saleh, \emph{Digital Communications}.\hskip 1em plus 0.5em
  minus 0.4em\relax McGraw-Hill, Higher Education, 2008.

\end{thebibliography}

	\newpage
     \appendices
\section{Proof of Theorem \ref{o1learnboundCOL}} \label{APPA}
To prove Theorem \ref{o1learnboundCOL}, we present two Theorems 1. A and 1. B. First, in the following theorem, we investigate the convergence of the minimum empirical risk to the minimum risk in a model.
\newtheorem*{theoremm1}{Theorem 1. A}

\begin{theoremm1}\label{Learno1THapp}
    For every $k\in\{1,2,...,L\}$, let $\mathcal{L}_k=\{l(.,\pmb{\theta}_k),\pmb{\theta}_k\in\mathcal{H}_k\}$, the set of loss functions corresponding to the $k$-th model in $\{\mathcal{M}_k\}_{k=1}^L$, be a Gilvenko-Cantelli functions class. Then, for every $k$, $|R_{emp}(S_n,\hat{\Ptheta}_k)-R(\Ptheta_k^*)|$ converges to zero in probability as $n\to\infty$, i.e., for every $\varepsilon>0$, we have
    \begin{flalign}
        \lim_{n\to \infty} \Pb\{ |R_{emp}(S_n,\hat{\Ptheta}_k)-R(\Ptheta_k^*)|>\varepsilon \}=0. 
    \end{flalign}
\end{theoremm1}

\begin{proof}
 Let $\textbf{Z}_1,\textbf{Z}_2,...,\textbf{Z}_n$ be $n$ i.i.d. random variables with the same distribution to $\textbf{Z}$. Since for every $k$, $\mathcal{L}_k$ is a Gilvenko-Cantelli class of functions, based on Definition \ref{GCdef}, for every $\varepsilon>0$, we have
     \begin{equation}
         \lim_{n\to\infty} \mathbb{P}\left\{ \sup_{\pmb{\theta}_k \in \mathcal{H}_k} \left| \frac{1}{n} \sum_{i=1}^{n} l(\textbf{Z}_i,\pmb{\theta}_k) -\mathbb{E}\{ l(\textbf{Z},\pmb{\theta}_k) \} \right|>\varepsilon  \right\}=0 .\label{eq15eq}
     \end{equation}
      Whereas \eqref{eq15eq} holds for supremum, it also holds for every $\pmb{\theta}_k \in \mathcal{H}_k$,  including $\hat{\pmb{\theta}}_k$ and $\pmb{\theta}^*_k$. Therefore, based on \eqref{riskEQ} and \eqref{empEQ}, for every $k$ and every $\varepsilon>0$, we can express \eqref{eq15eq} for any $\hat{\pmb{\theta}}_k$ and any $\pmb{\theta}^*_k$ as follows
    \begin{flalign}
         \lim_{n\to\infty} \mathbb{P}\left\{ \left| R_{emp}(S_n,\hat{\pmb{\theta}}_k) -R(\hat{\pmb{\theta}}_k)  \right|>\varepsilon  \right\}=0,\label{eq16eq}\\
         \lim_{n\to\infty}  \mathbb{P}\left\{ \left| R_{emp}(S_n,{\pmb{\theta}}_k^*) -R({\pmb{\theta}}_k^*)   \right|>\varepsilon  \right\}=0.\label{eq17eq}
      \end{flalign}
     Based on  \eqref{eq16eq} and \eqref{eq17eq}, for every $\varepsilon>0$  and $0 <\delta^{\prime}\leq 1/2$, there are  positive integers $N_1, N_2$, where for every $n \geq \max{(N_1,N_2)}$, we have 
     \begin{flalign}
         \mathbb{P}\left\{ \left| R_{emp}(S_n,\hat{\pmb{\theta}}_k) -R(\hat{\pmb{\theta}}_k)  \right|>\varepsilon \right\}<\delta^{\prime},\label{eq161eq}\\
            \mathbb{P}\left\{ \left| R_{emp}(S_n,{\pmb{\theta}}_k^*) -R({\pmb{\theta}}_k^*)   \right|>\varepsilon  \right\}<\delta^{\prime}.\label{eq171eq}
      \end{flalign}
      We can rewrite \eqref{eq161eq} and \eqref{eq171eq} as follows
       \begin{flalign}
         \mathbb{P}\left\{ \left| R_{emp}(S_n,\hat{\pmb{\theta}}_k) -R(\hat{\pmb{\theta}}_k)  \right|\leq \varepsilon \right\}\geq 1-\delta^{\prime},\label{eq162eq}\\
            \mathbb{P}\left\{ \left| R_{emp}(S_n,{\pmb{\theta}}_k^*) -R({\pmb{\theta}}_k^*)   \right|\leq \varepsilon \right\}\geq 1-\delta^{\prime}.\label{eq172eq}
      \end{flalign}
      Since the inequalities \eqref{eq162eq} and \eqref{eq172eq} hold for the absolute values of the left-hand side of these inequalities, the following also holds  
      \begin{flalign}
         \mathbb{P}\left\{  R(\hat{\pmb{\theta}}_k)-R_{emp}(S_n,\hat{\pmb{\theta}}_k)   \leq \varepsilon \right\}\geq 1-\delta^{\prime},\label{eq163eq}\\
            \mathbb{P}\left\{  R_{emp}(S_n,{\pmb{\theta}}_k^*) -R({\pmb{\theta}}_k^*)   \leq \varepsilon  \right\}\geq1-\delta^{\prime}.\label{eq173eq}
      \end{flalign}
      Based on \eqref{minriskparEQ} and \eqref{minempriskparEQ}, we use $R({\pmb{\theta}}_k^*)\leq R(\hat{\pmb{\theta}}_k) $ in \eqref{eq163eq} and  $R_{emp}(S_n,\hat{\pmb{\theta}}_k)\leq R_{emp}(S_n,{\pmb{\theta}}_k^*) $ in \eqref{eq173eq}. So, the following are obtained
      \begin{flalign}
         \mathbb{P}\left\{  R({\pmb{\theta}}_k^*)-R_{emp}(S_n,\hat{\pmb{\theta}}_k)   \leq \varepsilon \right\}\geq 1-\delta^{\prime},\label{eq164eq}\\
            \mathbb{P}\left\{  R_{emp}(S_n,\hat{\pmb{\theta}}_k)-R({\pmb{\theta}}_k^*)   \leq \varepsilon  \right\}\geq1-\delta^{\prime}.\label{eq174eq}
      \end{flalign}
      We can express \eqref{eq164eq} and \eqref{eq174eq} as follows
\begin{flalign}
         \mathbb{P}\left\{  R({\pmb{\theta}}_k^*)-R_{emp}(S_n,\hat{\pmb{\theta}}_k)  > \varepsilon \right\}<\delta^{\prime},\label{eq165eq}\\
            \mathbb{P}\left\{ R({\pmb{\theta}}_k^*)- R_{emp}(S_n,\hat{\pmb{\theta}}_k)   <- \varepsilon  \right\}<\delta^{\prime}.\label{eq175eq}
      \end{flalign}
      \sloppy Since $\mathbb{P}\{ R({\pmb{\theta}}_k^*)- R_{emp}(S_n,\hat{\pmb{\theta}}_k)   <- \varepsilon  \}+ \mathbb{P}\{ -\varepsilon \leq  R({\pmb{\theta}}_k^*)-R_{emp}(S_n,\hat{\pmb{\theta}}_k)  \leq \varepsilon \} + \mathbb{P}\{  R({\pmb{\theta}}_k^*)-R_{emp}(S_n,\hat{\pmb{\theta}}_k)  > \varepsilon \}   = 1$, we have
      \begin{flalign}
         \mathbb{P}\{& |  R({\pmb{\theta}}_k^*)-R_{emp}(S_n,\hat{\pmb{\theta}}_k) |  \leq \varepsilon \} =\mathbb{P}\left\{ -\varepsilon \leq  R({\pmb{\theta}}_k^*)-R_{emp}(S_n,\hat{\pmb{\theta}}_k)  \leq \varepsilon \right\} \nonumber \\ =&1-\mathbb{P}\left\{  R({\pmb{\theta}}_k^*)-R_{emp}(S_n,\hat{\pmb{\theta}}_k)  > \varepsilon \right\}-\mathbb{P}\left\{ R({\pmb{\theta}}_k^*)- R_{emp}(S_n,\hat{\pmb{\theta}}_k)     <- \varepsilon  \right\} \nonumber \\ \geq & 1-2\delta^{\prime} \label{eq1717eq}
      \end{flalign}
      where \eqref{eq1717eq} holds based on \eqref{eq165eq} and \eqref{eq175eq}. Now, let $\delta=2\delta^{\prime}$. Then, for every $\varepsilon>0$  and $0 <\delta \leq 1$, there are  positive integers $N_1, N_2$ where for every $n \geq \max{(N_1,N_2)}$, based on \eqref{eq1717eq}, the following holds
      \begin{flalign}
    &\mathbb{P}\left\{ | R_{emp}(S_n,\hat{\pmb{\theta}}_k)-R({\pmb{\theta}}_k^*)|  > \varepsilon \right\}< \delta\label{eq1617eq},
      \end{flalign}
      Therefore, based on  \eqref{eq1617eq}, we have
      \begin{flalign}
    \lim_{n\to\infty} \mathbb{P}\left\{ | R_{emp}(S_n,\hat{\pmb{\theta}}_k)-R({\pmb{\theta}}_k^*)|  > \varepsilon \right\} =0.
      \end{flalign}
\end{proof}

Using Theorem 1. \ref{Learno1THapp}, in the following theorem, we show that the difference of the minimum empirical risks of the two models converges to the difference of minimum risks of these models in probability.

\newtheorem*{theoremm10}{Theorem 1. B}
\begin{theoremm10}\label{DEERo1THapp}
    Consider two models, $\Mc_i$ and $\Mc_j$, where $\mathcal{L}_i=\{l(.,\Ptheta_i):\Ptheta_i \in \Hc_i\}$ and $\mathcal{L}_j=\{l(.,\Ptheta_j):\Ptheta_j \in \Hc_j\}$ are the set of loss functions corresponding to $\Mc_i$ and $\Mc_j$, respectively. Let $\mathcal{L}_i$ and $\mathcal{L}_j$ be the Glivenko-Cantelli classes of functions. Then, $|R_{emp}(S_n,\hat{\Ptheta}_i)-R_{emp}(S_n,\hat{\Ptheta}_j)-R(\Ptheta_i^*)+R(\Ptheta_j^*)|$ converges to zero in probability as $n\to\infty$, i.e., for every $\varepsilon>0$, we have
    \begin{flalign}
        \lim_{n\to \infty} \Pb\{ |R_{emp}(S_n,\hat{\Ptheta}_i)-R_{emp}(S_n,\hat{\Ptheta}_j)-R(\Ptheta_i^*)+R(\Ptheta_j^*)|>\varepsilon \}=0. 
    \end{flalign}
\end{theoremm10}
 \begin{proof}
     Using Theorem 1. \ref{Learno1THapp} for the models  $\Mc_i$ and  $\Mc_j$, for  every $\varepsilon^{\prime}>0$, the following holds
    \begin{flalign}
      \lim_{n\to\infty} \mathbb{P}\left\{ | R_{emp}(S_n,\hat{\pmb{\theta}}_i)-R({\pmb{\theta}}_i^*)|  > \varepsilon^{\prime} \right\} =0,\label{eq18eq}\\
      \lim_{n\to\infty} \mathbb{P}\left\{ | R_{emp}(S_n,\hat{\pmb{\theta}}_j)-R({\pmb{\theta}}_j^*)|  > \varepsilon^{\prime} \right\} =0.\label{eq19eq}
      \end{flalign}
      Based on \eqref{eq18eq} and \eqref{eq19eq}, for every $\varepsilon^{\prime}>0$ and $0<\delta\leq 1$, there are $N_1$ and $N_2$, where for every $n\geq \max{(N_1,N_2)}$, we have
      \begin{flalign}
       \mathbb{P}\left\{ | R_{emp}(S_n,\hat{\pmb{\theta}}_i)-R({\pmb{\theta}}_i^*)|  > \varepsilon^{\prime} \right\}<\delta,\label{eq181eq}\\
      \mathbb{P}\left\{ | R_{emp}(S_n,\hat{\pmb{\theta}}_j)-R({\pmb{\theta}}_j^*)|  > \varepsilon^{\prime} \right\} <\delta.\label{eq191eq}
      \end{flalign}
       By combining \eqref{eq181eq} and \eqref{eq191eq}, for every $\varepsilon^{\prime}>0$ and $0<\delta\leq 1$, there are $N_1$ and $N_2$, where for every $n\geq \max{(N_1,N_2)}$, we have
      \begin{flalign}
       \mathbb{P}\{ | R_{emp}(S_n,\hat{\pmb{\theta}}_i)-R({\pmb{\theta}}_i^*)|  + | R_{emp}(S_n,\hat{\pmb{\theta}}_j)-R({\pmb{\theta}}_j^*)|\ > 2\varepsilon^{\prime} \}< \delta.\label{eq1819eq}
      \end{flalign}
      We can rewrite \eqref{eq1819eq} as follows
       \begin{flalign}
       \mathbb{P}\{ | R_{emp}(S_n,\hat{\pmb{\theta}}_i)-R({\pmb{\theta}}_i^*)|  + | R_{emp}(S_n,\hat{\pmb{\theta}}_j)-R({\pmb{\theta}}_j^*)|  \leq  2\varepsilon^{\prime} \} \geq 1-\delta.\label{eq18191eq}
      \end{flalign}
      Using $| R_{emp}(S_n,\hat{\pmb{\theta}}_i)- R_{emp}(S_n,\hat{\pmb{\theta}}_j)-R({\pmb{\theta}}_i^*) +R({\pmb{\theta}}_j^*)| \leq | R_{emp}(S_n,\hat{\pmb{\theta}}_i)-R({\pmb{\theta}}_i^*)|  + | R_{emp}(S_n,\hat{\pmb{\theta}}_j)-R({\pmb{\theta}}_j^*)|$ in \eqref{eq18191eq}, we have
       \begin{flalign}
       \mathbb{P}\{ | R_{emp}(S_n,\hat{\pmb{\theta}}_i)- R_{emp}(S_n,\hat{\pmb{\theta}}_j)-R({\pmb{\theta}}_i^*)+R({\pmb{\theta}}_j^*)| \leq  2\varepsilon^{\prime} \} \geq 1-\delta.\label{eq18192eq}
      \end{flalign}
      Let $\varepsilon=2\varepsilon^{\prime}$, and based on \eqref{eq18192eq}, for every $\varepsilon>0$ and $0<\delta\leq 1$, there are $N_1$ and $N_2$, where for every $n\geq \max{(N_1,N_2)}$, the following hold
      \begin{flalign}
       \mathbb{P}\{ | R_{emp}(S_n,\hat{\pmb{\theta}}_i)- R_{emp}(S_n,\hat{\pmb{\theta}}_j)-R({\pmb{\theta}}_i^*)+R({\pmb{\theta}}_j^*)|  >  \varepsilon \} < \delta.\label{eq18193eq}
      \end{flalign}
      Therefore, based on \eqref{eq18193eq}, for every $\varepsilon>0$, we have
      \begin{flalign}
       \lim_{n\to \infty} \mathbb{P}\{ | R_{emp}(S_n,\hat{\pmb{\theta}}_i)- R_{emp}(S_n,\hat{\pmb{\theta}}_j)-R({\pmb{\theta}}_i^*)+ R({\pmb{\theta}}_j^*)|  >\varepsilon \} =0.\label{eq18194eq}
      \end{flalign}
 \end{proof}

Now, in the following theorem, we will show that a bound with $o(1)$ rate exists for $|R_{emp}(S_n,\hat{\Ptheta}_i)-R_{emp}(S_n,\hat{\Ptheta}_j)-R(\Ptheta_i^*)+R(\Ptheta_j^*)|$ in the Glivenko-Cantelli class of functions.
 
\newtheorem*{theoremm3}{Theorem 1}
\begin{theoremm3}
      Let $\mathcal{L}_i=\{l(.,\Ptheta_i):\Ptheta_i \in \Hc_i\}$ and $\mathcal{L}_j=\{l(.,\Ptheta_j):\Ptheta_j \in \Hc_j\}$ be the Glivenco-Cantelli classes of loss functions for the models $\Mc_i$ and $\Mc_j$, respectively. Then, there are a function $\gamma_{i,j}(n,\delta,S_n)$ with the order of $o(1)$ and a positive integer $N$, where for every  $n\geq N$ and every $0<\delta\leq 1$, the following holds
    \begin{flalign} 
        \mathbb{P}\{| R_{emp}(S_n,\hat{\pmb{\theta}}_i)- R_{emp}(S_n,\hat{\pmb{\theta}}_j)-R({\pmb{\theta}}_i^*)+R({\pmb{\theta}}_j^*)|  \leq \gamma_{i,j}(n,\delta,S_n)\}\geq 1-\delta.\label{O1boundeqAPP}
    \end{flalign}
\end{theoremm3}
\begin{proof}
Based on Theorem 1. B, for every $\varepsilon>0$, we have
\begin{flalign}
\lim_{n\to \infty} \Pb\{ | R_{emp}(S_n,\hat{\pmb{\theta}}_i)- R_{emp}(S_n,\hat{\pmb{\theta}}_j)-R({\pmb{\theta}}_i^*)+R({\pmb{\theta}}_j^*)| >\varepsilon\}=0.\label{eq162}
\end{flalign}
Therefore, for every $\varepsilon>0$ and $0<\delta \leq  1 $, there is a positive integer $N^{\prime}$, where for every $n\geq N^{\prime}$, the following holds 
\begin{equation}\label{eq1621}
 \Pb\{ | R_{emp}(S_n,\hat{\pmb{\theta}}_i)- R_{emp}(S_n,\hat{\pmb{\theta}}_j)-R({\pmb{\theta}}_i^*)+R({\pmb{\theta}}_j^*)| >\varepsilon\}<\delta.
\end{equation}
We can express \eqref{eq1621} as follows
\begin{flalign}
 \Pb\{  | R_{emp}(S_n,\hat{\pmb{\theta}}_i)- R_{emp}(S_n,\hat{\pmb{\theta}}_j)-R({\pmb{\theta}}_i^*)+R({\pmb{\theta}}_j^*)|  \leq \varepsilon\} \geq 1-\delta.\label{eq1622}
\end{flalign}
 \eqref{eq1622} holds for every $\varepsilon>0$, then, also for every $\varepsilon>0$, $0< \gamma_{i,j}(n,\delta, S_n)<\varepsilon $, and  $0<\delta \leq 1 $, there is a positive integer $N$ where for every $n\geq N$, the following holds 
\begin{flalign}
 \Pb\{| R_{emp}(S_n,\hat{\pmb{\theta}}_i)- R_{emp}(S_n,\hat{\pmb{\theta}}_j)-R({\pmb{\theta}}_i^*)+R({\pmb{\theta}}_j^*)| \leq  \gamma_{i,j}(n,\delta,S_n)\} \geq 1-\delta.\label{eq1623}
\end{flalign}
  Since for every $\varepsilon>0$, we can find a positive integer $N$ such that for every $n\geq N$, $0< \gamma_{i,j}(n,\delta, S_n)<\varepsilon $, then, $\gamma_{i,j}(n,\delta, S_n)$ is in the order of $o(1)$. Therefore, there are always a function $\gamma_{i,j}(n,\delta, S_n)$  with the order of $o(1)$ and a positive integer $N$ such that for every $n\geq N$ and $0<\delta \leq 1$, \eqref{O1boundeqAPP} holds. 
      \end{proof}

\section{Proof of Theorems \ref{LinearLowerboundTH} and \ref{consistency} } \label{APPB}

\newtheorem*{theoremm8}{Theorem 6}

 \newtheorem*{theoremm9}{Theorem 5}
\begin{theoremm9}
	Let $F_{U_k}(.,\zeta_{{U_k}})$ be the cumulative distribution function (CDF) of a non-central $\chi^2_1(\zeta_{{U_k}})$ random variable with non-centrality parameter $ \zeta_{U_k}=\frac{\| (\textbf{P}_{k}-\textbf{P}_{k-1})\mathbb{\textbf{X}^T \pmb{\theta}^*} \|_2^2}{\sigma^2}$, $F_{1}(.)$ be the  CDF function of a central $\chi^2_1$ random variable, and $F_{1}^{-1}(.)$ be the inverse CDF function of a central $\chi^2_1$ random variable. Then,  for every $k\in\{1,2,...,\bar{L}\}$, 
 \begin{itemize}
 \item{if $ \mathcal{S} \subseteq \Ic_{k-1}$, for every $0<\delta\leq 1$,  $n\geq1$, and constant $c>0$, we have
	\begin{flalign}
		\mathbb{P}\{R_{emp}(S_n,\hat{\pmb{\theta}}_{k-1})-R_{emp}(S_n,\hat{\pmb{\theta}}_k)\leq c\frac{\sigma^2}{n} F_1^{-1}(1-\delta) \}=F_1(cF_1^{-1}(1-\delta)),  \label{LinRegUpboundEQAPP}
	\end{flalign}}
 \item {and if $\Ic_{k-1} \subset \mathcal{S} $, for every $0<\delta\leq 1$, $n\geq 1$, and constant $c>0$, the following holds
	\begin{flalign}
		\mathbb{P}\{R_{emp}(S_n,\hat{\pmb{\theta}}_{k-1})-R_{emp}(S_n,\hat{\pmb{\theta}}_k)\geq c\frac{\sigma^2}{n} F_1^{-1}(1-\delta) \}=  1-F_{U_k}(cF_1^{-1}(1-\delta),\zeta_{U_k}).\label{LinRegLowboundEQAPP}
	\end{flalign}}
 \end{itemize}
\end{theoremm9}
\begin{proof}
We prove the first statement of this theorem. Note that  $\textbf{X}^T{\pmb{\theta}^*}\in span(\textbf{X}^{\Sc})$ and $\textbf{I}_n-\textbf{P}_{k-1}$ is the orthogonal projection on $span(\textbf{X}^{\Ic_{k-1}})^{\perp}$. Since $  \mathcal{S} \subseteq \Ic_{k-1}$,   $(\textbf{I}_n-\textbf{P}_{k-1})\textbf{X}^T{\pmb{\theta}^*}=\textbf{0}_{n\times 1}$ \cite{kallummil2018signal}. In addition, since  $\Ic_{k-1}\subseteq\Ic_{k}$ and $ \mathcal{S} \subseteq \Ic_{k-1}$, we have $  \mathcal{S} \subseteq \Ic_{k}$ and  $(\textbf{I}_n-\textbf{P}_{k})\textbf{X}^T{\pmb{\theta}^*}=\textbf{0}_{n\times 1}$ \cite{kallummil2018signal}. Based on these facts, for every $k\in\{1,2,...,\bar{L}\}$ that $ \mathcal{S} \subseteq \Ic_{k-1}$,  we have
    \begin{flalign}
        (\textbf{P}_{k}-\textbf{P}_{k-1})\textbf{X}^T{\pmb{\theta}^*}&=(\textbf{I}_n-\textbf{P}_{k-1})\textbf{X}^T{\pmb{\theta}^*} -(\textbf{I}_n-\textbf{P}_{k})\textbf{X}^T{\pmb{\theta}^*}\nonumber \\&=0.\label{tempappb33eq}
    \end{flalign}
Therefore, using $\textbf{y}=\textbf{X}^T\Ptheta^*+\pmb{\varepsilon}$,  the SEER will be obtained as follows
	\begin{flalign}
		R_{emp}(S_n,\hat{\pmb{\theta}}_{k-1})-R_{emp}(S_n,\hat{\pmb{\theta}}_k)&=\frac{ \|(\textbf{P}_{k}-\textbf{P}_{k-1}) \textbf{y} \|_2^2 }{n}\nonumber \\ &=\frac{ \|(\textbf{P}_{k}-\textbf{P}_{k-1}) \textbf{X}^T\Ptheta^* +(\textbf{P}_{k}-\textbf{P}_{k-1}) {\pmb{\varepsilon}} \|_2^2 }{n}\nonumber \\ &=\frac{ \|(\textbf{P}_{k}-\textbf{P}_{k-1}) {\pmb{\varepsilon}} \|_2^2 }{n},\label{linempexcessEQ}
	\end{flalign}
where \eqref{linempexcessEQ} holds based on  \eqref{tempappb33eq}. 	In Appendix B of \cite{kallummil2018signal}, it is shown that for zero mean normal random vector $\pmb{\varepsilon}$ with covariance matrix $\textbf{I}_n$, we have $\frac{ \|(\textbf{P}_{k}-\textbf{P}_{k-1}) {\pmb{\varepsilon}} \|_2^2 }{\sigma^2} \sim \chi^2_1$. Based on \eqref{linempexcessEQ}, $R_{emp}(S_n,\hat{\pmb{\theta}}_{k-1})-R_{emp}(S_n,\hat{\pmb{\theta}}_k)=\frac{\sigma^2}{n} \frac{ \|(\textbf{P}_{k}-\textbf{P}_{k-1}) {\pmb{\varepsilon}} \|_2^2 }{\sigma^2}$. Then, using the CDF of the $\chi^2_1$ distribution, for every $n\geq 1$ and $u\geq0$, we have
	\begin{flalign}
		\mathbb{P}\{R_{emp}(S_n,\hat{\pmb{\theta}}_{k-1})-R_{emp}(S_n,\hat{\pmb{\theta}}_k)\leq \frac{\sigma^2}{n} u \}&=\mathbb{P}\Big\{\frac{\sigma^2}{n}\frac{ \|(\textbf{P}_{k}-\textbf{P}_{k-1}) {\pmb{\varepsilon}} \|_2^2 }{\sigma^2}\leq \frac{\sigma^2}{n}u \Big\}\nonumber \\ &=\mathbb{P}\Big\{\frac{ \|(\textbf{P}_{k}-\textbf{P}_{k-1}) {\pmb{\varepsilon}} \|_2^2 }{\sigma^2}\leq u \Big\}\nonumber \\ &=  F_1(u), \label{linempexcessboundEQ}
	\end{flalign}
	where $F_1(.)$ is the CDF of a central Chi-square distribution with one degree of freedom. For every $0<\delta\leq1$, we let  $F_1(u/c)=1-\delta$. Then, using $u=cF_1^{-1}(1-\delta)$, for every $0<\delta\leq1$ and $n\geq 1$, we can rewrite \eqref{linempexcessboundEQ} as follows
	\begin{flalign}
		\mathbb{P}\{R_{emp}(S_n,\hat{\pmb{\theta}}_{k-1})-R_{emp}(S_n,\hat{\pmb{\theta}}_k)\leq c\frac{\sigma^2}{n} F_1^{-1}(1-\delta) \}= F_1(cF_1^{-1}(1-\delta)),\label{linempexcessboundEQ1}
	\end{flalign}
	where $F_1^{-1}$ is the inverse of the cumulative distribution function of a central  $\chi_1^2$ random variable. This proves the first statement.

Now, we prove the second statement of the theorem. Using $U_k=\frac{\|(\textbf{P}_k-\textbf{P}_{k-1})\textbf{y}\|_2^2}{\sigma^2}$ and based on \eqref{linREMPEQ}, we have   $R_{emp}(S_n,\hat{\pmb{\theta}}_{k-1})-R_{emp}(S_n,\hat{\pmb{\theta}}_k)=\frac{\sigma^2}{n}U_k$. Then, for every $u\geq0$ and $n\geq 1$, the following holds
	\begin{flalign}
		\mathbb{P}\{R_{emp}(S_n,\hat{\pmb{\theta}}_{k-1})-R_{emp}(S_n,\hat{\pmb{\theta}}_k)\geq \frac{\sigma^2}{n}u\} =1-&\mathbb{P}\{R_{emp}(S_n,\hat{\pmb{\theta}}_{k-1})-R_{emp}(S_n,\hat{\pmb{\theta}}_k)\leq  \frac{\sigma^2}{n} u\}\nonumber\\=1-&\mathbb{P}\{\frac{\sigma^2}{n}U_k\leq  \frac{\sigma^2}{n}u\}\nonumber\\=1-&\mathbb{P}\{U_k\leq u\}\nonumber\\=1-&F_{U_k}(u,\zeta_{U_k}),\label{upperboundulinEQ}
	\end{flalign}
 where \eqref{upperboundulinEQ} holds since $U_k$ has a non-central Chi-square distribution with one degree of freedom and non-centrality parameter $\zeta_{U_k}=\frac{\|(\textbf{P}_k-\textbf{P}_{k-1})\textbf{X}^T\Ptheta^*\|}{\sigma^2}$ and $F_{U_k}(.,\space\zeta_{U_k})$ is its CDF function. Now, for every $0<\delta\leq 1$, let $u=c F_1^{-1}(1-\delta)$. Then, for every $0<\delta\leq 1$ and $n\geq 1$, we can rewrite \eqref{upperboundulinEQ}, as follows
	\begin{flalign}
		\mathbb{P}\{R_{emp}(S_n,\hat{\pmb{\theta}}_{k-1})-R_{emp}(S_n,\hat{\pmb{\theta}}_k)\geq c\frac{\sigma^2}{n}F_1^{-1}(1-\delta))=1-F_{U_k}(cF_1^{-1}(1-\delta),\zeta_{U_k}\}.\label{upperbounddeltalinEQ}
	\end{flalign}
\end{proof}

\newtheorem*{theoremm11}{Theorem 6}
\begin{theoremm11} 
Let  $F_{1}^{-1}(.)$ be the inverse CDF function of a central $\chi^2_1$ random variable and assume that sparse Riesz condition \eqref{RieszASS} holds. Also, for a constant  $c_1>0$, let $\delta(n)=c_1(\sigma^2/n)$. Then, for every constants $c,c_1>0$ and every $k\in\{1,2,...,L\}$,  
 \begin{enumerate}
 \item if  $ \mathcal{S} \subseteq \Ic_{k-1}$,  we have 
	\begin{flalign}
 \lim_{n\to \infty} \mathbb{P}\{R_{emp}(S_n,\hat{\pmb{\theta}}_{k-1})-R_{emp}(S_n,\hat{\pmb{\theta}}_k)\ \leq \frac{\sigma^2}{n}cF_1^{-1}(1-c_1(\sigma^2/n))\}=1,
 \end{flalign}
 \item and if  $\Ic_{k-1} \subset \mathcal{S} $, we have
 \begin{flalign}
		\lim_{n\to \infty} \mathbb{P}\{R_{emp}(S_n,\hat{\pmb{\theta}}_{k-1})-R_{emp}(S_n,\hat{\pmb{\theta}}_k)\geq \frac{\sigma^2}{n}cF_1^{-1}(1-c_1(\sigma^2/n))\}=1.
	\end{flalign}
  \end{enumerate}
\end{theoremm11}
 
 \begin{proof}
 We prove the first statement of the theorem. Based on Theorem \ref{LinearLowerboundTH}, if $\Sc\subseteq \Ic_{k-1} $, for $\delta(n)=c_1(\sigma^2/n))$, we have
 \begin{flalign}
  \mathbb{P}\{R_{emp}(S_n,\hat{\pmb{\theta}}_{k-1})-R_{emp}(S_n,\hat{\pmb{\theta}}_k)\leq \frac{\sigma^2}{n}cF_1^{-1}(1-c_1(\sigma^2/n))\} = F_1(cF_1^{-1}(1-c_1(\sigma^2/n))).
 \end{flalign}
 Now, we will show that $lim_{n \to \infty} F_1(cF_1^{-1}(1-c_1(\sigma^2/n)))=1$.
\begin{flalign}
    lim_{n \to \infty} F_1(cF_1^{-1}(1-c_1(\sigma^2/n)))&=F_1(cF_1^{-1}(1-c_1 lim_{n \to \infty}(\sigma^2/n)))\label{63EQ1}\\=&F_1(cF_1^{-1}(1))=1,\label{63EQ2}
\end{flalign}
where \eqref{63EQ1} holds since $F_1$ and $F_1^{-1}$ are continuous functions. Then, the first statement of this theorem is proved.

Now, we prove the second statement of the theorem. Based on Theorem \ref{LinearLowerboundTH}, if $\Ic_{k-1}\subset \Sc$, for $\delta(n)=c_1(\sigma^2/n)$ we have
 \begin{flalign}
  \mathbb{P}\{R_{emp}(S_n,\hat{\pmb{\theta}}_{k-1})&-R_{emp}(S_n,\hat{\pmb{\theta}}_k)\geq \frac{\sigma^2}{n}cF_1^{-1}(1-c_1(\sigma^2/n))\}\nonumber \\ =& 1-F_{U_k}(cF_1^{-1}(1-c_1(\sigma^2/n)),\zeta_{U_k}).
 \end{flalign}

Now, we will show that $lim_{n \to \infty} F_{U_k}(cF_1^{-1}(1-c_1(\sigma^2/n)),\zeta_{U_k})=0$. $F_1(.)$ and $F_{U_k}(.,\zeta_{U_k})$ are the CDF of  central $\chi_1^2$  and non-central $\chi_1^2(\zeta_{U_k})$ random variables, respectively. Therefore, for every zero-mean Gaussian random variable $A$ with unit variance and Gaussian random variable $A_k$ with mean $\sqrt{\zeta_{U_k}}$ and unit variance, $A^2$ and  $A_k^2$ have CDF of $F_1(.)$ and $F_{U_k}(.,\zeta_{U_k})$, respectively. Using $A^2$ and  $A_k^2$,  for every real number $u\geq0$, we can rewrite  $F_1(u)$ and $F_{U_k}(u,\zeta_{U_k})$ as follows
	\begin{flalign}
		F_1(u)&=\mathbb{P}\{A^2\leq u\}\nonumber\\&=\mathbb{P}\{-\sqrt{u} \leq A\leq \sqrt{u}\}\nonumber\\&=2\Phi(\sqrt{u})-1, \label{PhitoF1EQ}\\
		F_{U_k}(u,\zeta_{U_k})&=\mathbb{P}\{A_k^2\leq u\}\nonumber\\&=\mathbb{P}\{-\sqrt{u} \leq A_k \leq \sqrt{u}\}\nonumber \\&=\mathbb{P}\{-\sqrt{u} \leq A+\sqrt{\zeta_{U_k}}\leq \sqrt{u}\}\nonumber\\&=\mathbb{P}\{ -\sqrt{u}-\sqrt{\zeta_{U_k}} \leq A\leq \sqrt{u}-\sqrt{\zeta_{U_k}} \} \nonumber\\&=\Phi(\sqrt{u}-\sqrt{\zeta_{U_k}})-\Phi(-\sqrt{u}-\sqrt{\zeta_U}), \label{PhitoFUeq}
	\end{flalign}
where $\Phi(.)$ is the CDF of a zero-mean normal random variable with the unit variance. Also, based on \eqref{PhitoF1EQ}, for every $0\leq s\leq 1$, let $F_1(u)=2\Phi(\sqrt{u})-1=s$. Then,  for every $s$, we have 
\begin{equation}\label{invFeq}
    F^{-1}_1(s)=\Phi^{-2}(\frac{s+1}{2}).
\end{equation}
Now, using  \eqref{PhitoFUeq} and \eqref{invFeq}, we can rewrite $F_{U_k}(cF_1^{-1}(1-c_1(\sigma^2/n)),\zeta_{U_k})$ as follows
\begin{flalign}
	&F_{U_k}(cF_1^{-1}(1-c_1(\sigma^2/n)),\zeta_{U_k})\nonumber\\&=\Phi(\sqrt{c}F_1^{-1/2}(1-c_1(\frac{\sigma^2}{n}))-\sqrt{\zeta_{U_k}}) -\Phi(-\sqrt{c}F_1^{-1/2}(1-c_1(\frac{\sigma^2}{n}))-\sqrt{\zeta_{U_k}})\nonumber\\&=\Phi(\sqrt{c}\Phi^{-1}(1-(c_1/{2})({\sigma^2}/{n})) -\sqrt{\zeta_{U_k}})-\Phi(-\sqrt{c}\Phi^{-1}(1-(c_1/{2})({\sigma^2}/{n})) -\sqrt{\zeta_{U_k}}).\label{probu0EQ}
\end{flalign} 
For $n\to \infty$, we calculate the first and the second terms of \eqref{probu0EQ}, respectively.  First,  for $n\to \infty$, we calculate the first term of \eqref{probu0EQ}. Before that, we obtain an upper bound on  $\sqrt{c}\Phi^{-1}(1-(c_1/2)(\sigma^2/n)) -\sqrt{\zeta_{U_k}}$, and then using this upper bound, we calculate $lim_{{n}\to \infty} \Phi(\sqrt{c}\Phi^{-1}(1-(c_1/2)(\sigma^2/n)) -\sqrt{\zeta_{U_k}})$. Based on Chernoff bound, for every $t> 0$, we have $Q(t)\leq exp{(-t^2 /2)}$, where $Q(t)= 1-\Phi(t)$ is called $Q$-function \cite{proakis2008digital}. Then, for every $t>0$, we have  $\Phi(t)\geq 1-exp{(-t^2 /2)}$. Since $\Phi^{-1}(.)$ is a monotone-increasing function, for every $t>0$, we can rewrite this inequality as
\begin{equation} \label{temptemp1eq}
     \Phi^{-1}(1-exp{(-t^2 /2)})\leq t.
\end{equation} 
By changing variable  $t=\sqrt{2\ln{(n/\sigma^2)}+2\ln{(2/c_1)}}$, we have $1-exp{(-t^2 /2)}=1-(c_1/2)(\sigma^2/n)$. Therefore, for every $n> \frac{c_1}{2}\sigma^2$, we can express \eqref{temptemp1eq} as follows
\begin{flalign}
\Phi^{-1}(1-(c_1/2)(\sigma^2/n)) \leq \sqrt{2\ln{(n/\sigma^2)}+2\ln{(2/c_1)}}.
\end{flalign}
Then, by multiplying $\sqrt{c}>0$ and adding $-\sqrt{\zeta_{U_k}}$ to both sides of the latest inequality, we have
\begin{flalign}
 \sqrt{c} \Phi^{-1}(1-(c_1/2)(\sigma^2/n)) -\sqrt{\zeta_{U_k}} \leq \sqrt{2c\ln{(n/\sigma^2)}+2c\ln{(2/c_1)}} -\sqrt{\zeta_{U_k}}.\label{tempBB1}
\end{flalign}
Now, using  inequality \eqref{tempBB1}, $\lim_{{n}\to \infty} \Phi(\sqrt{c}\Phi^{-1}((1-(c_1/2)(\sigma^2/n)) -\sqrt{\zeta_{U_k}})$ is obtained as follows 
\begin{flalign}
	&lim_{{n}\to \infty} \Phi(\sqrt{c}\Phi^{-1}(1-(c_1/2)(\sigma^2/n)) -\sqrt{\zeta_{U_k}})\nonumber \\&\leq  	lim_{{n}\to \infty} \Phi(\sqrt{2c\ln{(n/\sigma^2)}+2c\ln{(2/c_1)}} -\sqrt{\zeta_{U_k}})\label{TempBB3} \\&=\Phi(lim_{{n}\to \infty} [\sqrt{2c\ln{(n/\sigma^2)}+2c\ln{(2/c_1)}} -\sqrt{\zeta_{U_k}}])\nonumber\\& =0, \label{OrderdiffEQ}
\end{flalign}
where \eqref{TempBB3} holds since $\Phi$ is a monotone-increasing function. Note that in lemma 4 of \cite{gohain2023robust}, it is shown that based on spars Riesz condition,  for every $k\in\{1,2,...,\bar{L}\}$ that $\Sc\not\subset \Ic_{k-1}$, $\frac{(\Ptheta^*)^T \textbf{X}\big(\textbf{I}_n-\textbf{P}_{k-1}\big) \textbf{X}^T\Ptheta^*}{n}=\mathcal{O}(1)$. Therefore, the following holds
\begin{flalign}
   \frac{\| (\textbf{P}_{k}-\textbf{P}_{k-1})\textbf{X}^T\Ptheta^* \|_2^2}{n}&=\frac{(\Ptheta^*)^T \textbf{X}(\textbf{P}_k-\textbf{P}_{k-1}) \textbf{X}^T \Ptheta^{*} }{n}\label{projecttemp1eq}\\ &   =\frac{(\Ptheta^*)^T \textbf{X}(\textbf{P}_k-\textbf{P}_{k-1}+\textbf{I}_n -\textbf{I}_n) \textbf{X}^T \Ptheta^{*} }{n}\nonumber\\ & =\frac{(\Ptheta^*)^T \textbf{X}(\textbf{I}_n-\textbf{P}_{k-1}) \textbf{X}^T \Ptheta^{*} }{n}- \frac{(\Ptheta^*)^T \textbf{X}(\textbf{I}_n-\textbf{P}_{k}) \textbf{X}^T \Ptheta^{*} }{n}\nonumber\\ &  =\frac{\|(\textbf{I}_n-\textbf{P}_{k-1}) \textbf{X}^T \Ptheta^{*} \|_2^2}{n}-\frac{\|(\textbf{I}_n-\textbf{P}_{k}) \textbf{X}^T \Ptheta^{*} \|_2^2 }{n}\nonumber\\ & =\mathcal{O}(1)\label{O1Zetakeq},
\end{flalign}
where \eqref{projecttemp1eq} holds since $(\textbf{P}_k-\textbf{P}_{k-1})$ is orthogonal projection matrix \cite{kallummil2018signal}  that is an idempotent matrix. Therefore, based on \eqref{O1Zetakeq}, $\frac{\| (\textbf{P}_{k}-\textbf{P}_{k-1})\mathbb{\textbf{X}^T\pmb{\theta}^*} \|_2^2}{n}=\mathcal{O}(1)$,   $\zeta_{U_k}=\frac{n}{\sigma^2}\frac{\| (\textbf{P}_{k}-\textbf{P}_{k-1})\mathbb{\textbf{X}^T\pmb{\theta}^*} \|_2^2}{n}=\mathcal{O}(\frac{n}{\sigma^2})$, and $\sqrt{\zeta_{U_k}}=\mathcal{O}(\sqrt{n/\sigma^2})$. On the other hand, $\sqrt{2c\ln{(n/\sigma^2)}+2c\ln{(2/c_1)}}=\mathcal{O}(\sqrt{ln(n/\sigma^2)})$.  Therefore,  in \eqref{OrderdiffEQ}, $lim_{{n}\to \infty}  (\sqrt{2c\ln{(n/\sigma^2)}+2c\ln{(2/c_1)}}-\sqrt{\zeta_{U_k}})=-\infty$. Since $\sqrt{\zeta_{U_k}}$ goes to infinity with a faster rate than $(\sqrt{2c\ln{(n/\sigma^2)}+2c\ln{(2/c_1)}})$. 

Now, for $n\to \infty$, we calculate the second term of \eqref{probu0EQ}, $\Phi(-\sqrt{c}\Phi^{-1}(1-(c_1/{2})({\sigma^2}/{n})) -\sqrt{\zeta_{U_k}})$, as follows
\begin{flalign}
	&lim_{{n}\to \infty} \Phi(-\sqrt{c}\Phi^{-1}(1-(c_1/2)(\sigma^2/n)) -\sqrt{\zeta_{U_k}})\nonumber \\&=  \Phi(-\sqrt{c}\Phi^{-1}\big(lim_{{n}\to \infty} (1-(c_1/2)(\sigma^2/n))\big) -lim_{{n}\to \infty} \sqrt{\zeta_{U_k}}) \nonumber \\&=0,\label{limzero1EQ}
\end{flalign}
 where, \eqref{limzero1EQ} holds since $lim_{{n}\to \infty} (1-(c_1/2)(\sigma^2/n))=1$, $\zeta_{U_k}=\mathcal{O}(n/\sigma^2)$, and for every constant $\sqrt{c}>0$, $\sqrt{c}\Phi^{-1}(1) =lim_{{n}\to \infty} \sqrt{\zeta_{U_k}}=\infty$. 

Therefore, using \eqref{OrderdiffEQ} and \eqref{limzero1EQ} in \eqref{probu0EQ},  
 we have $\lim_{{n}\to \infty} F_{U_k}(cF_1^{-1}(1-c_1(\frac{\sigma^2}{n})),\zeta_{U_k})=0$. Consequently,  $\lim_{{n}\to \infty} \Big[1-F_{U_k}(cF_1^{-1}(1-c_1(\frac{\sigma^2}{n})),\zeta_{U_k})\Big]=1$, and the second statement of the theorem will be proved.
	\end{proof}

\end{document}